\documentclass{article} 
\usepackage{iclr2025_conference,times}

\usepackage{hyperref}
\usepackage{url}

\usepackage{wrapfig}
\usepackage{subfig}
\usepackage{graphicx}
\usepackage{bm}
\usepackage{floatrow}
\floatsetup[table]{capposition=top}
\usepackage{amsmath}
\usepackage{amsthm}
\usepackage{amsfonts}
\usepackage{algorithm}
\usepackage{algpseudocode}
\usepackage{multirow}
\usepackage{placeins}
\usepackage{booktabs} 
\newcommand{\rx}[1]{~\ref{#1}}
\newcommand{\gn}{\bm{g}_\mathcal{N}}
\newcommand{\gb}{\bm{g}_\mathcal{B}}
\newcommand{\gi}{\bm{g}_\mathcal{I}}

\newcommand{\cx}[1]{~\citep{#1}}

\newcommand{\etcx}[1]{\citet{#1}}
\definecolor{f1}{rgb}{0.23529411764,0.2823529411764706,0.4196078431372549}
\definecolor{time}{rgb}{0.95686274509,0.313725490196,0.313725490196}
\definecolor{highlight}{rgb}{0.3,0.56,0.0}
\newcommand{\hl}[1]{{\color{highlight} #1}}
\newcommand{\hlb}[1]{\textbf{\hl{#1}}}
\newtheorem{theorem}{Theorem}

\definecolor{mpink}{rgb}{0.9,0.3,0.0} 
\definecolor{dgreen}{rgb}{0.1,0.8,0.0}
\definecolor{c_update}{rgb}{0.7,0.2,0.0}

\title{ConFIG: Towards Conflict-free Training of Physics Informed Neural Networks}



\author{$\text{ }$ Qiang Liu\textsuperscript{1}, Mengyu Chu\textsuperscript{2} \& Nils Thuerey\textsuperscript{1} \\
 \begin{tabular}[t]{l}
 School of Computation, Information and Technology  \textsuperscript{1}\\
 Technical University of Munich\\
  Garching, DE 85748\\
  \texttt{\{qiang7.liu, nils.thuerey\}@tum.de} 
  \end{tabular}
  \begin{tabular}[t]{l}
  SKL of General AI \textsuperscript{2}\\
  Peking University \\
  Beijing, CN 100871 \\
  \texttt{mchu@pku.edu.cn}
  \end{tabular}
}

%

\iclrfinalcopy 
\begin{document}

\maketitle

\begin{abstract}
The loss functions of many learning problems contain multiple additive terms that can disagree and yield conflicting update directions. For Physics-Informed Neural Networks (PINNs), loss terms on initial/boundary conditions and physics equations are particularly interesting as they are well-established as highly difficult tasks. To improve learning the challenging multi-objective task posed by PINNs, we propose the ConFIG method, which provides conflict-free updates by ensuring a positive dot product between the final update and each loss-specific gradient. It also maintains consistent optimization rates for all loss terms and dynamically adjusts gradient magnitudes based on conflict levels. We additionally leverage momentum to accelerate optimizations by alternating the back-propagation of different loss terms. We provide a mathematical proof showing the convergence of the ConFIG method, and it is evaluated across a range of challenging PINN scenarios. ConFIG consistently shows superior performance and runtime compared to baseline methods. We also test the proposed method in a classic multi-task benchmark, where the ConFIG method likewise exhibits a highly promising performance. Source code is available at \url{https://tum-pbs.github.io/ConFIG}
\end{abstract}

\section{Introduction}
Efficiently solving partial differential equations (PDEs) is crucial for various scientific fields such as fluid dynamics, electromagnetics, and financial mathematics.  However, the nonlinear and high-dimensional PDEs often present significant challenges to the stability and convergence of traditional numerical methods. 
With the advent of deep learning, there is a strongly growing interest in using this technology to solve
PDEs\cx{Han2018,Christian2023}. 
In this context, Physics Informed Neural Networks (PINNs)\cx{Raissi2019,cuomo2022} 
leverage networks as a continuous and differentiable ansatz for the underlying physics. 

PINNs use auto-differentiation of coordinate-based neural networks, i.e., implicit neural representation (INR),  to approximate PDE derivatives. The residuals of the PDEs, along with boundary and initial conditions, are treated as loss terms and penalized during training to achieve a physically plausible solution. Despite their widespread use, training PINNs is a well-recognized challenge\cx{cuomo2022,Lino2023,Wang2021,krishnapriyan2021,Wang2022} due to several possible factors like unbalanced back-propagated gradients from numerical stiffness\cx{Wang2021}, different convergence rates among loss terms\cx{Wang2022}, PDE-based soft constraints\cx{krishnapriyan2021}, poor initialization\cx{Wong2022sin}, and suboptimal sampling strategies\cx{Daw2023}. Traditional methods to improve PINN training typically involve adjusting the weights for PDE residuals and loss terms for initial/boundary conditions\cx{liu2021,McClenny2023,son2023}. However, although these methods claim to have better solution accuracy, there is currently no consensus on the optimal weighting strategy.

Meanwhile, methods manipulating the gradient of each loss term have become popular in Multi-Task Learning (MTL) and Continual Learning (CL)\cx{Riemer2019,Mehrdad2020,Yu2020} to address \textit{conflicts} between loss-specific gradients that induce negative transfer\cx{Long2017} and catastrophic forgetting\cx{James2017}. 
Unlike weighting strategies that adjust the final update direction solely by modifying each loss-specific gradient's magnitude, 
these methods take greater flexibility and often alter the directions of loss-specific gradients.
We notice that similar conflicts between gradients also arise in the training of PINNs. First, the gradient magnitude of PDE residuals typically surpasses that of initial/boundary conditions\cx{Wang2021}, causing the final update gradient to lean heavily towards the residual term. Additionally, PDE residuals often have many local minima due to the infinite number of PDE solutions  
without prescribed initial/boundary conditions\cx{Daw2023}. Consequently, when the optimization process approaches local minima of the PDE residual, the combined update conflicts with the gradient of initial/boundary conditions, severely impeding learning progress. 

\begin{wrapfigure}{o}{0.55\textwidth}
	\centering
	\vspace{-5pt}
	\includegraphics[width=0.99\textwidth]{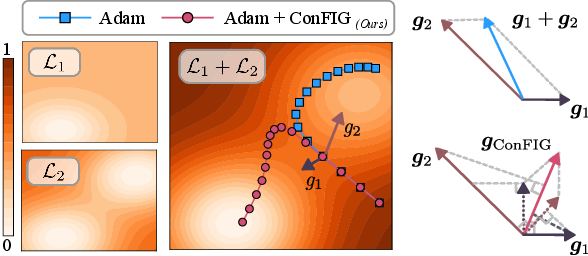}
	\vspace*{-0.2mm}
	\vspace{-5pt}
	\caption{Visualization of toy example showing the conflict between different losses during optimization.}
	\label{fig:sketch}
\end{wrapfigure}
To illustrate gradient conflicts in PINNs, we show a toy case in Fig.\rx{fig:sketch} where $\mathcal{L}_1$ has smaller gradients and $\mathcal{L}_2$ has larger gradients but multiple minima. Training with $\mathcal{L}_1 + \lambda \mathcal{L}_2$ using suboptimal $\lambda$ values (e.g., $\lambda=1$) leads to gradient conflicts and the convergence to a local minima of $\mathcal{L}_2$. 

In the following, we propose the  \textbf{Con}flict-\textbf{F}ree \textbf{I}nverse \textbf{G}radients (ConFIG) method to mitigate the conflicts of loss-specific gradients during PINNs training. Our approach provides an update gradient $\bm{g}_{\text{ConFIG}}$ that reduces all loss terms to prevent the optimization from being stuck in the local minima of a specific loss term. With the ConFIG method, the optimization in Fig.\rx{fig:sketch} converges to the shared minimum for both losses. Our method is provably convergent, and characterized by the following properties:

\begin{itemize} 
	\setlength\itemsep{-1pt} 
	\item The final update direction does not conflict with any loss-specific gradients. 
	\item The projection length of the final gradient on each loss-specific gradient is uniform, ensuring that all loss terms are optimized at the same rate.
	\item The length of the final gradient is adaptively scaled based on the degree of conflict between loss-specific gradients. This prevents the optimization from stalling in the local minima of any specific loss term in high-conflict scenarios.
\end{itemize}
In addition, we introduce a momentum-based approach to expedite optimization. By leveraging momentum, we eliminate the need to compute all gradients via backpropagation at each training iteration. In contrast, we evaluate alternating loss-specific gradients, which significantly decreases the computational cost and leads to more accurate solutions within a given computational budget. 

\section{Related Work}

\paragraph{Weighting Strategies for PINNs' Training} 
Some intuitive weighting strategies come from the penalty method of constrained optimization problems, where higher weights are assigned to less-optimized losses\cx{liu2021,son2023}. \etcx{McClenny2023} extended this idea by directly setting weights to sample points with soft attention masks. On the contrary, \etcx{Rafael2021} advocated equalizing the reduction rates of all loss terms as it guides training toward Pareto optimal solutions from an MTL perspective. Meanwhile, \etcx{Wang2021} addressed numerical stiffness issues by determining the weight according to the magnitude of each loss-specific gradient. \etcx{Wang2022} explored training procedures via the Neural Tangent Kernel (NTK) perspective, adaptively setting weights based on NTK eigenvalues. \etcx{xiang2022} instead employed Gaussian probabilistic models for uncertainty quantification, estimating weights through maximum likelihood estimation. Others have pursued specialized approaches. E.g., \etcx{Remco2022} derived optimal weight parameters for specific problems and devised heuristics for general problems. \etcx{deWolff2022} use an evolutionary multi-objective algorithm to find the trade-offs between individual loss terms in PINNs training. \etcx{Shin2020} conducted convergence analyses for PINNs solving linear second-order elliptic and parabolic type PDEs, employing Lipschitz regularized loss.  In contrast to adaptive methods, \etcx{Colby2021} proposed a fixed weighting strategy for phase field models. They artificially elevated weights for the loss term associated with initial conditions.

\paragraph{Gradient Improvement Strategies in MTL and CL} Gradient improvement strategies in MTL and CL focus on understanding and resolving conflicts between loss-specific gradients. \etcx{Riemer2019} and \etcx{Du2020} proposed using the dot product (cosine similarity) of two gradient vectors to assess whether updates will conflict with each other. \etcx{Riemer2019} introduced an additional term in the loss to modify the gradient direction, maximizing the dot product of the gradients. Conversely,\etcx{Du2020} chose to discard gradients of auxiliary tasks if they conflict with the main task. Another intuitive method to resolve conflicts is orthogonal projection. \etcx{Mehrdad2020} proposed the Orthogonal Gradient Descent (OGD) method for continual learning, projecting gradients from new tasks onto a subspace orthogonal to the previous task. \etcx{Chaudhry2020} proposed learning tasks in low-rank vector subspaces that are kept orthogonal to minimize interference. \etcx{Yu2020} introduced the PCGrad method that projects a task's gradient onto the orthogonal plane of other gradients. \etcx{Liu2021towards} designed the IMTL-G method to ensure that the final gradient has the same projection length on all other vectors. \etcx{Dong2022} utilized the Singular Value Decomposition (SVD) on gradient vectors to obtain an orthogonal basis.
\etcx{javaloy2022} introduces a method that jointly homogenizes gradient magnitudes and directions for MTL. \etcx{quinton2024j} propose a Jacobian descent method with an aggregation step, which constrains the update gradient into the positive cone in the parameter space to avoid conflicts during training.
Few studies in the PINN community have utilized gradient-based improvement strategies. \etcx{Zhou2023} introduced the PCGrad method to PINN training for reliability assessment of multi-state systems. \etcx{Yao2023} developed a MultiAdam method where they apply Adam optimizer for each loss term separately. Concurrent work from \etcx{hwang2024dual} proposed a dual cone gradient descent method to mitigate gradient conflict when training PINNs.

\section{Method\label{sec:method}}

\subsection{Conflict-free Inverse Gradients Method}
Generically, we consider an optimization procedure 
with
a set of $m$ individual loss functions, i.e., $\{\mathcal{L}_1,\mathcal{L}_2,\cdots,\mathcal{L}_m\}$. Let $\{\bm{g}_1,\bm{g}_2, \cdots, \bm{g}_m\}$ denote the 
individual gradients corresponding to each of the loss functions. A gradient-descent step with gradient $\bm{g}_c$ will conflict with the decrease of $\mathcal{L}_i$ if $\bm{g}_i^\top \bm{g}_c$ is {\em negative}\cx{Riemer2019,Du2020}. 
Thus, to ensure that all losses are decreasing simultaneously along $\bm{g}_c$, all $m$ components of 
$[\bm{g}_1,\bm{g}_2,\cdots, \bm{g}_m]^\top\bm{g}_c$ 
should be {\em positive}. 
This condition is fulfilled by setting
$\bm{g}_c = [\bm{g}_1,\bm{g}_2,\cdots, \bm{g}_m]^{-\top} \bm{w}$, 
where $\bm{w}=[w_1,w_2,\cdots,w_m]$ is a vector with $m$ positive components and $M^{-\top}$ is the pseudoinverse of the transposed matrix $M^{\top}$. 

Although a positive $\bm{w}$ vector guarantees a conflict-free update direction for all losses, the specific value of $w_i$ further influences the exact direction of $\bm{g}_c$. To facilitate determining $\bm{w}$, we reformulate $\bm{g}_c$ as $\bm{g}_c=k[\mathcal{U}(\bm{g}_1),\mathcal{U}(\bm{g}_2),\cdots, \mathcal{U}(\bm{g}_m)]^{-\top} \bm{\hat{w}}$, where $\mathcal{U}(\bm{g}_i)=\bm{g}_i/(|\bm{g}_i|+\varepsilon)$ is a normalization operator and $k>0$. Now, $k$ controls the length of $\bm{g}_c$ and the ratio of $\bm{\hat{w}}$'s components corresponds to the ratio of $\bm{g}_c$'s projections onto each loss-specific $\bm{g}_i$, i.e., $|\bm{g}_c|\mathcal{S}_c(\bm{g},\bm{g}_i)$, where $\mathcal{S}_c(\bm{g}_i,\bm{g}_j)=\bm{g}_i^\top\bm{g}_j/(|\bm{g}_i||\bm{g}_j|+\varepsilon)$ is the operator for cosine similarity:
\begin{equation}
	\frac{
		|\bm{g}_c|\mathcal{S}_c(\bm{g}_c,\bm{g}_i)
	}{
		|\bm{g}_c|\mathcal{S}_c(\bm{g}_c,\bm{g}_j)
	}
	=
	\frac{
		\mathcal{S}_c(\bm{g}_c,\bm{g}_i)
	}{
		\mathcal{S}_c(\bm{g}_c,\bm{g}_j)
	}
	=
	\frac{
		\mathcal{S}_c(\bm{g}_c,k\mathcal{U}(\bm{g}_i))
	}{
		\mathcal{S}_c(\bm{g}_c,k\mathcal{U}(\bm{g}_j))
	}
	=
	\frac{
		[k\mathcal{U}(\bm{g}_i)]^\top \bm{g}_c
	}{
		[k\mathcal{U}(\bm{g}_j)]^\top \bm{g}_c
	}
	=
	\frac{\hat{w}_i
	}{
		\hat{w}_j
	}
	\quad
	\forall i,j \in [1,m].
\end{equation}
We call $\bm{\hat{w}}$ the \textit{direction weight}. The projection length of $\bm{g}_c$ on each loss-specific gradient serves as an effective ``learning rate'' for each loss. Here, we choose $\hat{w}_i=\hat{w}_j \ \forall i,j \in [1,m]$ to ensure a uniform decrease rate of all losses, as it was shown to yield a weak form of Pareto optimality for multi-task learning\cx{Rafael2021}.

Meanwhile, we introduce an adaptive strategy for the length of $\bm{g}_c$ rather than directly setting a fixed value of $k$. We notice that the length of $\bm{g}_c$ should increase when all loss-specific gradients point nearly in the same direction since it indicates a favorable direction for optimization. Conversely, when loss-specific gradients are close to opposing each other, the magnitude of $\bm{g}_c$ should decrease. We realize this by rescaling the length of $\bm{g}_c$ to the sum of the projection lengths of each loss-specific gradient on it, i.e., $|\bm{g}_c|=\sum_{i=1}^m|\bm{g}_i|\mathcal{S}_c(\bm{g}_i,\bm{g}_c)$.

The procedures above are summarized in the {\em \textbf{Con}flict-\textbf{F}ree \textbf{I}nverse \textbf{G}radients (ConFIG)} operator $G$ and we correspondingly denote the final update gradient $\bm{g}_c$ with
$\bm{g}_{\text{ConFIG}}$:
\begin{equation}
	\bm{g}_{\text{ConFIG}}=\mathcal{G}(\bm{g}_1,\bm{g}_1,\cdots,\bm{g}_m):=\left(\sum_{i=1}^m \bm{g}_i^\top\bm{g}_u\right)\bm{g}_u, \label{eq:ConFIG_2}   
\end{equation}
\begin{equation}
	\label{eq:ConFIG_1}
	\bm{g}_u = \mathcal{U}\left[
	[\mathcal{U}(\bm{g}_1),\mathcal{U}(\bm{g}_2),\cdots, \mathcal{U}(\bm{g}_m)]^{-\top} \mathbf{1}_m\right].  
\end{equation}
Here, $\mathbf{1}_m$ is a unit vector with $m$ components. A mathematical proof of ConFIG's convergence in convex and non-convex landscapes can be found in Appendix\rx{app:convergence_analysis}. The ConFIG method utilizes the pseudoinverse of the gradient matrix to obtain a conflict-free direction. In Appendix\rx{app:fail_config}, we prove that such an inverse operation is always feasible as long as the dimension of parameter space is larger than the number of losses. Besides, while calculating the pseudoinverse numerically could involve additional computational cost, it is not significant compared to the cost of back-propagation for each loss term. A detailed breakdown of the computational cost can be found in Appendix\rx{app:relative_wall_time}. 

\subsection{Two Term Losses and Positioning w.r.t. Existing Approaches\label{sec:special_cases}} 

\begin{figure}[b!]
	\centering
	\sidesubfloat[]{\includegraphics[scale=0.75]{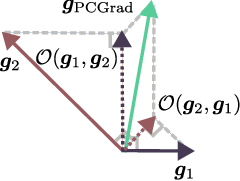}\label{fig:compare_pcgrad}}
	\sidesubfloat[]{\includegraphics[scale=0.75]{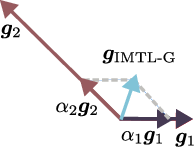}\label{fig:compare_imtlg}}
	\sidesubfloat[]{\includegraphics[scale=0.75]{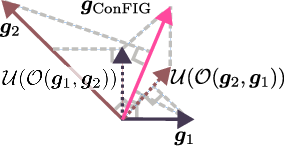}\label{fig:compare_config}}
	\caption{Sketch of PCGrad (a), IMTL-G (b), and our ConFIG (c) method with two loss terms. The PCGrad method directly sums two orthogonal components, and the IMTL-G method rescales the two vectors to the same magnitude. Our ConFIG method sums the unit vector of the orthogonal components and adjusts its magnitude with the projection length of each loss-specific gradient. 
	}
	\label{fig:compare}
\end{figure}

For the special case of only two loss terms, there is an equivalent form of ConFIG that does not require a pseudoinverse:
\begin{align}
	\mathcal{G}(\bm{g}_1,\bm{g}_2)=(\bm{g}_1^\top\bm{g}_{v}+\bm{g}_2^\top\bm{g}_{v}) \bm{g}_{v} 
	\label{eq:ConFIG_2D2}
	\\
	\bm{g}_{v}=\mathcal{U}\left[\mathcal{U}(\mathcal{O}(\bm{g}_1,\bm{g}_2))+\mathcal{U}(\mathcal{O}(\bm{g}_2,\bm{g}_1))\right]
	\label{eq:ConFIG_2D1}
\end{align}
where $\mathcal{O}(\bm{g}_1,\bm{g}_2)=\bm{g}_2-\frac{\bm{g}_1^\top\bm{g}_2}{|\bm{g}1|^2}\bm{g}_1$ is the orthogonality operator. It returns a vector orthogonal to $\bm{g}_1$ from the plane spanned by $\bm{g}_{1}$ and $\bm{g}_{2}$. The proof of equivalence is shown in Appendix\rx{app:proof_2vector}.

PCGrad\cx{Yu2020} and IMTL-G\cx{Liu2021towards} methods from multi-task learning studies also have a similar simplified form for the two-loss scenario. PCGrad projects loss-specific gradients onto the normal plane of others if their cosine similarity is negative, while IMTL-G rescales loss-specific gradients to equalize the final gradient's projection length on each loss-specific gradient, as illustrated in Fig.\rx{fig:compare_pcgrad} and\rx{fig:compare_imtlg}. Our ConFIG method in Fig.\rx{fig:compare_config} employs orthogonal components of each loss-specific gradient, akin to PCGrad. Meanwhile, it also ensures the same decrease rate of all losses, similar to IMTL-G. As a result, these three methods share an identical update direction but a different update magnitude in the two-loss scenario. This provides a valuable opportunity to evaluate our adaptive magnitude strategy: for two losses, the PCGrad and IMTL-G methods can be viewed as ConFIG variants with different magnitude rescaling strategies.

The above similarity between these three methods only holds for the two-loss scenario. With more losses involved, the differences between them are more evident. In fact, the inverse operation in Eq.\rx{eq:ConFIG_1} makes the ConFIG approach the only method that maintains a conflict-free direction when the number of loss terms exceeds two. Detailed discussion can be found in\rx{app:compare} 

\subsection{ConFIG with Momentum Acceleration\label{sec:momentum}}
Gradient-based methods like PCGrad, IMTL-G, and the proposed ConFIG method require separate backpropagation steps to compute the gradient for each loss term. In contrast, conventional weighting strategies only require a single backpropagation for the total loss. Thus, gradient-based methods are usually $r=\sum_{i}^{m} \mathcal{T}_b(\mathcal{L}_i) / \mathcal{T}_b\left(\sum_i^m \mathcal{L}_i\right)$ times more computationally expensive than weighting methods, where $\mathcal{T}_b$ is the computational cost of backpropagation. To address this issue, we introduce an accelerated momentum-based variant of ConFIG: {\em M-ConFIG}. Our core idea is to leverage the momentum of the gradient for the ConFIG operation and update momentum in an alternating fashion to avoid backpropagating all losses in a single step. In each iteration, only a single momentum is updated with its corresponding gradient, while the others are carried over from previous steps. This
reduces the computational cost of the M-ConFIG method to $1/m$ of the ConFIG method.

Algorithm\rx{alg:m_ConFIG} details the entire procedure of M-ConFIG. It aligns with the fundamental principles of the Adam algorithm, where the first momentum averages the local gradient, and the second momentum adjusts the magnitude of each parameter. In this approach, we calculate only a single second momentum using an estimated gradient based on the output of the ConFIG operation. An alternative could involve calculating the second momentum for every loss term, similar to the MultiAdam method\cx{Yao2023}. However, we found this strategy to be inaccurate and numerically unstable compared to Algorithm\rx{alg:m_ConFIG}. A detailed discussion and comparison can be found in the Appendix\rx{alg:m_ConFIG}. 

\begin{algorithm}[t!]
	\small
	\caption{
		M-ConFIG
	}\label{alg:m_ConFIG}
	\begin{algorithmic}
		\Require $\theta_0$ (network weights), $\gamma$ (learning rate), $\beta_1$, $\beta_2$, $\epsilon$ (Adam coefficient), $\bm{m}_0$ (Pseudo first momentum),
		
		\State $[\bm{m}_{\bm{g}_1,0},\bm{m}_{\bm{g}_1,0}, \cdots \bm{m}_{\bm{g}_m,0}]\gets [\bm{0},\bm{0},\cdots,\bm{0}]$ (First momentum), $\bm{v}_{0} \gets \bm{0}$ (Second momentum),
		\State $[t_{\bm{g}_1},t_{\bm{g}_2},\cdots,t_{\bm{g}_m}]\gets[0,0,\cdots,0]$,
		All operations on vectors are element-wise except $\mathcal{G}$.
		\For{$t\gets1$ to $\cdots$}
		\State $i=t\%m+1$
		
		\State $t_{\bm{g}_i} \gets t_{\bm{g}_i}+1$
		
		\State $\bm{m}_{\bm{g}_i,t_{\bm{g}_i}} \gets \beta_1 \bm{m}_{\bm{g}_i,t_{\bm{g}_i}-1}+(1-\beta_1) \nabla_{\theta_{t-1}}\mathcal{L}_{i}$ \Comment{Update the first momentum of $\bm{g}_i$}
		\State $[\hat{\bm{m}}_{\bm{g}_1},\hat{\bm{m}}_{\bm{g}_2},\cdots,\hat{\bm{m}}_{\bm{g}_m}]\gets[\frac{\bm{m}_{\bm{g}_1,t_{\bm{g}_1}}}{1-\beta_1^{t_{\bm{g}_1}}},\frac{\bm{m}_{\bm{g}_2,t_{\bm{g}_2}}}{1-\beta_1^{t_{\bm{g}_2}}}, \cdots,\frac{\bm{m}_{\bm{g}_m,t_{\bm{g}_m}}}{1-\beta_1^{t_{\bm{g}_m}}}]$ 
		\Comment{{\large $^{\text{Bias corrections for}}_{\text{first momentum terms}}$}}
		
		\State $\hat{\bm{m}}_g\gets\mathcal{G}(\hat{\bm{m}}_{\bm{g}_1},\hat{\bm{m}}_{\bm{g}_2},\cdots,\hat{\bm{m}}_{\bm{g}_m})$ \Comment{ConFIG update of momentums}
		\State $\bm{g}_{c}\gets\left[\hat{\bm{m}}_g(1-\beta_1^t)-\beta_1 \bm{m}_{t-1}\right]/(1-\beta_1)$ \Comment{Obtain the estimated gradient}
		\State $\bm{m}_{t}\gets\beta_1 \bm{m}_{t-1}+(1-\beta_1)\bm{g}_{c}$\Comment{Update the pseudo first momentum}
		\State $\bm{v}_{t}\gets\beta_2 \bm{v}_{t-1}+(1-\beta_2)\bm{g}_{c}^2$ \Comment{Update the second momentum}
		\State $\hat{\bm{v}}\gets\bm{v}_{t}/(1-\beta_2^t)$ \Comment{Bias correction for the second momentum}
		\State $\theta_i \gets \theta_{t-1}- \gamma\hat{\bm{m}}_g/(\sqrt{\hat{\bm{v}}_{g}}+\epsilon)$ \Comment{Update weights of the neural network}
		\EndFor
	\end{algorithmic}
\end{algorithm}

Surprisingly, M-ConFIG not only catches up with the training speed of regular weighting strategies but typically yields an even lower average computational cost per iteration. This stems from the fact that backpropagating a sub-loss $\mathcal{L}_i$ is usually faster than backpropagating the total loss $\sum_i^m \mathcal{L}_i$. Thus, $r$ is usually smaller than $m$ and $r/m < 1$. This is especially obvious for PINN training, where a reduced number of sampling points are used for boundary/initial terms and are faster to evaluate than the residual term. In our experiments, we observed an average value of $r=1.67$ and a speed-up of $1.67/3 \approx 0.56$ per iteration for PINNs trained with three losses using M-ConFIG. 

\section{Experiments\label{sec:experiment}}

\begin{figure}[b]
	\centering
	\includegraphics[scale=0.25]{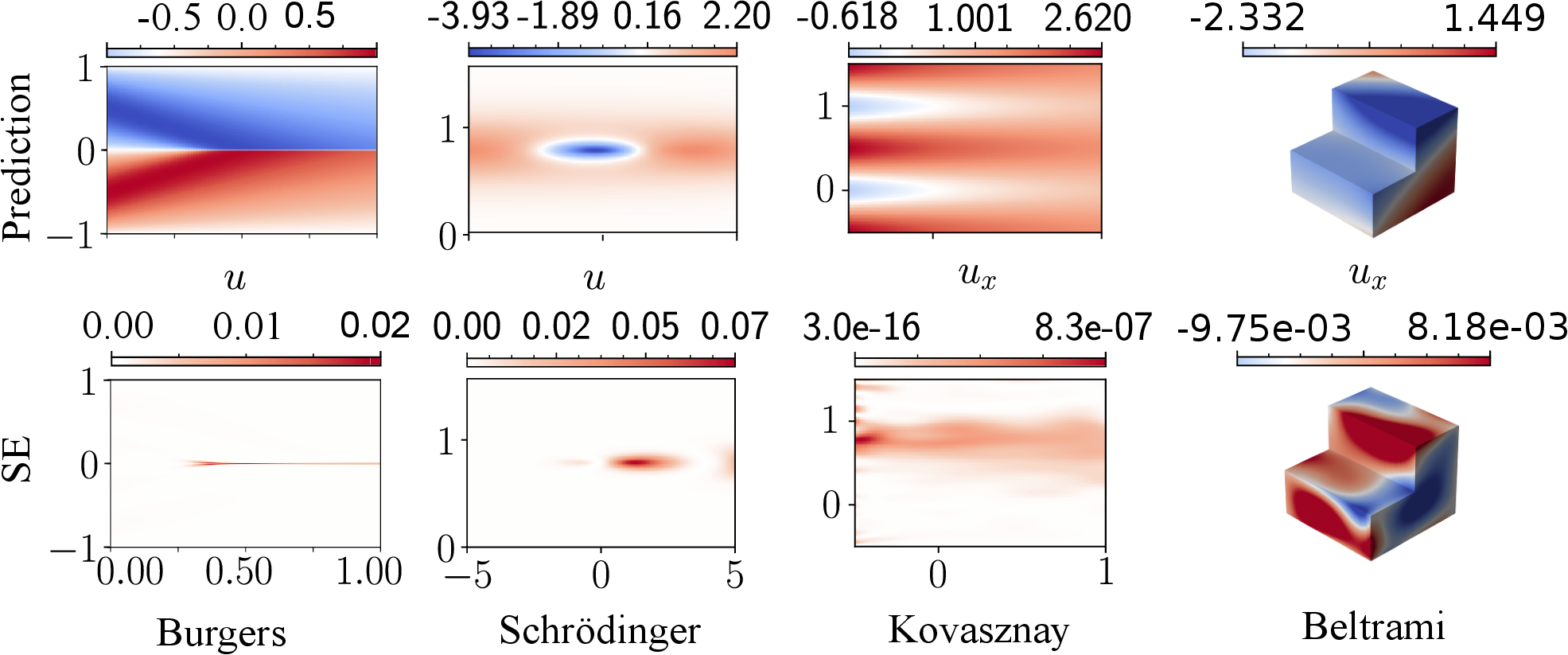}
	\caption{Examples of PINN predictions and squared error (SE) distributions on the test PDEs.}
	\label{fig:cloud_all}
\end{figure}

In this section, we employ the proposed methods for training PINNs on several challenging PDEs. We also explore the application of our method on a classical Multi-Task Learning (MTL) benchmark as an outlook. 
Unless mentioned otherwise, 
every result is computed via averaging three training runs 
initialized with different random seeds, each using the model with the best test performance during the training.
For detailed numerical values and standard deviations of the result in each experiment, please refer to the Appendix\rx{app:numerical_std_pinn} and\rx{app:full_mtl}. Training configurations and hyper-parameters of each experiment can be found in the Appendix\rx{app:training_details}. An additional ablation study on training hyperparameters can be found in the Appendix\rx{app:ablation_hyperpara}. It shows that the gains from ConFIG are robust w.r.t. hyperparameter changes.

\subsection{Physics Informed Neural Networks}
\paragraph{Preliminaries of PINNs.} Consider the initial boundary value problem of a general PDE for a scalar function $u(\bm{x},t)$: $\mathbb{R}^{d+1} \rightarrow \mathbb{R}$ given by
\begin{equation}
	\mathcal{N}[u(\bm{x},t),\bm{x},t]:=N[u(\bm{x},t),\bm{x},t]+f(\bm{x},t) = 0,  \quad \bm{x} \in \Omega,\space  t\in (0,T],
	\label{eq:pde_internal}   
\end{equation}
\begin{equation}
	\mathcal{B}[u(\bm{x},t),\bm{x},t]:=B[u(\bm{x},t),\bm{x},t]+g(\bm{x},t) = 0, \quad \bm{x} \in \partial \Omega,\space  t\in (0,T],
	\label{eq:pde_boundary}
\end{equation}
\begin{equation}
	\mathcal{I}[u(\bm{x},0),\bm{x},0]:= u(\bm{x},0)+h(\bm{x},0) = 0,  \quad \bm{x} \in \overline{\Omega}. 
	\label{eq:pde_initial}
\end{equation}
Here, $\Omega \subset  R^d $ is a spatial domain with $\partial \Omega$ and $\overline{\Omega}$ denotes its boundary and closure, respectively.  $N$ and $B$ are spatial-temporal differential operators, and $f$, $g$ and $h$ are source functions. To solve this initial boundary value problem, PINNs introduce a neural network $\hat{u}(\bm{x},t,\bm{\theta})$ for the target function $u(\bm{x},t)$, where $\theta$ is the weights of the neural networks. Then, spatial and temporal differential operators can be calculated efficiently with auto-differentiation of $N[\hat{u}(\bm{x},t,\bm{\theta}),\bm{x},t]$ and $B[\hat{u}(\bm{x},t,\bm{\theta}),\bm{x},t]$. Solving Eq.\rx{eq:pde_internal}-\ref{eq:pde_initial} turns to the training of neural networks with a loss function of
\begin{equation}
	\mathcal{L}(\bm{\theta})=
	\underbrace{
		\sum_{i=1}^{n_\mathcal{N}}\mathcal{N}[\hat{u}(\bm{x}_i,t_i,\bm{\theta}),\bm{x}_i,t_i]
	}_{\mathcal{L}_{\mathcal{N}}} 
	+
	\underbrace{
		\sum_{i=1}^{n_\mathcal{B}}\mathcal{B}[\hat{u}(\bm{x}_i,t_i,\bm{\theta}),\bm{x}_i,t_i]
	}_{\mathcal{L}_{\mathcal{B}}} 
	+
	\underbrace{
		\sum_{i=1}^{n_\mathcal{I}}\mathcal{I}[\hat{u}(\bm{x}_i,0,\bm{\theta}),\bm{x}_i,0]
	}_{\mathcal{L}_{\mathcal{I}}} 
	,\label{eq:loss_function}
\end{equation}
where ($\bm{x}_i$,$t_i$) are the spatial-temporal coordinates for the data samples in the $\mathbb{R}^{d+1}$ domain, $\mathcal{L}_\mathcal{N}$, $\mathcal{L}_\mathcal{B}$ and $\mathcal{L}_\mathcal{I}$ are the loss functions for the PDE residual, spatial and initial boundaries, respectively. These three loss terms give us three corresponding gradients for optimization: $\gn$, $\gb$, and $\gi$. In the experiments, we consider four cases with three different PDEs: 1D unsteady Burgers equation, 1D unsteady Schrödinger equation, 2D Kovasznay flow (Navier-stokes equations), and 3D unsteady Beltrami flow (Navier-stokes equations). Fig.\rx{fig:cloud_all} shows examples of the solution domain of each PDEs. A more detailed illustration and discussion of the PDEs and corresponding solution domains can be found in Appendix\rx{app:pdes} and\rx{app:solution_domain}, respectively.

\paragraph{Focusing on two loss terms.} 

\begin{figure}[b]
	\centering
	\includegraphics[scale=0.3]{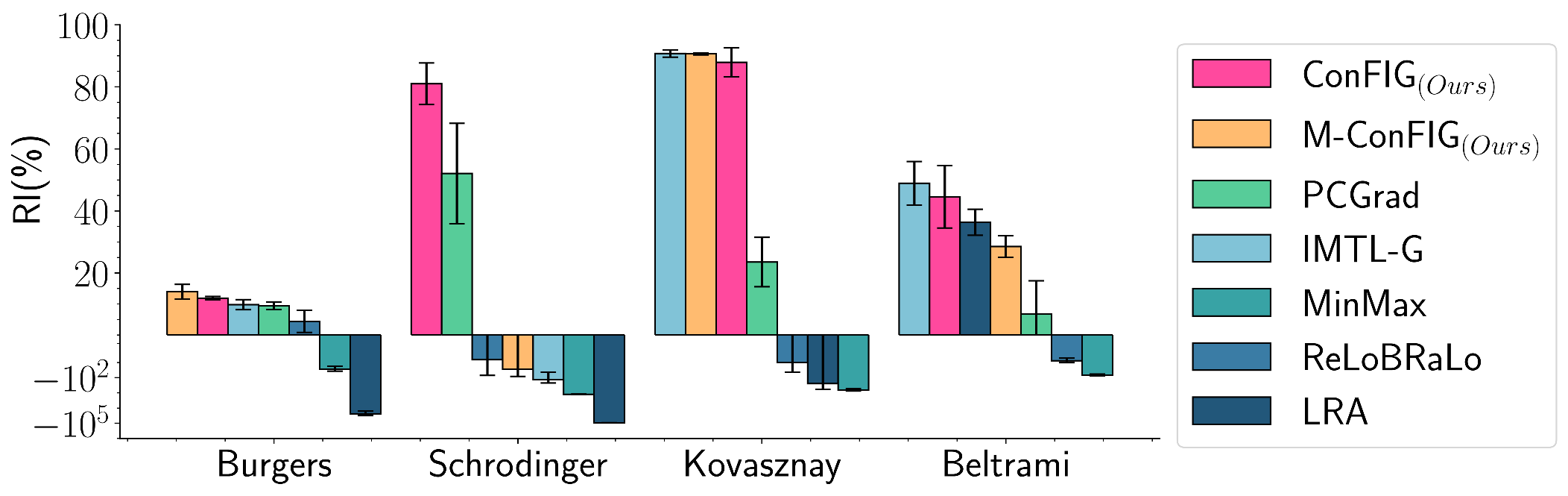}
	\caption{Relative improvements of PINNs trained with two loss terms using different methods.}
	\label{fig:accuracy_nl2}
\end{figure}

As detailed above, the similarity between our ConFIG method and the PCGrad/IMTL-G methods in the two-loss scenario offers a valuable opportunity to evaluate the proposed strategy for adapting the magnitude of the update. Therefore, we begin with a two-loss scenario where we introduce a new composite gradient, $\mathcal{L}_\mathcal{BI}$, which aggregates the contributions from both the boundary ($\mathcal{L}_{\mathcal{B}}$) and initial ($\mathcal{L}_{\mathcal{I}}$) conditions. This setup also provides better insights into the conflicting updates between PDE residuals and other loss terms.

Fig.\rx{fig:accuracy_nl2} compares the performance of PINNs trained with our ConFIG method and other existing approaches. Specifically, we compare to PCGrad\cx{Yu2020} and the IMTL-G method\cx{Liu2021towards} from MTL studies as well as the LRA method\cx{Wang2021}, MinMax\cx{liu2021}, and ReLoBRaLo\cx{Rafael2021} as established weighting methods for PINNs. The accuracy metric is the MSE between the predictions and the ground truth value on the new data points sampled in the computational domain that differ from the training data points. As the central question for all these methods is how much improvement they yield in comparison to the standard training configuration with Adam, we show the {\em{relative improvement}} (in percent) over the Adam baseline. Thus, +50 means half, -100 twice the error of Adam, respectively. (Absolute metrics are given in the Appendix\rx{app:numerical_std_pinn})

The findings reveal that only our ConFIG and PCGrad consistently outperform the Adam baseline. In addition, the ConFIG method always performs better than PCGrad. While the IMTL-G method performs slightly better than our methods in the Kovasznay and Beltrami flow cases, it performs worse than the Adam baseline in the Schrödinger case. Fig.\rx{fig:t_loss_nl2} compares the training loss of our ConFIG method with the Adam baseline approach. The results indicate that ConFIG successfully mitigates the training bias towards the PDE residual term. It achieves an improved overall 
test performance by significantly decreasing the boundary/initial loss  
while sacrificing the PDE residual loss slightly. This indicates that ConFIG succeeds in finding one of the many minima of the residual loss that better adheres to the boundary conditions, i.e., it finds a better overall solution for the PDE.

\begin{figure}[tb]
	\centering
	\includegraphics[scale=0.3]{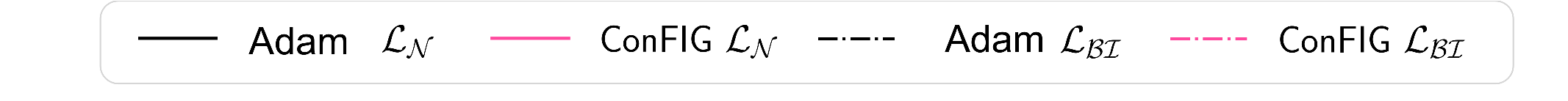}\\
	{\includegraphics[scale=0.3]{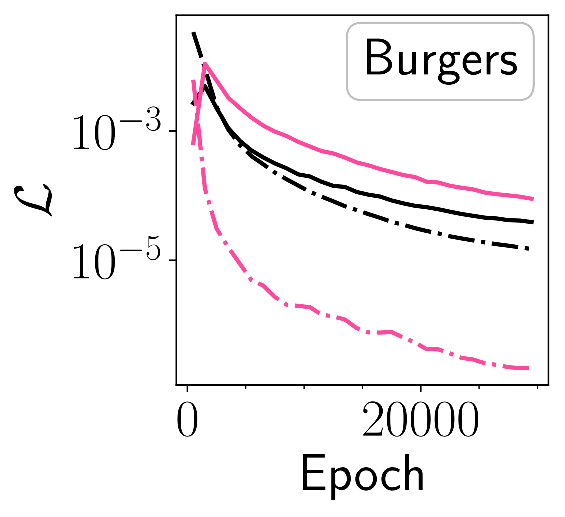}\label{fig:tloss_burgers_nl2}}
	{\includegraphics[scale=0.3]{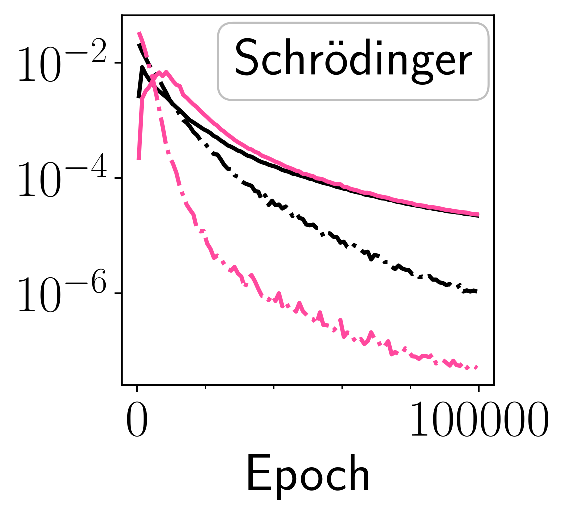}\label{fig:tloss_Schrodinger_nl2}}
	{\includegraphics[scale=0.3]{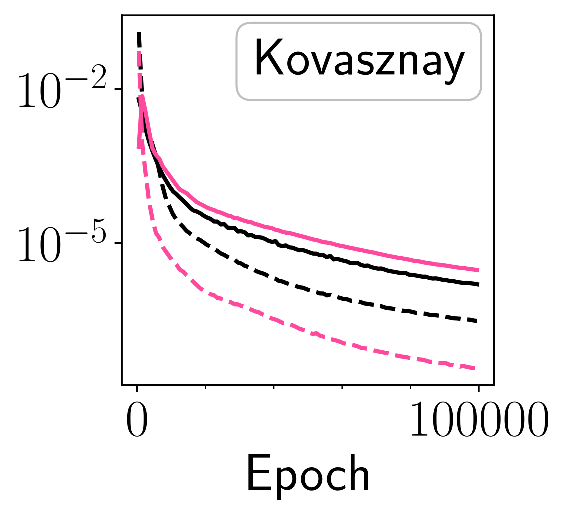}\label{fig:tloss_kovasznay_nl2}}
	{\includegraphics[scale=0.3]{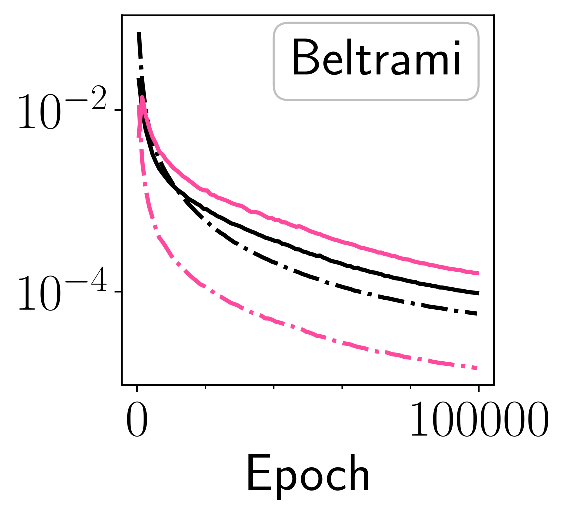}\label{fig:tloss_beltrami_nl2}}
	\caption{Training losses of PINNs trained with Adam baselines and ConFIG using two loss terms.}
	\label{fig:t_loss_nl2}
\end{figure}

\begin{figure}[t]
	\centering
	\includegraphics[scale=0.3]{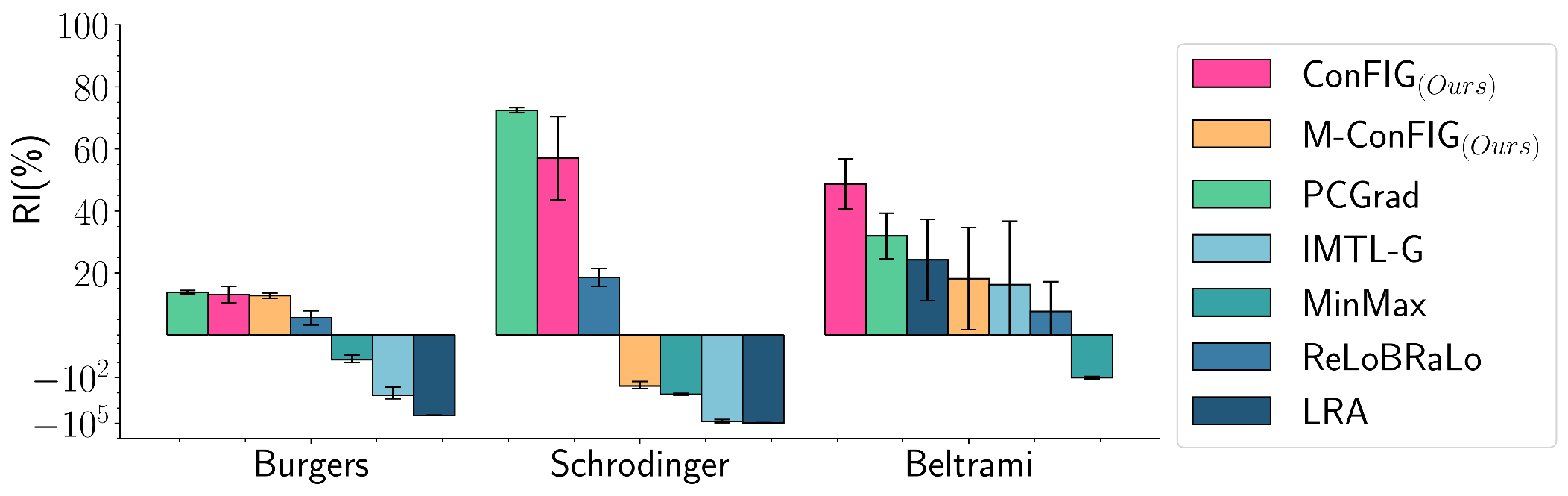}\\
	\caption{Relative improvements of PINNs trained with three loss terms using different methods.}
	\label{fig:accuracy_nl3}
\end{figure}

\begin{figure}[bp]
	\centering
	\includegraphics[scale=0.3]{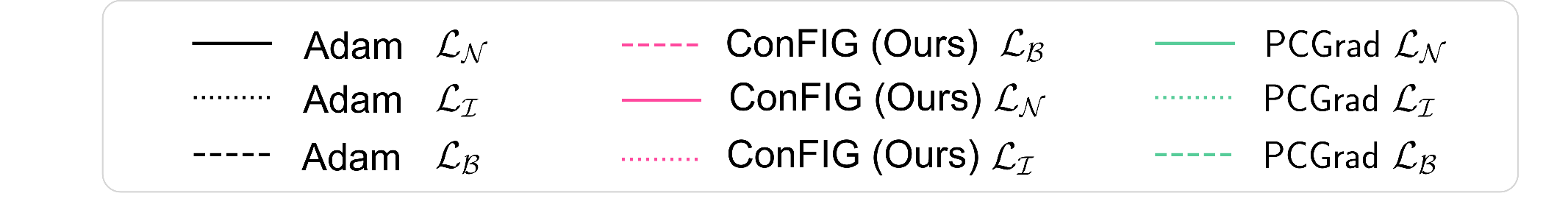}\\
	{\includegraphics[scale=0.3]{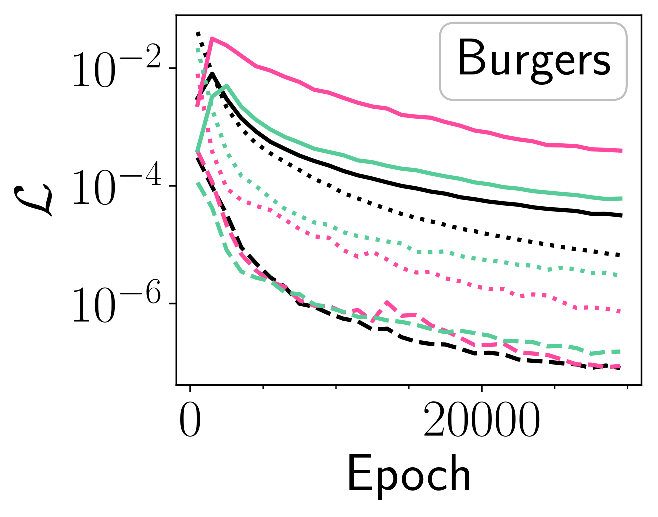}\label{fig:tloss_burgers_nl3}}
	{\includegraphics[scale=0.3]{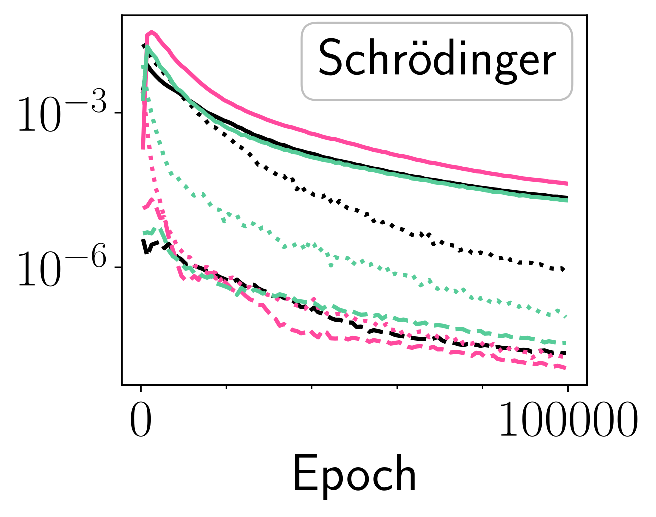}\label{fig:tloss_schrodinger_nl3}}
	{\includegraphics[scale=0.3]{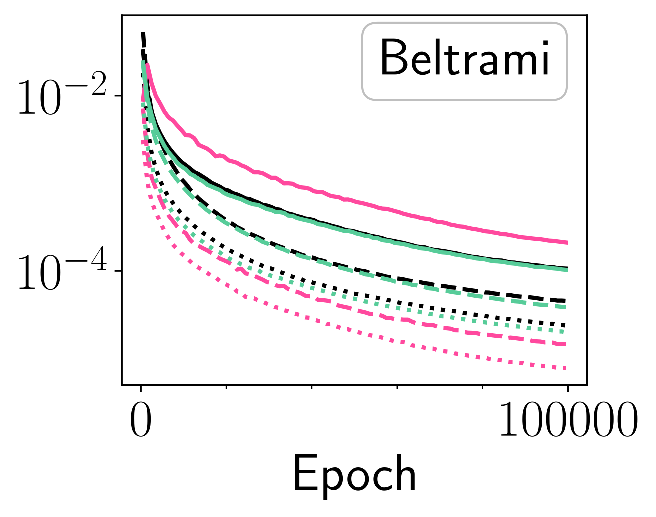}\label{fig:tloss_beltrami_nl3}}
	\caption{Training losses of the Adam baseline, PCGrad, and ConFIG with three loss terms.}
	\label{fig:t_loss_nl3}
\end{figure}

\paragraph{Scenarios with three loss terms.} Fig.\rx{fig:accuracy_nl3} compares the performance for all three loss terms. While the general trend persists, with ConFIG and PCGrad outperforming the other methods, these cases highlight interesting differences between these two methods. As PCGrad performs better for the Burgers and Schrödinger case, while ConFIG is better for the Beltrami flow, we analyze the training losses of both methods in comparison to Adam in Fig.\rx{fig:t_loss_nl3}. In the Burgers and Schrödinger equation experiment, ConFIG notably reduces the loss associated with initial conditions while the loss of boundary conditions exhibits negligible change. Similarly, for PCGrad, minimal changes are observed in boundary conditions. Furthermore, owing to PCGrad's bias towards optimizing in the direction of larger gradient magnitudes, i.e., the direction of the PDE residual, the reduction in its PDE residual term is modest, leading to a comparatively slower decrease in the loss of initial conditions compared to ConFIG. In the case of the Beltrami flow, our ConFIG method effectively reduces both boundary and initial losses, whereas the PCGrad method slightly improves boundary and initial losses. These results underscore the intricacies of PINN training, where PDE residual terms also play a pivotal role in determining the final test performance. In scenarios where the PDE residual does not significantly conflict with one of the terms, an increase in the PDE residual ultimately reduces the benefits from the improvement on the boundary/initial conditions, resulting in a better performance for PCGrad in the Burgers and Schrödinger scenario. 

\begin{figure}[tb!]
	\centering
	\sidesubfloat[]{\includegraphics[scale=0.22]{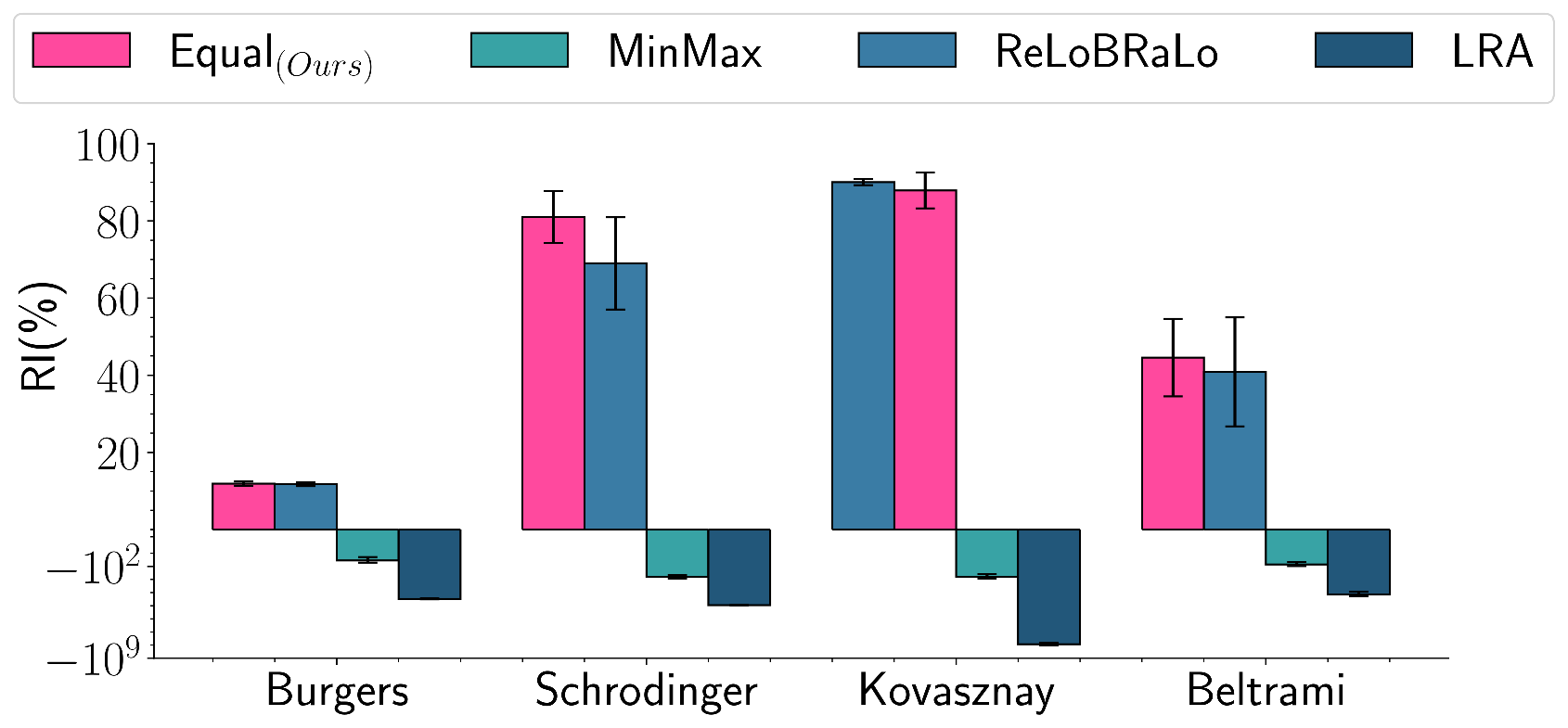}}
	\sidesubfloat[]{\includegraphics[scale=0.22]{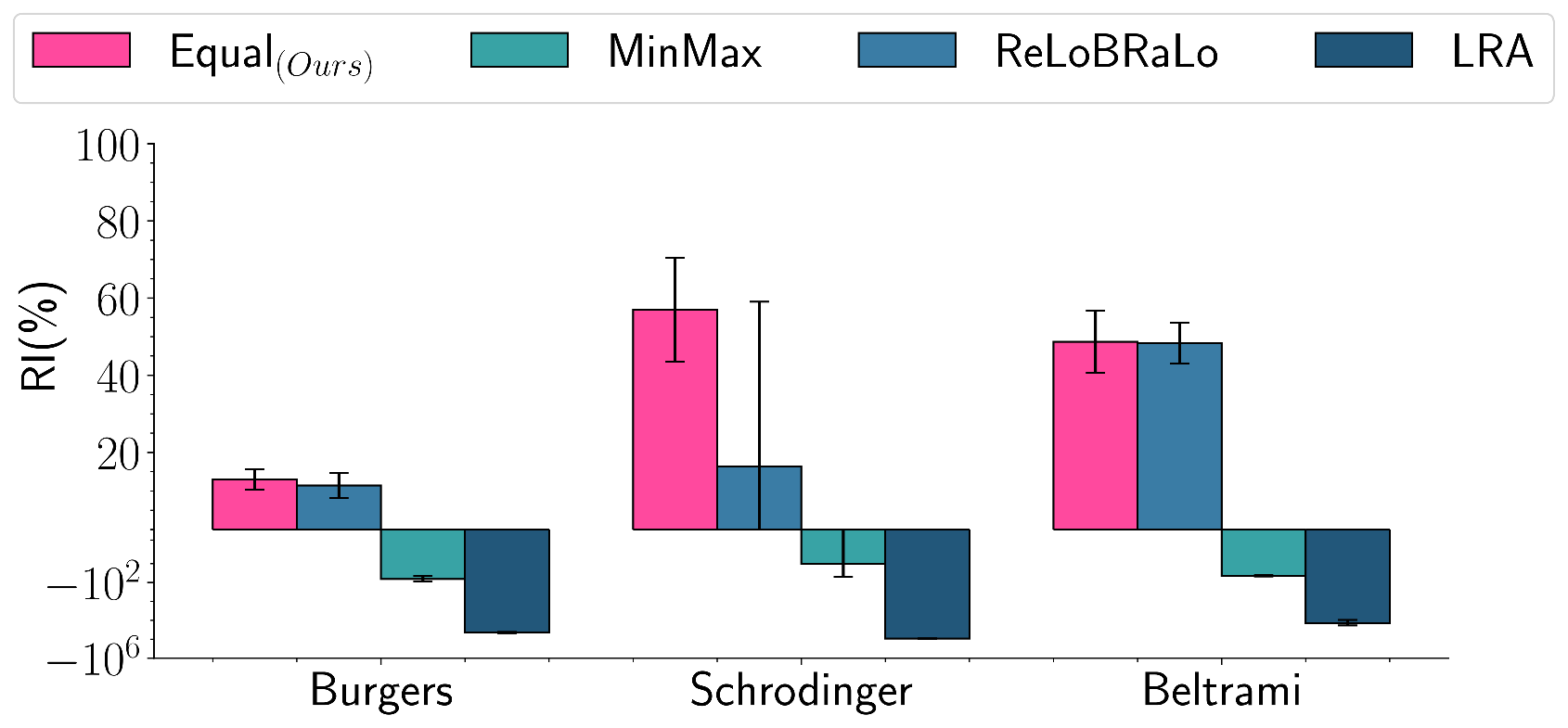}}
	\caption{Relative improvements of the ConFIG method with different direction weights. (a) Two-loss scenario. (b) Three-loss scenario.}
	\label{fig:ablation_weight}
\end{figure}
\begin{figure}[bt!]
	\centering
	\sidesubfloat[]{\includegraphics[scale=0.25]{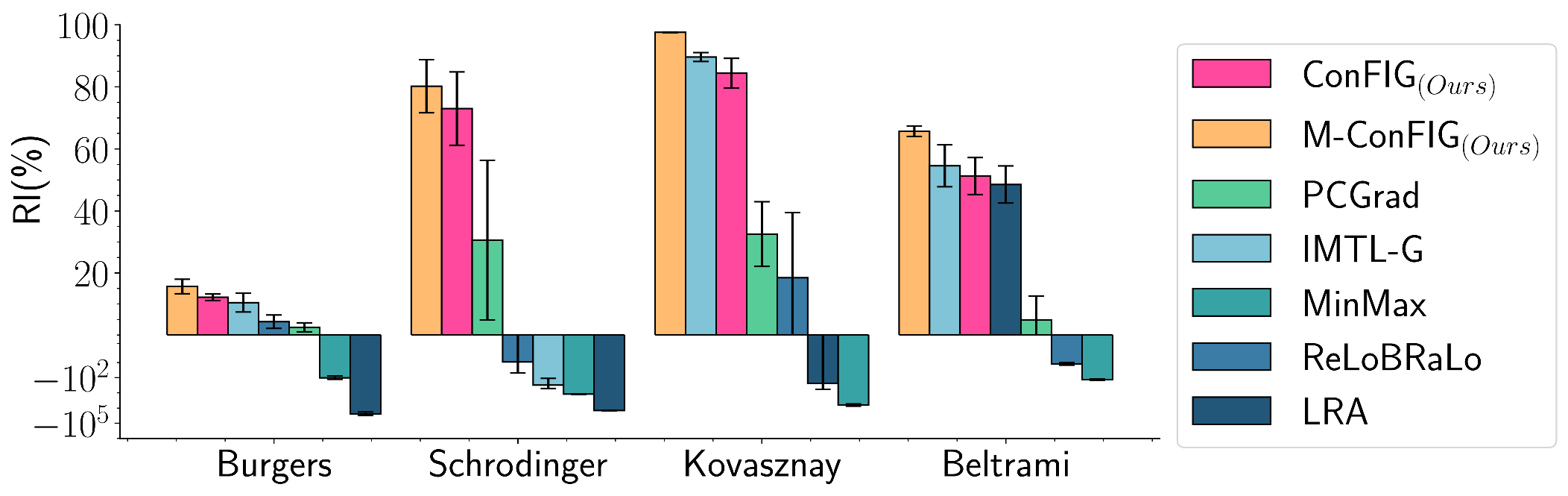}}\\
	\sidesubfloat[]{\includegraphics[scale=0.25]{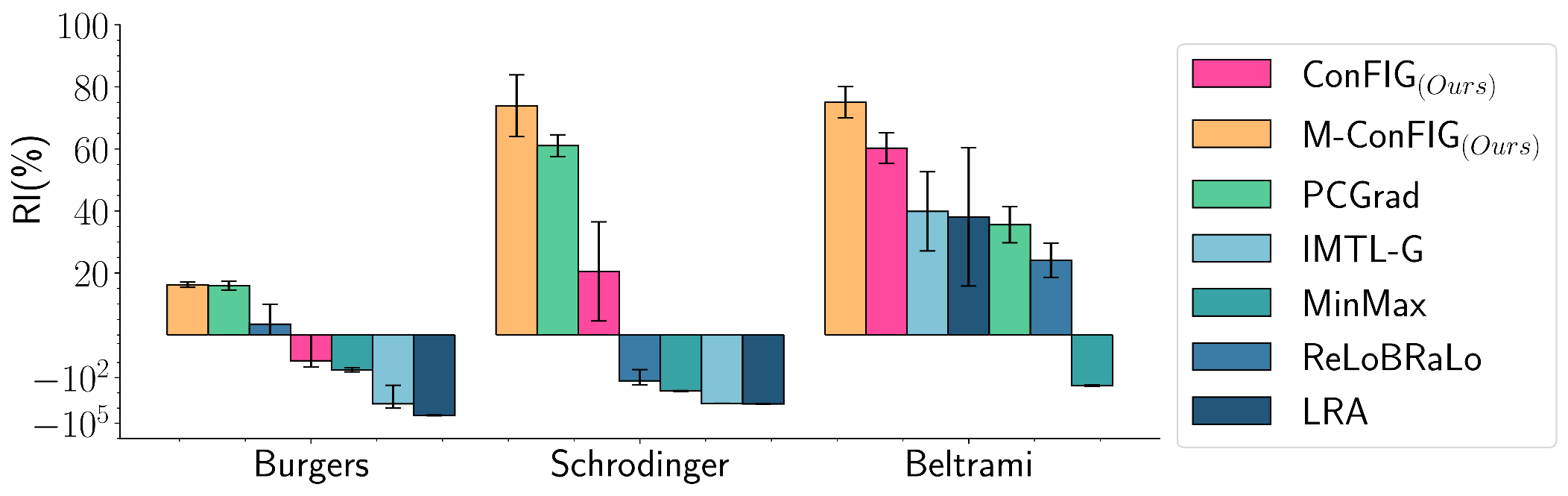}}
	\caption{Relative improvements of the PINNs trained with different methods with the same wall time. (a) Two-loss scenario. (b) Three-loss scenario.}
	\label{fig:accuracy_m}
\end{figure}

\paragraph{Adjusting direction weights. \label{app:ablation_weight}} In our ConFIG method, we set equal components for the direction weights $\bm{\hat{w}}=\mathbf{1}_m$ to ensure a uniform decrease rate across all loss terms. In the following experiments, we use the different weighting methods discussed above to calculate the components of $\bm{\hat{w}}$ and compare the results with our ConFIG method. As demonstrated in Fig.\rx{fig:ablation_weight}, only our default strategy (equal weights) and the ReLoBRaLo weights get better performance than the Adam baseline. Moreover, our equal setup consistently outperforms the ReLoBRaLo method, except for a slight inferiority in the Kovasznay flow scenario with two losses. These results further validate that the equal weighting strategy is a good choice.

\paragraph{Evaluation of Runtime Performance}
While the evaluations in Fig.\rx{fig:accuracy_nl2} and\rx{fig:accuracy_nl3} have focused on accuracy after a given number of training epochs, a potentially more important aspect for practical applications is the accuracy per runtime. This is where the M-ConFIG method can show its full potential, as its slight approximations of the update direction come with a substantial reduction in terms of computational resources (we quantify the gains per iteration in\rx{app:relative_wall_time}).
Fig.\rx{fig:accuracy_m} compares the test MSE for M-ConFIG and other methods with a constant budget in terms of wall-clock time.  Here, M-ConFIG outperforms all other methods, even PCGrad and the regular ConFIG method which yielded a better per-iteration accuracy above. This is apparent for all cases under consideration. 

\begin{wrapfigure}{o}{0.5\textwidth}
	\centering  \vspace{-5pt}\includegraphics[width=0.99\textwidth]{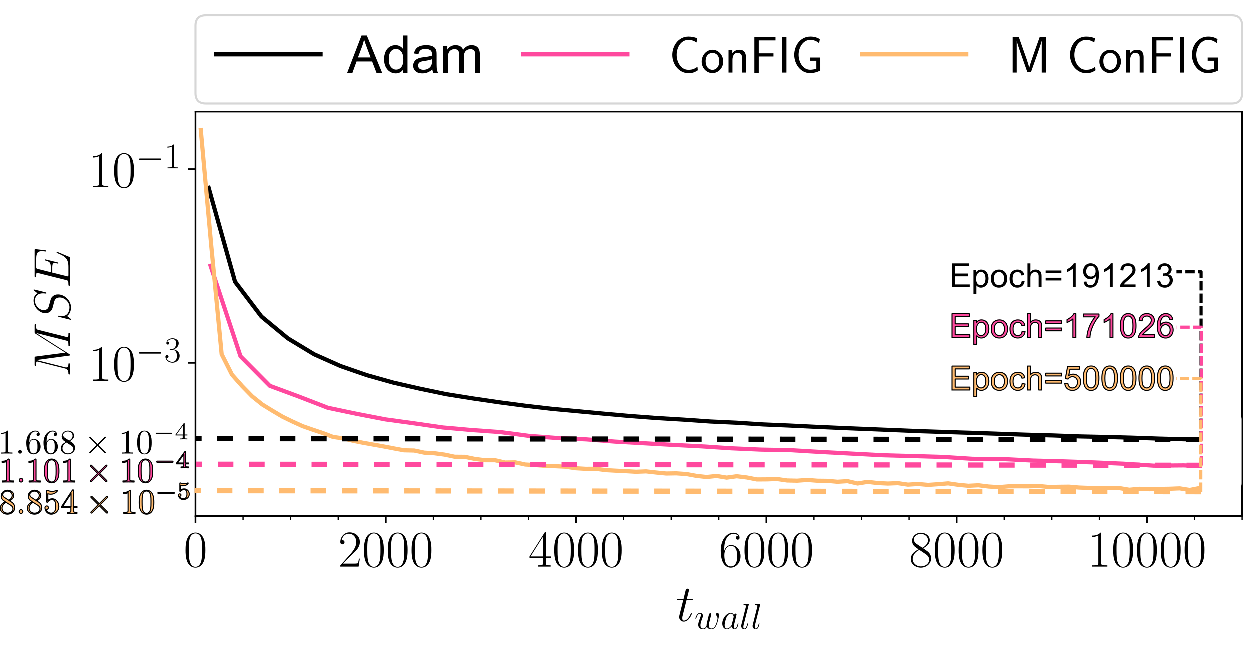}
	\caption{Test MSE of Adam baseline, ConFIG, and M-ConFIG as functions of wall time.}\vspace{-11pt}
	\label{fig:momentum_beltrami_nl3_long}
\end{wrapfigure}
To shed more light on its behavior, Fig.\rx{fig:momentum_beltrami_nl3_long} shows the test MSE of Adam, ConFIG, and M-ConFig for the most challenging scenario, the 3D unsteady Beltrami flow, with an extended training run. It shows that the advantage of M-ConFIG does not just stem from a quick late or early decrease but rather is the result of a consistently improved convergence throughout the full training. This graph additionally highlights that the regular ConFIG method, i.e., without momentum, still outperforms Adam despite its higher computational cost for each iteration.

\paragraph{Additional Tests on Challenging PDEs}
Recently, several benchmark problems have been proposed as challenging tasks for training PINNs \cx{hao2023pinnacle}. We have also tested our ConFIG and M-ConFIG methods together with other methods on these challenging tasks. The results are summarized in Appendix.\rx{app:additional_pdes}. They  show that, although our methods 
do not resolve all inherent difficulties of PINN training,
our methods still consistently outperform the other methods. 
These results shows the general appeal of the proposed methods for challenging, and high-dimensional training scenarios.

\subsection{Multi-task Learning}

While PINNs represent a highly challenging and relevant case, we also evaluate our ConFIG method for traditional Multi-Task Learning (MTL) as an outlook. We employ the widely studied CelebA dataset\cx{liu2015faceattributes}, comprising 200,000 face images annotated with 40 facial binary attributes, making it suitable for MTL with $m=40$ loss terms. Unlike PINNs, where each sub-task (loss term) carries distinct significance, and the final test loss may not reflect individual subtask performance, the CelebA experiment allows us to use the same metric to evaluate the performance for all tasks. 

We compared the performance of our ConFIG method with ten popular MTL baselines. Besides PCGrad and IMTL-G  from before, we also compare to Linear scalarization baseline (LS)\cx{mtlbook}, Uncertainty Weighting (UW)\cx{Cipolla2018}, Dynamic Weight Average (DWA)\cx{liu2019}, Gradient Sign Dropout (GradDrop)\cx{chen2020}, Conflict-Averse Gradient Descent (CAGrad)\cx{liu2021conflict}, Random Loss Weighting (RLW)\cx{lin2022reasonable}, Nash bargaining solution for MTL (Nash-MTL)\cx{navon2022nashmtl}, and Fast Adaptive Multitask Optimization (FAMO)\cx{liu2023famo}. We use two metrics to measure the performance of different methods: the mean rank metric ($MR$, lower ranks being better)\cx{navon2022nashmtl,liu2023famo}, represents the average rank of across all tasks. An $MR$ of 1 means consistently outperforming all others.
In addition, we consider the average F1 score ($\overline{F_1}$, larger is better) to measure the average performance on all tasks.

Fig.\rx{fig:mtl} presents a partial summary of the results. Our ConFIG method or M-ConFIG method emerges as the top-performing method for both $MR$ and $\overline{F_1}$ metric. Unlike the PINN study, we update 30 momentum variables rather than a single momentum variable for the M-ConFIG method, as indicated by the subscript `30.' This adjustment is necessary because a single update step is insufficient to achieve satisfactory results in the challenging MTL experiments involving 40 tasks. 

\vspace{-3pt}
\paragraph{Varying Number of Tasks} 
Fig.\rx{fig:mtl_ablation} illustrates how the performance of the M-ConFIG method varies with different numbers of tasks and momentum-update steps in the CelebA MTL experiments. The performance of M-ConFIG tends to degrade as the number of tasks increases. This decline is attributed to longer intervals between momentum updates for each gradient, making it difficult for the momentum to accurately capture changes in the gradient. When the number of tasks becomes sufficiently large, this performance loss outweighs the benefits of M-ConFIG's faster training speed. In our experiments, when the number of tasks reaches 10, the $\overline{F}_1$ score of the ConFIG method (0.635) already surpasses that of M-ConFIG (0.536) with the same training time. However, increasing the number of gradient update can effectively mitigate this performance degradation. In the standard 40-task MTL scenario, increasing the number of gradient update steps per iteration to 20 allows M-ConFIG to achieve a performance level comparable to ConFIG while reducing the training time to only 56\% of that required by ConFIG. Notably, alternating momentum updates can sometimes even outperform the ConFIG method with the same number of training epochs, as seen in the $n_{updates}=30$ case in the MTL experiment, as well as in the Burgers and Kovasznay cases in the two-loss PINNs experiment.

\begin{figure}[bt!]
	\centering
	\includegraphics[scale=0.25]{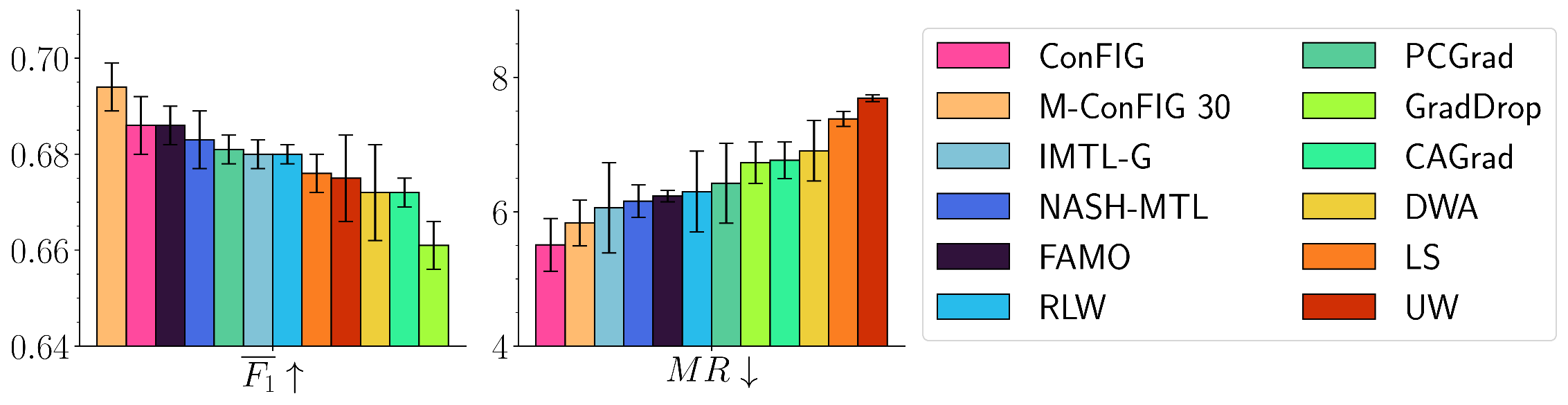}
	\caption{Test evaluation in terms of $\overline{F_1}$ and \textit{MR} for the CelebA experiments. ConFIG and M-ConFIG 30 yield the best performance for both metrics, each of them favoring a different metric.}
	\label{fig:mtl}
\end{figure}

\begin{figure}[bt!]
	\centering
	\includegraphics[scale=0.25]{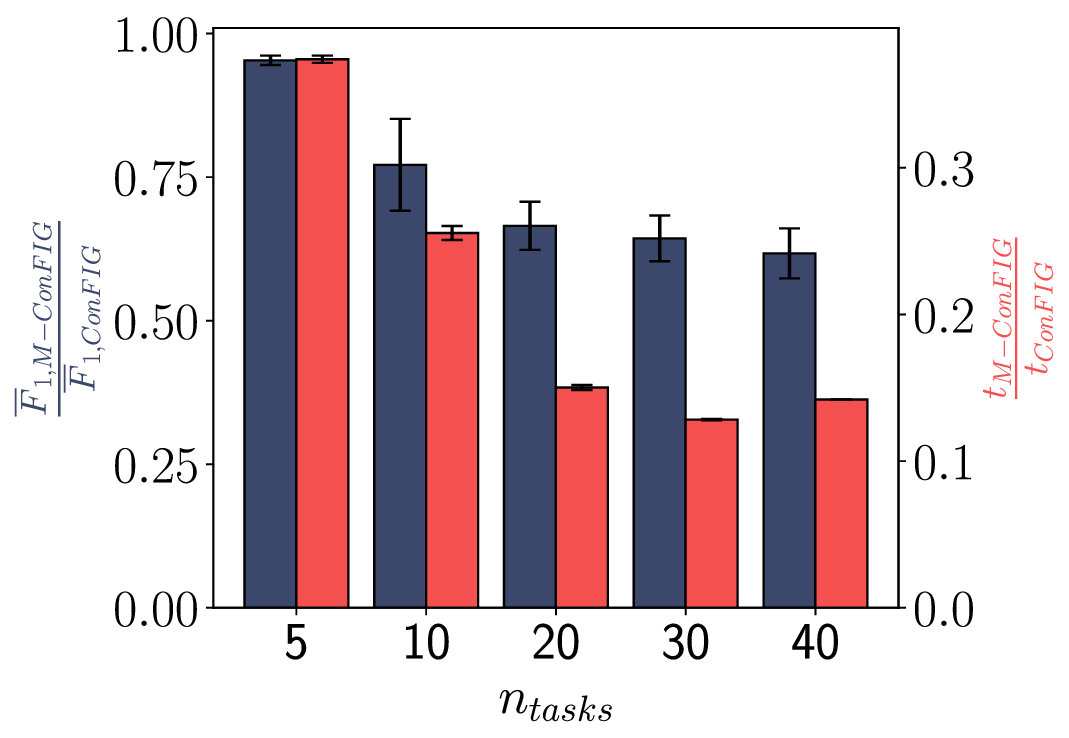}
	\includegraphics[scale=0.25]{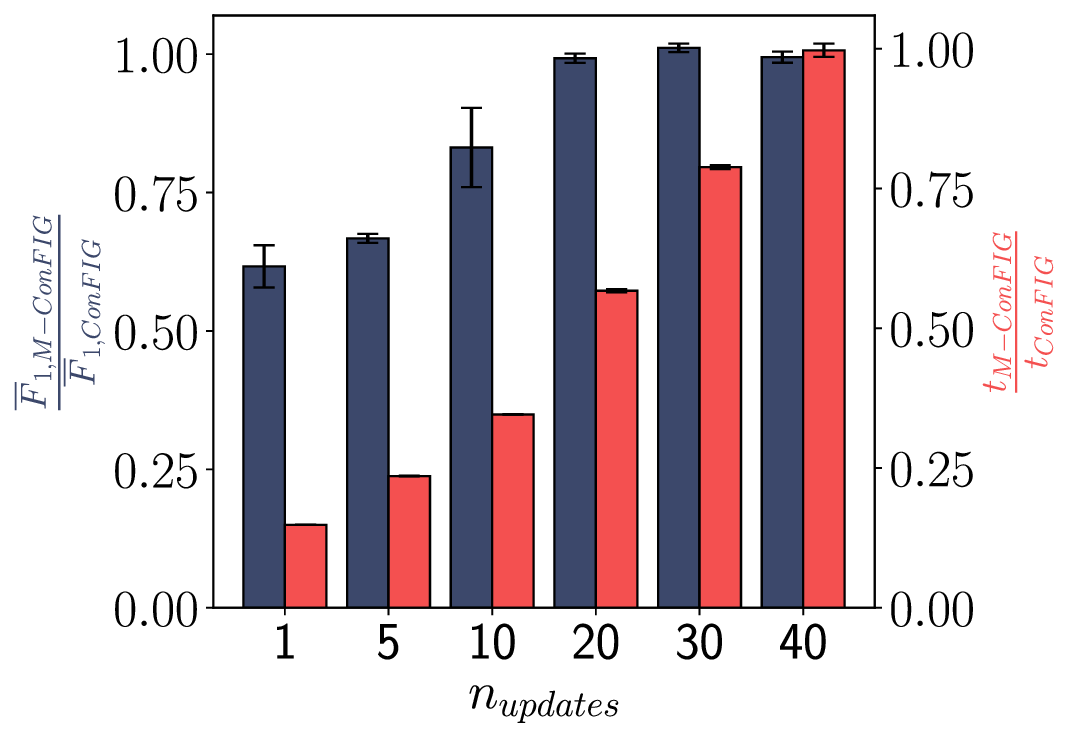}
	\caption{The relative {\color{f1} $\overline{F}_1$ value} and {\color{time}training time} of the M-ConFIG method with a varying number of tasks and gradient update steps in the CelebA experiment. }
	\label{fig:mtl_ablation}
\end{figure}

\paragraph{Limitations} A potential limitation of the proposed method is the potential performance degradation of the M-ConFIG method as the number of loss terms increases. Additionally, the memory consumption required to store the momentum for each gradient grows with both the size of the parameter space and the number of loss terms. As discussed in the MTL experiment, one approach to mitigate the performance degradation is to update not just a single gradient but a stochastic subset of gradients during each iteration, albeit at the cost of an increased computational budget. Nonetheless, further investigation and improvement of the M-ConFIG method's performance in scenarios with a large number of loss terms is a promising direction for future research, aiming to broaden the applicability of the ConFIG approach. Meanwhile, we also notice that there are still many challenges in the training of PINNs, e.g., chaotic problems,  that can not be addressed by eliminating the conflicts between different loss terms during the training. Future efforts, like improved network structures and imposed causality, will be required to further improve PINN training.
\section{Conclusions}
In this study, we have presented ConFIG, a method designed to alleviate training conflicts between loss terms with distinct behavior. The ConFIG method achieves conflict-free training by ensuring a positive dot product between the final gradient and each loss-specific gradient. Additionally, we introduce a momentum-based approach, replacing the gradient with alternately updated momentum for highly efficient iterations. In our experiments the proposed methods have shown superior performance compared with a wide range of existing training strategies for PINNs. The config variants even outperform SOTA methods for challenging MTL scenarios. 

\subsubsection*{Acknowledgments}
	Qiang Liu acknowledges the support from the China Scholarship Council (No.202206120036
	) for his Ph.D research at the Technical University of Munich.

\bibliography{reference}
\bibliographystyle{iclr2025_conference}

\newpage
\appendix
\section{Appendix}

\subsection{Convergence Proof for the ConFIG method\label{app:convergence_analysis}}
In this section, we will give the proof for convergence for both convex (Theorem 1) and non-convex landscapes (Theorem 2) for our ConFIG method.

\begin{theorem}
	Assume that (a) $m$ objectives $\mathcal{L}_1,\mathcal{L}_2\cdots, \mathcal{L}_m$ are convex and differentiable; (b)The gradient $\bm{g}= \nabla_\theta\mathcal{L}=\sum_{i=1}^m \nabla_\theta\mathcal{L}_i$ is Lipschitz continuous with constant $L > 0$. Then, update along ConFIG direction $\bm{g}_{\text{ConFIG}}$ with step size $\gamma \le \frac{2}{L}$ will converge to either a location where $|\bm{g}_{\text{ConFIG}}|=0$ or the optimal solution.
	\label{theorem:covex}
\end{theorem}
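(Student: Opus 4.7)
The plan is to reduce Theorem~\ref{theorem:covex} to the classical convergence analysis of gradient descent on a convex $L$-smooth objective, by exploiting one geometric identity that the ConFIG construction guarantees. From Eq.~\ref{eq:ConFIG_1}, the intermediate vector $\bm{v}=[\mathcal{U}(\bm{g}_1),\ldots,\mathcal{U}(\bm{g}_m)]^{-\top}\mathbf{1}_m$ satisfies $\mathcal{U}(\bm{g}_i)^\top \bm{v} = 1$ for every $i$ whenever the normalized-gradient matrix has full column rank (as guaranteed under the conditions in Appendix~\ref{app:fail_config}). Hence $\bm{g}_u = \bm{v}/|\bm{v}|$ satisfies $\bm{g}_i^\top \bm{g}_u = |\bm{g}_i|/|\bm{v}| > 0$ for all $i$, and substituting into Eq.~\ref{eq:ConFIG_2} gives $\bm{g}_{\text{ConFIG}} = |\bm{g}_{\text{ConFIG}}|\,\bm{g}_u$ with $|\bm{g}_{\text{ConFIG}}| = \sum_i \bm{g}_i^\top \bm{g}_u > 0$. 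The key identity
\begin{equation*}
\bm{g}^\top \bm{g}_{\text{ConFIG}} \;=\; \Bigl(\sum_{i=1}^m \bm{g}_i^\top \bm{g}_u\Bigr)\,|\bm{g}_{\text{ConFIG}}| \;=\; |\bm{g}_{\text{ConFIG}}|^2
\end{equation*}
follows immediately, and plays the role of $\bm{g}^\top\bm{g} = |\bm{g}|^2$ in the standard descent analysis.

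With this identity in hand I would apply the $L$-smoothness descent lemma to $\theta_{t+1} = \theta_t - \gamma\,\bm{g}_{\text{ConFIG},t}$ to obtain
\begin{equation*}
\mathcal{L}(\theta_{t+1}) \;\le\; \mathcal{L}(\theta_t) - \gamma\,\bm{g}_t^\top\bm{g}_{\text{ConFIG},t} + \tfrac{L\gamma^2}{2}|\bm{g}_{\text{ConFIG},t}|^2 \;=\; \mathcal{L}(\theta_t) - \gamma\Bigl(1 - \tfrac{L\gamma}{2}\Bigr)|\bm{g}_{\text{ConFIG},t}|^2.
\end{equation*}
For $\gamma \le 2/L$ the coefficient $\gamma(1 - L\gamma/2) \ge 0$ makes $\{\mathcal{L}(\theta_t)\}$ monotone non-increasing. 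Convexity gives a lower bound $\mathcal{L}^*$, so the loss sequence converges; telescoping across iterations yields $\sum_{t\ge 0} |\bm{g}_{\text{ConFIG},t}|^2 < \infty$, and in particular $|\bm{g}_{\text{ConFIG},t}| \to 0$.

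To deliver the two-case conclusion of the theorem I would then argue: either the iterates reach a point where $|\bm{g}_{\text{ConFIG}}| = 0$ (the first case, where the algorithm stalls by construction), or $|\bm{g}_{\text{ConFIG}}|$ stays strictly positive along the whole trajectory. In the latter case, I would use the Euclidean expansion
\begin{equation*}
|\theta_{t+1} - \theta^*|^2 = |\theta_t - \theta^*|^2 - 2\gamma\,\bm{g}_{\text{ConFIG},t}^\top(\theta_t - \theta^*) + \gamma^2 |\bm{g}_{\text{ConFIG},t}|^2
\end{equation*}
together with convexity ($\bm{g}_t^\top(\theta_t - \theta^*) \ge \mathcal{L}(\theta_t) - \mathcal{L}^*$) to show that $|\theta_t - \theta^*|$ stays bounded, and then combine $|\bm{g}_{\text{ConFIG},t}|\to 0$ with $\mathcal{L}(\theta_t) - \mathcal{L}^* \le |\bm{g}_t|\,|\theta_t - \theta^*|$ plus the monotone convergence of $\{\mathcal{L}(\theta_t)\}$ to conclude $\mathcal{L}(\theta_t) \to \mathcal{L}^*$.

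The main obstacle I expect is precisely this last argument. Unlike vanilla gradient descent, the Euclidean contraction is driven by $\bm{g}_{\text{ConFIG}}^\top(\theta_t - \theta^*)$ while the convex suboptimality bound involves $\bm{g}^\top(\theta_t - \theta^*)$; bridging the two requires a careful use of the conflict-free property $\bm{g}_i^\top\bm{g}_u > 0$ (so that the cross-term in the expansion retains the right sign) together with convexity summed over $i$. A second subtlety is the boundary case $\gamma = 2/L$, where the per-step loss decrease vanishes and convergence must be extracted through a slightly modified, non-strict form of the descent inequality. Once these two points are handled, the rest of the argument is completely standard.
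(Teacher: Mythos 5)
Your core argument is the same one the paper uses: the identity $\bm{g}^\top\bm{g}_{\text{ConFIG}}=|\bm{g}_{\text{ConFIG}}|^2$ (which the paper derives from the equal-projection and magnitude-rescaling properties of $\bm{g}_{\text{ConFIG}}$, and you derive directly from the pseudoinverse construction --- the two derivations are equivalent and both valid), followed by the $L$-smoothness descent lemma and the observation that $\gamma\le 2/L$ makes the per-step change $-\gamma(1-L\gamma/2)\,|\bm{g}_{\text{ConFIG}}|^2\le 0$. Up to that point you have reproduced the paper's proof, which in fact stops there: it establishes monotone non-increase of $\mathcal{L}$ and then asserts the dichotomy in the theorem statement without further argument.

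Where you go further --- trying to actually prove convergence to the optimum when $|\bm{g}_{\text{ConFIG}}|$ never vanishes --- the plan as sketched does not close, and the two steps you flag as ``obstacles'' are genuine gaps rather than routine details. First, the cross-term $\bm{g}_{\text{ConFIG},t}^\top(\theta_t-\theta^*)$ in the Euclidean expansion has no sign guarantee: convexity controls $\bm{g}_i^\top(\theta_t-\theta^*)$ and hence $\bm{g}^\top(\theta_t-\theta^*)$, but the conflict-free property only gives $\bm{g}_i^\top\bm{g}_{\text{ConFIG}}>0$, which says nothing about the angle between $\bm{g}_{\text{ConFIG}}$ and the displacement to the optimum; so boundedness of the iterates is not established by this route. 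Second, and more fundamentally, $|\bm{g}_{\text{ConFIG},t}|\to 0$ does not imply $|\bm{g}_t|\to 0$: since $|\bm{g}_{\text{ConFIG}}|=\sum_i|\bm{g}_i|\,S_c$, the ConFIG magnitude can vanish because the common cosine similarity $S_c$ collapses to zero while the individual gradients stay large --- this is precisely the ``location where $|\bm{g}_{\text{ConFIG}}|=0$'' escape clause in the theorem, and any completed proof must either route the argument into that clause or assume $S_c$ bounded away from zero, as the paper's non-convex Theorem 2 does via the constant $\alpha$. (Your telescoping step also needs $\gamma<2/L$ strictly, as you note.) In short, your proposal matches the paper on everything the paper actually proves; the extra work needed to reach ``the optimal solution'' rigorously is left open in your sketch, and, to be fair, in the paper's own proof as well.
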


\begin{proof}
	A Lipschitz continuous $\bm{g}$ with constant $L > 0$ gives a negative semi-definite matrix $\nabla^2_\theta-LI$. If we do a quadratic expansion of $\mathcal{L}$ around $\mathcal{L}(\theta)$, we will get
	\begin{equation}
		\begin{aligned}
			\mathcal{L}\left(\theta^{+}\right) & \leq \mathcal{L}(\theta)+\nabla \mathcal{L}(\theta)^{T}\left(\theta^{+}-\theta\right)+\frac{1}{2} \nabla^{2} \mathcal{L}(\theta)\left|\theta^{+}-\theta\right|^{2} \\
			& \leq \mathcal{L}(\theta)+\bm{g}^{T}\left(\theta^{+}-\theta\right)+\frac{1}{2} L\left|\theta^{+}-\theta\right|^{2}
			.
		\end{aligned}
	\end{equation}
	
	The $\theta^+$ is obtained through an update as $\theta^+=\theta-\gamma\bm{g}_{\text{ConFIG}}$, resulting in
	
	\begin{equation}
		\begin{aligned}
			\mathcal{L}\left(\theta^{+}\right) & \leq \mathcal{L}(\theta)
			-\gamma \bm{g}^{T} \bm{g}_{\text{ConFIG}}
			+\frac{1}{2} L \gamma^{2}\left|\bm{g}_{\text{ConFIG}}\right|^{2} \\
			&=\mathcal{L}(\theta)
			-\gamma (\sum_{i=1}^m \bm{g}_i^{T}) \bm{g}_{\text{ConFIG}}
			+\frac{1}{2} L \gamma^{2}\left|\bm{g}_{\text{ConFIG}}\right|^{2} \\
		\end{aligned}
		.
		\label{eq:ca_temp1}
	\end{equation}
	
	Our $\bm{g}_{\text{ConFIG}}$ has following 2 features:
	\begin{itemize}
		\item An equal and positive projection on each loss-specific gradient:  $\frac{\bm{g}_{i}^\top\bm{g}_{\text{ConFIG}}}{|\bm{g}_{i}||\bm{g}_{\text{ConFIG}}|}=\frac{\bm{g}_{j}^\top\bm{g}_{\text{ConFIG}}}{|\bm{g}_{j}||\bm{g}_{\text{ConFIG}}|}>0$, i.e., $S_c(\bm{g}_{i},\bm{g}_{\text{ConFIG}})=S_c(\bm{g}_{j},\bm{g}_{\text{ConFIG}}):=S_c>0$;
		\item A magnitude equals to the sum of projection length of each loss-specific gradient on it: $|\bm{g}_{\text{ConFIG}}|=\sum_{i=1}^m \bm{g}_i^\top \mathcal{U}(\bm{g}_{\text{ConFIG}})=\sum|\bm{g}_i| S_c$.
	\end{itemize}
	
	Given these two conditions, we have
	\begin{equation}
		\begin{aligned}
			(\sum_{i=1}^m\bm{g}_i^\top)\bm{g}_{\text{ConFIG}}
			&=\sum_{i=1}^m(\bm{g}^\top_i\bm{g}_{\text{ConFIG}})\\
			&=\sum_{i=1}^m(|\bm{g}_i| |\bm{g}_{\text{ConFIG}}|S_c)\\
			&=|\bm{g}_{\text{ConFIG}}|\sum_{i=1}^m(|\bm{g}_i|S_c)\\
			&=|\bm{g}_{\text{ConFIG}}|^2
		\end{aligned}
		,
	\end{equation}
	turning Eq.\ref{eq:ca_temp1} into
	\begin{equation}
		\mathcal{L}\left(\theta^{+}\right)
		\le \mathcal{L}(\theta)
		-\gamma |\bm{g}_{\text{ConFIG}}|^2
		+\frac{1}{2} L \gamma^{2}\left|\bm{g}_{\text{ConFIG}}\right|^{2} \\
	\end{equation}
	
	If $\gamma \le \frac{2}{L}$, we have $\frac{1}{2} L \gamma^{2}\left|\bm{g}_{\text{ConFIG}}\right|^{2}-\gamma |\bm{g}_{\text{ConFIG}}|^2 \le 0$, which finally gives $\mathcal{L}\left(\theta^{+}\right)\le\mathcal{L}\left(\theta\right)$. Note that when $\gamma \le \frac{2}{L}$, the equality holds and only holds when $|\bm{g}_{\text{ConFIG}}|=0$ where conflict direction doesn't exist and optimization along any direction will results in at least one of the losses increases.
\end{proof}

\begin{theorem}
	Assume that (a) $m$ objectives $\mathcal{L}_1,\mathcal{L}_2\cdots, \mathcal{L}_m$ are differentiable and possibly non-convex; (b)The gradient $\bm{g}= \nabla_\theta\mathcal{L}=\sum_{i=1}^m \nabla_\theta\mathcal{L}_i$ is Lipschitz continuous with constant $L > 0$. Then, update along ConFIG direction $\bm{g}_{\text{ConFIG}}$ with step size $\gamma \le \frac{2}{L}$ will converge to either a location where $|\bm{g}_{\text{ConFIG}}|=0$ or the stationary point.
\end{theorem}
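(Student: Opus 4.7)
The plan is to mimic the proof of Theorem 1 almost verbatim, since convexity is in fact never invoked there. The descent lemma
\[
\mathcal{L}(\theta^+) \le \mathcal{L}(\theta) + \bm{g}^\top(\theta^+ - \theta) + \tfrac{1}{2}L\,|\theta^+ - \theta|^2
\]
is a consequence of the $L$-Lipschitz continuity of $\bm{g}$ together with a second-order Taylor expansion with remainder; it requires only differentiability and the bound $\nabla^2 \mathcal{L} \preceq LI$ (interpreted in the distributional or mean-value sense when $\mathcal{L}$ is not twice differentiable), both of which hold under the hypotheses of Theorem 2 without any convexity assumption.

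Substituting the update rule $\theta^+ = \theta - \gamma \bm{g}_{\text{ConFIG}}$ and invoking the two defining geometric properties of the ConFIG direction---equal positive cosine similarity with every $\bm{g}_i$, and magnitude equal to the sum of the projection lengths $\sum_i |\bm{g}_i|\,\mathcal{S}_c(\bm{g}_i,\bm{g}_{\text{ConFIG}})$---collapses the linear term exactly as in Theorem 1, yielding
\[
\Bigl(\sum_{i=1}^m \bm{g}_i\Bigr)^{\!\!\top} \bm{g}_{\text{ConFIG}} = |\bm{g}_{\text{ConFIG}}|^2.
\]
These identities are purely structural consequences of the ConFIG construction and do not rely on convexity, so they transfer unchanged. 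The resulting per-step bound
\[
\mathcal{L}(\theta^+) \le \mathcal{L}(\theta) - \gamma\Bigl(1 - \tfrac{L\gamma}{2}\Bigr) |\bm{g}_{\text{ConFIG}}|^2
\]
is monotone non-increasing for any $\gamma \le 2/L$, with equality iff $|\bm{g}_{\text{ConFIG}}| = 0$.

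The only genuinely new ingredient relative to Theorem 1 is extracting a limit statement from this descent inequality in the absence of a global optimum. Assuming $\mathcal{L}$ is bounded below (implicit in speaking of a stationary point), summing the per-step descent telescopes to
\[
\gamma\Bigl(1 - \tfrac{L\gamma}{2}\Bigr) \sum_{t=0}^{\infty} |\bm{g}_{\text{ConFIG}}^t|^2 \;\le\; \mathcal{L}(\theta_0) - \inf_{\theta}\mathcal{L}(\theta) \;<\; \infty,
\]
which forces $|\bm{g}_{\text{ConFIG}}^t| \to 0$. The hardest part will be tying $|\bm{g}_{\text{ConFIG}}| \to 0$ back to stationarity of the aggregate loss: because each projection $\bm{g}_i^\top \bm{g}_{\text{ConFIG}}$ is non-negative and their sum equals $|\bm{g}_{\text{ConFIG}}|^2 \to 0$, every individual projection must vanish in the limit. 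Provided the ConFIG operator remains non-degenerate along the trajectory (the feasibility condition established elsewhere in the appendix), this forces $\sum_i \bm{g}_i = 0$, i.e.\ a stationary point of the total loss. The disjunctive phrasing of the theorem conveniently absorbs any remaining degenerate limit in which the pseudoinverse collapses without the $\bm{g}_i$'s themselves vanishing, so no further case analysis is needed to match the stated conclusion.
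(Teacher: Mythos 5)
Your proposal is correct and follows essentially the same route as the paper's proof: descent lemma, the two structural identities of the ConFIG update, telescoping under boundedness below, and a non-degeneracy assumption on the cosine similarity (the paper posits $\min_{k} S_c^k = \alpha > 0$) to pass from $|\bm{g}_{\text{ConFIG}}|\to 0$ to stationarity of the total loss via $|\bm{g}_{\text{ConFIG}}| = \sum_i |\bm{g}_i|\,S_c \ge \alpha\,|\sum_i \bm{g}_i| = \alpha|\bm{g}|$. The only cosmetic differences are that the paper averages to get $\min_{1\le k\le K}|\bm{g}^k|^2 = O(1/K)$ rather than convergence of the whole sequence, and that your detour through the vanishing of individual projections is better replaced by the triangle-inequality chain just quoted (vanishing projections alone do not force $|\bm{g}_i|\to 0$, since they vanish trivially once $|\bm{g}_{\text{ConFIG}}|\to 0$), which you already have on hand from the magnitude identity.
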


\begin{proof}
	Using the descent lemma, an update on $\bm{g}_{\text{ConFIG}}$ with a step size $\gamma \le \frac{2}{L}$ gives us
	\begin{equation}
		\mathcal{L}\left(\theta^{k+1}\right) \le \mathcal{L}\left(\theta^{k}\right)-\frac{\gamma}{2}|\bm{g}_{\text{ConFIG}}^k|^2
		,
	\end{equation}
	where the superscript is the index of optimization iterations. 
	
	Due to $|\bm{g}_{\text{ConFIG}}|=\sum_{i=1}^m|\bm{g}_i| S_c$, $|\bm{g}|=|\sum_{i=1}^m\bm{g}_i|$ and the triangle inequality where $\sum_{i=1}^m|\bm{g}_i| \ge |\sum_{i=1}^m\bm{g}_i| $, we have
	\begin{equation}
		\begin{aligned}
			\mathcal{L}\left(\theta^{k+1}\right) 
			&\le \mathcal{L}\left(\theta^{k}\right)-\frac{\gamma}{2}|\bm{g}_{\text{ConFIG}}^k|^2
			\\
			& = \mathcal{L}\left(\theta^{k}\right)-\frac{\gamma}{2}(\sum_{i=1}^m|\bm{g}_i^k|)^2 S_c^{k,2}
			\\
			&\le \mathcal{L}\left(\theta^{k}\right)-\frac{\gamma}{2}|\sum_{i=1}^m\bm{g}_i^k|^2 S_c^{k,2}
			\\
			& = \mathcal{L}\left(\theta^{k}\right)-\frac{\gamma}{2}|\bm{g}^k|^2 S_c^{k,2}
		\end{aligned}
		.
	\end{equation}
	Summing this inequality over $k=1, 2,\cdots ,K$ results in
	\begin{equation}
		\sum_{k=1}^{K}|\bm{g}^k|^2 
		\le 
		\frac{2}{\gamma}\sum_{k=1}^{K}\frac{\mathcal{L}\left(\theta^{k}\right)-\mathcal{L}\left(\theta^{k+1}\right)}{S_c^{k,2}}
		.
	\end{equation}
	By defining $\min_{1\le k \le K}(S_c^k)=\alpha > 0$, we have
	\begin{equation}
		\sum_{k=1}^{K}|\bm{g}^k|^2 
		\le 
		\frac{2}{\gamma \alpha^2}\left[\mathcal{L}\left(\theta^{0}\right)-\mathcal{L}\left(\theta^{K+1}\right)\right]
		.
		\label{eq:ca_temp2}
	\end{equation}
	Finally, averaging Eq.\ref{eq:ca_temp2} on each side leads to
	\begin{equation}
		\min_{1\le k \le K} |\bm{g}^k|^2  
		\le 
		\frac{1}{K} \sum_{k=1}^{K}|\bm{g}^k|^2 
		\le 
		\frac{2}{\gamma \alpha^2 K}\left[\mathcal{L}\left(\theta^{0}\right)-\mathcal{L}\left(\theta^{K+1}\right)\right].
	\end{equation}
	As $K \rightarrow \infty$, either the $|\bm{g}_{\text{ConFIG}}|=0$ is obtained and the optimization stops, or the last term in the inequality goes $0$, implying that the minimal gradient norm also goes to zero, i.e., a stationary point. 
\end{proof}

\FloatBarrier

\subsection{Comparison between ConFIG, PCGrad and IMTL-G\label{app:compare}}
\begin{algorithm}
	\caption{PCGrad method}\label{alg:pcgrad}
	\begin{algorithmic}
		\Require $\theta$ (network weights),  $[\mathcal{L}_1,\mathcal{L}_2,\cdots \mathcal{L}_m]$ (loss functions)
		\State $[\bm{g}_1,\bm{g}_2,\cdots,\bm{g}_m]\gets [\nabla_{\theta}\mathcal{L}_1,\nabla_{\theta}\mathcal{L}_2,\cdots,\nabla_{\theta}\mathcal{L}_m]$
		\State $[\bm{\hat{g}}_1,\bm{\hat{g}}_2,\cdots,\bm{\hat{g}}_m]\gets [\bm{g}_1,\bm{g}_2,\cdots,\bm{g}_m]$
		\For{$i\gets1$ to $m$}
		\For{$j \overset{uniformly} {\sim} [1,2,\cdots m] \text{ in random order}$}
		\If{$i \neq j \And \bm{\hat{g}}_i^\top\bm{g}_j < 0$} 
		\State $\bm{\hat{g}}_i=\mathcal{O}(\bm{g}_j,\bm{\hat{g}}_i)$
		\EndIf
		\EndFor
		\State $\bm{g}_{\text{PCGrad}}=\sum_{i=1}^j\bm{\hat{g}}_i$
		\EndFor
	\end{algorithmic}
\end{algorithm}
\paragraph{PCGrad method.} Algorithm\rx{alg:pcgrad} outlines the complete algorithm of the PCGrad method. The core idea is to use $\mathcal{O}(\bm{g}_2,\bm{g}_1)$ to replace $\bm{g}_1$ when $\bm{g}_1$ conflicts with $\bm{g}_2$ i.e., $\bm{g}_1^\top\bm{g}_2<0$. Since $\mathcal{O}(\bm{g}_2,\bm{g}_1)^\top\bm{g}_2=0$, the conflict is resolved. In the simple case of two loss terms, this results in the final gradient $\bm{g}_{\text{PCGrad}}=\mathcal{O}(\bm{g}_1,\bm{g}_2)+\mathcal{O}(\bm{g}_2,\bm{g}_1)$. This final update gradient does not conflict with all loss-specific gradients, as $\left[\mathcal{O}(\bm{g}_1,\bm{g}_2)+\mathcal{O}(\bm{g}_2,\bm{g}_1)\right]^\top \bm{g}_1=\mathcal{O}(\bm{g}_2,\bm{g}_1)^\top\bm{g}_1=|\bm{g}_1|^2-\frac{(\bm{g}_1^\top\bm{g}_2)^2}{|\bm{g}_2^2|}\ge |\bm{g}_1|^2-\frac{|\bm{g}_1|^2 |\bm{g}_2|^2}{|\bm{g}_2^2|}=0$.

However, the PCGrad method can not guarantee a conflict-free update when there are more loss-specific gradients. To illustrate this, let us consider the case of three vectors, $\bm{g}_1,\bm{g}_2,\bm{g}_3$, where each vector conflicts with the other. According to the PCGrad method, we first replace $\bm{g}_1$ with $\mathcal{O}(\bm{g}_2,\bm{g}_1)$, i.e., $\hat{\bm{g}}_1=\mathcal{O}(\bm{g}_2,\bm{g}_1)$ so that $\hat{\bm{g}}_1^\top\bm{g}_2=0$. If the updated $\hat{\bm{g}}_1$ is still conflict with $\bm{g}_3$, we will then further modify $\hat{\bm{g}}_1$ as $\hat{\bm{g}}_1=\mathcal{O}(\bm{g}_3,\hat{\bm{g}}_1)$. However, now we can only guarantee that $\hat{\bm{g}}_1^\top\bm{g}_3=0$, but we can not guarantee that $\hat{\bm{g}}_1$ is still not conflicting with $\bm{g}_2$. 

The PCGrad method introduces random selection to mitigate this drawback, as shown in  Algorithm\rx{alg:pcgrad}, but this does not fundamentally solve the problem. Fig.\rx{fig:faile_pcgrad} illustrate a simple case for the failure of the PCGrad method where $\bm{g}_1=[1.0,0,0.1]$, $\bm{g}_2=[-0.5,\sqrt{3}/2,0.1]$, and $\bm{g}_3=[-0.5,-\sqrt{3}/2,0.1]$. The final update vector for the PCGrad method is $\bm{g}_{\text{PCGrad}}=[-0.351, -0.203,  0.658]$, which conflicts with $\bm{g}_1$ while the update direction of our ConFIG method is $\bm{g}_{\text{ConFIG}}=[0,0,0.3]$, which is not conflict with any loss-specific gradients.  

\begin{figure}[htbp]
	\centering
	\includegraphics[scale=0.3]{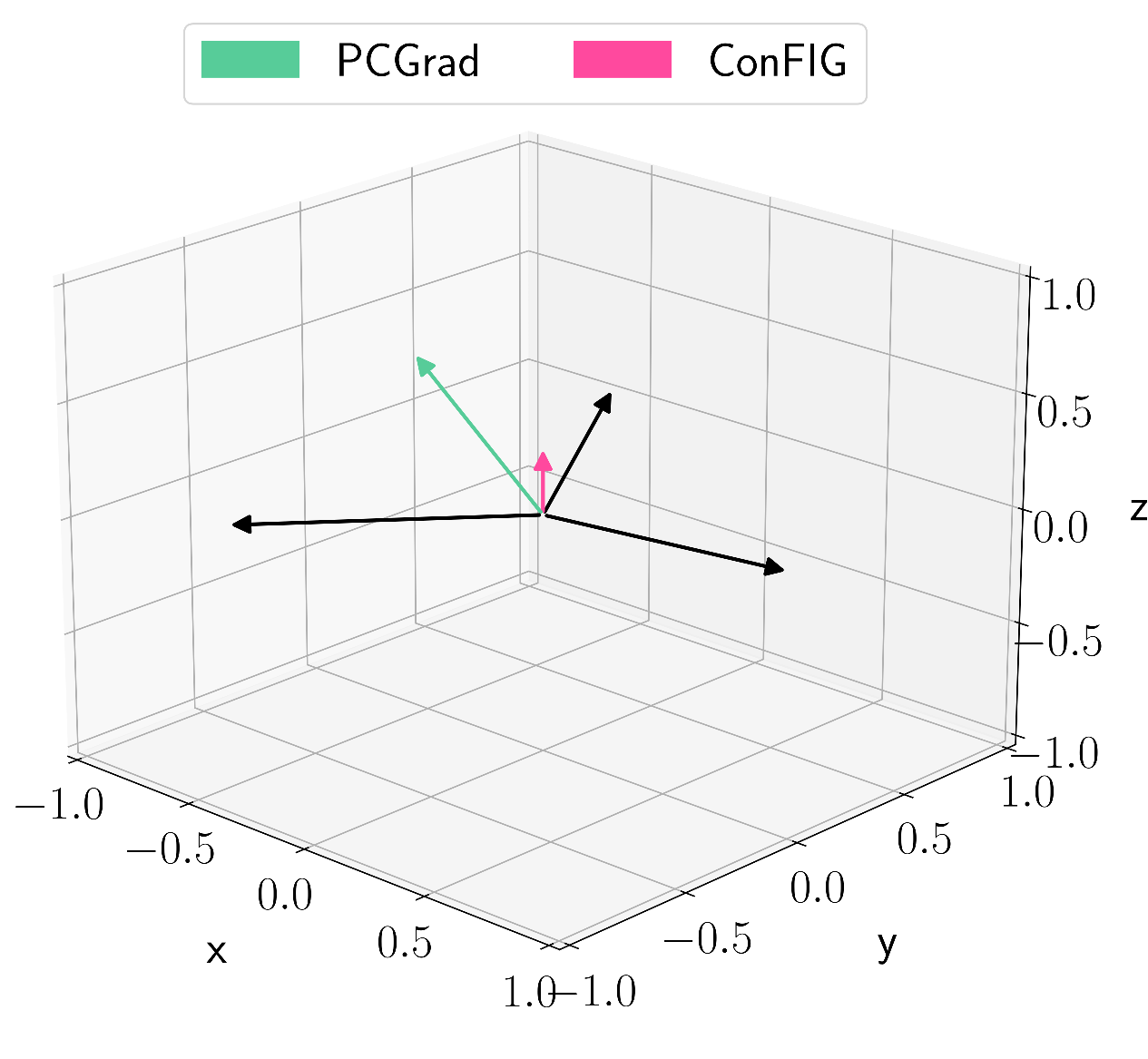}
	\caption{A simple failure model for PCGrad method.}
	\label{fig:faile_pcgrad}
\end{figure}

Another feature of the PCGrad method is that the projection length of the final update gradient $\bm{g}_{\text{PCGrad}}$ on each loss-specific gradient is not equal, i.e., $\bm{g}_{\text{PCGrad}}^\top \mathcal{U}(\bm{g}_i )\neq \bm{g}_{\text{PCGrad}}^\top \mathcal{U}(\bm{g}_j)$, and $\bm{g}_{\text{PCGrad}}$ is usually biased to the loss-specific gradient with higher magnitude. We can illustrate this point with two gradient examples:

\begin{align}
	\begin{split}
		\bm{g}_{\text{PCGrad}}^\top \mathcal{U}(\bm{g}_1) 
		&= (\mathcal{O}(\bm{g}_1,\bm{g}_2)+\mathcal{O}(\bm{g}_2,\bm{g}_1))^\top\frac{\bm{g}_1}{|\bm{g}_1|} 
		\\& = \mathcal{O}(\bm{g}_2,\bm{g}_1)^\top \frac{\bm{g}_1}{|\bm{g}_1|} 
		\\& = \left(\bm{g}_1-\frac{\bm{g}_1^\top\bm{g}_2}{|\bm{g}_2|^2}\bm{g}_2\right)^\top \frac{\bm{g}_1}{|\bm{g}_1|} 
		\\& = |\bm{g}_1|-\frac{(\bm{g}_1^\top\bm{g}_2)^2}{|\bm{g}_2|^2|\bm{g}_1|} 
		\\& = |\bm{g}_1|[1-\mathcal{S}_c^2(\bm{g}_1,\bm{g}_2)].
	\end{split}
	\label{eq:unit_orthogonal}
\end{align}
Similarly,
\begin{equation}
	\bm{g}_{\text{PCGrad}}^\top \mathcal{U}(\bm{g}_2) 
	=
	|\bm{g}_2|[1-\mathcal{S}_c^2(\bm{g}_1,\bm{g}_2)].
\end{equation}
Thus,
\begin{equation}
	\frac{
		\bm{g}_{\text{PCGrad}}^\top \mathcal{U}(\bm{g}_1) 
	}{
		\bm{g}_{\text{PCGrad}}^\top \mathcal{U}(\bm{g}_2) 
	}
	=\frac{
		|\bm{g}_1|
	}{
		|\bm{g}_2|
	}
\end{equation}

A similar conclusion also holds for the case with more loss-specific gradients. The essence here is the magnitude of the orthogonal operator which is proportional to the magnitude of loss-specific gradients:
\begin{align}
	\begin{split}
		|\mathcal{O}(\bm{g}_i,\bm{g}_j)| & = \sqrt{(\bm{g}_j-\frac{\bm{g}_i^\top\bm{g}_j}{|\bm{g}_i|^2}\bm{g}_i )^2}
		\\
		&=\sqrt{(
			|\bm{g}_j|^2+\frac{(\bm{g}_i^\top\bm{g}_j)^2}{|\bm{g}_i|^2}-
			2\frac{\bm{g}_i^\top\bm{g}_j}{|\bm{g}_i|^2}\bm{g}_i^\top\bm{g}_j
			)}
		\\
		&=
		\sqrt{
			|\bm{g}_j|^2-\frac{(\bm{g}_i^\top\bm{g}_j)^2}{|\bm{g}_i|^2}
		} 
		\\
		&=
		\sqrt{
			|\bm{g}_j|^2-|\bm{g}_j|^2\mathcal{S}_c^2(\bm{g}_i,\bm{g}_j) 
		} 
		\\
		&=
		|\bm{g}_j|
		\sqrt{1-\mathcal{S}_c^2(\bm{g}_i,\bm{g}_j)}.
	\end{split}
	\label{eq:mag_ort}
\end{align}
When summing all these orthogonal values together, the final gradient will be biased to the $\mathcal{O}(\bm{g}_i,\bm{g}_j)$ with a larger gradient. Meanwhile, since $\mathcal{O}(\bm{g}_i,\bm{g}_j)$ is biased towards $\bm{g}_j$ rather than $\bm{g}_i$(due to $\mathcal{S}_c(\bm{g}_i,\mathcal{O}(\bm{g}_i,\bm{g}_j))=0$), the final gradient will favor to the loss-specific gradient with larger magnitude.

\paragraph{IMTL-G method.} In contrast to the PCGrad method the IMTL-G method is designed to guarantee an equal projection length, i.e.,
\begin{equation}
	\bm{g}_{\text{IMTL-G}}\cdot\mathcal{U}(g_i)^\top = \bm{g}_{\text{IMTL-G}}\cdot\mathcal{U}(g_j)^\top
\end{equation}
To achieve this goal, the IMTL-G method rescales the magnitude of each loss-specific gradient with calculated weights:
\begin{equation}
	\bm{g}_{\text{IMTL-G}}=\sum_{i=1}^m \alpha_i \bm{g}_i
\end{equation}
\begin{equation}
	\begin{cases}
		\left[\alpha_2,\alpha_3\cdots\alpha_m\right]=\bm{g}_1\cdot\bm{U}^\top\cdot\left(\bm{D}\cdot\bm{U}^\top\right)^{-1}\\
		\alpha_1=1-\sum_{i=2}^m \alpha_i\\
		\bm{U}^\top=[\mathcal{U}(\bm{g}_1^\top)-\mathcal{U}(\bm{g}_2^\top),\mathcal{U}(\bm{g}_1^\top)-\mathcal{U}(\bm{g}_3^\top),\cdots, \mathcal{U}(\bm{g}_1^\top)-\mathcal{U}(\bm{g}_m^\top)]\\
		\bm{D}^\top=[\bm{g}_1^\top-\bm{g}_2^\top,\bm{g}_1^\top-\bm{g}_3^\top,\cdots,\bm{g}_1^\top-\bm{g}_m^\top]
	\end{cases}
	\label{eq:weight_imtlg}
\end{equation}
For the two gradient scenario, this will give $[\alpha_1,\alpha_2]=[|\bm{g}_2|/(|\bm{g}_1|+|\bm{g}_2|),|\bm{g}_1|/(|\bm{g}_1|+|\bm{g}_2|)]$, resulting in a same magnitude for two loss-specific gradients as $(|\bm{g}_1||\bm{g}_2|)/(|\bm{g}_1|+|\bm{g}_2|)$. The final gradient vector doesn't conflict with $\bm{g}_1$ and $\bm{g}_2$, as 

\begin{align}
	\begin{split}
		\bm{g}_{\text{IMTL-G}}^\top \mathcal(U)(\bm{g}_1) 
		&= \frac{|\bm{g}_1||\bm{g}_2|}{|\bm{g}_1|+|\bm{g}_2|}
		(\frac{\bm{g}_1}{|\bm{g}_1|} +\frac{\bm{g}_2}{|\bm{g}_2|})^\top\frac{\bm{g}_1}{|\bm{g}_1|} 
		\\& = \frac{|\bm{g}_1||\bm{g}_2|}{|\bm{g}_1|+|\bm{g}_2|}(1+\mathcal{S}_c(\bm{g}_1,\bm{g}_2) )
		\\ &\ge0
	\end{split}
\end{align}

As mentioned in Appendix.\rx{sec:special_cases}, the direction of the final update gradient for IMTL-G and our ConFIG is the same when there are only two vectors, but their magnitudes differ. For the Adam optimizer, the gradient's absolute magnitude is unimportant since it ultimately rescales the gradient elements. The crucial aspect is how the gradient changes in both direction and magnitude. When the gradient changes rapidly, Adam's corresponding learning rate will decrease, as discussed in Appendix.\rx{app:momentum}. Since the direction of the final gradient in IMTL-G and ConFIG is the same, how their magnitudes change ultimately affects their performance with the Adam optimizer. The magnitude of the final gradient with two gradients are  $2\sqrt{\left[1+\mathcal{S}_c(\bm{g}_1),\bm{g}_2)\right]/2}(|\bm{g}_1||\bm{g}_2|)/(|\bm{g}_1|+|\bm{g}_2|)$ and $\sqrt{\left[1+\mathcal{S}_c(\bm{g}_1),\bm{g}_2)\right]/2}(|\bm{g}_1|+|\bm{g}_2|)$ for IMTL-G and ConFIG method, respectively. Thus,
\begin{equation}
	\frac{|\bm{g}_{\text{IMTL-G}}|}{|\bm{g}_{\text{ConFIG}}|}
	=
	\frac{
		2(|\bm{g}_1||\bm{g}_2|)/(|\bm{g}_1|+|\bm{g}_2|)
	}{
		|\bm{g}_1|+|\bm{g}_2|
	}
\end{equation}
A significant difference arises when the magnitudes of $\bm{g}_1$ and $\bm{g}_2$ are different. For example, if $\bm{g}_1 >> \bm{g}_2$, $(|\bm{g}_1||\bm{g}_2|)/(|\bm{g}_1|+|\bm{g}_2|)\sim \bm{g}_2$ while $|\bm{g}_1|+|\bm{g}_2|\sim \bm{g}_1$. This means that the magnitude of $\bm{g}_{\text{IMTL-G}}$ tracks the changes in the magnitude of the smaller vector, whereas our ConFIG method tracks the magnitude of the larger vector. This causes the difference between IMTL-G and our ConFIG methods in two loss-term vector situations.

\begin{figure}[htbp]
	\centering
	\includegraphics[scale=0.3]{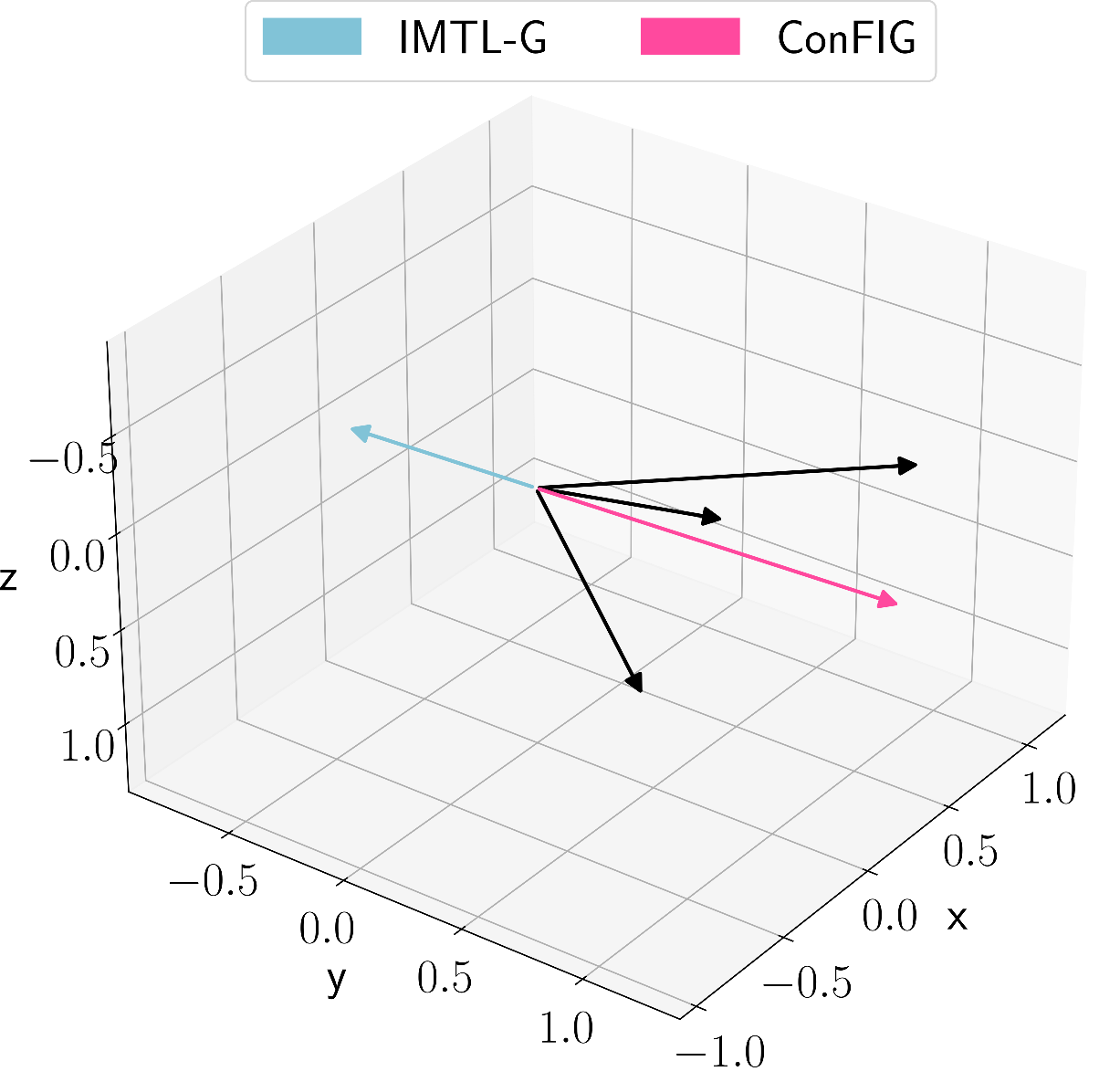}
	\caption{A simple failure model for IMTL-G method.}
	\label{fig:faile_imtlg}
\end{figure}

Another major difference between IMTL-G and our ConFIG methods occurs when more gradient terms are involved. The coefficient $\alpha_i$ of IMTL-G method might have negative values since Eq.\rx{eq:weight_imtlg} only guarantees that $\bm{g}_{\text{IMTL-G}}\cdot\mathcal{U}(g_i)^\top = \bm{g}_{\text{IMTL-G}}\cdot\mathcal{U}(g_j)^\top$ but not $\bm{g}_{\text{IMTL-G}}\cdot\mathcal{U}(g_i)^\top = \bm{g}_{\text{IMTL-G}}\cdot\mathcal{U}(g_j)^\top \ge 0$. It is easy to find such a situation where the negative coefficient will result in a conflicting update. Fig.\rx{fig:faile_imtlg} illustrates such an example where $\bm{g}_1=[0.0412, 0.4295, 0.9394]$, $\bm{g}_2=[0.3571, 0.5491, 0.1414]$, and $[0.9823, 0.9361, 0.0552]$. Although both our ConFIG and IMTL-G ensure $\bm{g}_{\text{IMTL-G}}\cdot\mathcal{U}(g_i)^\top = \bm{g}_{\text{IMTL-G}}\cdot\mathcal{U}(g_j)^\top$, they are in exactly opposite direction. The obtained update direction for our ConFIG method is $\bm{g}_{\text{ConFIG}}=[1.5844, 0.4850, 1.4005]$ and the cosine similarity between $\bm{g}_{\text{ConFIG}}$ and each loss-specific gradient is 0.7086 while IMTL-G method results in a final update gradient of $\bm{g}_{\text{IMTL-G}}=[-0.7429, -0.2274, -0.6566]$ with a cosine similarity of -0.7086.

\FloatBarrier

\subsection{Existence of Solutions for ConFIG \label{app:fail_config}}

In Appendix\rx{app:compare}, we illustrate instances where the PCGrad and IMTL-G methods fail. This raises the question of whether ConFIG likewise exhibits failure modes that could impede its performance.

Our ConFIG method is equivalent to solving the following system of linear equations:
\begin{equation}
	[\mathcal{U}(\bm{g}_1),\mathcal{U}(\bm{g}_2),\cdots, \mathcal{U}(\bm{g}_m)]^{\top} \bm{x}=\mathbf{1}_m,
	\label{eq:linear_system}
\end{equation}
where the solution $\bm{x}$ is the conflict-free gradient that should be obtained. Our ConFIG method will fail if Eq.\rx{eq:linear_system} has no solution, i.e., if a conflict-free direction in the parameter space does not exist. If we use $\bm{A}$ and $\bm{b}$ to denote $[\mathcal{U}(\bm{g}_1),\mathcal{U}(\bm{g}_2),\cdots, \mathcal{U}(\bm{g}_m)]^\top$ and $\mathbf{1}_m$, respectively,
Eq.\rx{eq:linear_system} will not have a solution if $R(\bm{A}) < R[(\bm{A}|\bm{b})]$ where $R$ is the rank of a matrix and $(\bm{A}|\bm{b})$ is the augmented matrix of $\bm{A}$ and $\bm{b}$. Meanwhile, since $R(\bm{b})$=1, and $R(\bm{A})\le R[(\bm{A}|\bm{b})]\le R(\bm{A})+R(\bm{b})$, i.e., $R(\bm{A})\le R[(\bm{A}|\bm{b})]\le R(\bm{A})+1$, $R[(\bm{A}|\bm{b})]$ can only be either $R(\bm{A})$ or $R(\bm{A})+1$. Thus, Eq.\rx{eq:linear_system} doesn't have solutions when $R[(\bm{A}|\bm{b})]=R(\bm{A})+1$.

Assume that each loss-specific gradient has $k$ elements, then the shape of $\bm{A}$ and $\bm{b}$ is $m\times k$ and $m \times (k+1)$, respectively. Thus, we have: 
\begin{itemize}
	\item If $\bm{A}$ is full rank:
	\begin{itemize}
		\item If $m>k$: $R(\bm{A})=k$; 
		\begin{itemize}
			\item if $(\bm{A}|\bm{b})$ is full rank: $R[(\bm{A}|\bm{b})]=k+1$. Eq.\rx{eq:linear_system} {\color{mpink} doesn't have solutions.}
			\item if $(\bm{A}|\bm{b})$ is not full rank:
			$R[(\bm{A}|\bm{b})]=k$. Eq.\rx{eq:linear_system} {\color{dgreen} has solutions.}
		\end{itemize}
		\item If $m\le k$: $R(\bm{A})=m$. 
		Since $ R[(\bm{A}|\bm{b})] \le min(m,k+1)=m$, thus $R[(\bm{A}|\bm{b})]\neq R(\bm{A}) +1=m +1$. Eq.\rx{eq:linear_system} {\color{dgreen} has solutions.}
	\end{itemize}
	\item If $\bm{A}$ is not full rank, i.e., $R(\bm{A})=p < min(m,k)$,
	we will always have $p+1\le min(m,k+1)$. Thus, there is no guarantee that $R[(\bm{A}|\bm{b})]\neq m +1$. Thus Eq.\rx{eq:linear_system} {\color{mpink}doesn't have solutions} if $R[(\bm{A}|\bm{b})]=p+1$, and it {\color{dgreen}has solutions} if $R[(\bm{A}|\bm{b})]=p$. 
\end{itemize}

In practice, the number of parameters in a neural network is always much larger than the number of loss terms, i.e., $k>>m$. In this case, a full rank $\bm{A}$ means $R(\bm{A})=m$, i,e, all the loss-specific gradients are linearly independent. This is true in most situations because if the loss-specific gradients were not linearly independent, we would have $\bm{g}_j=\sum_{i=1,i\neq j}^m \alpha_i \bm{g}_i$, where $\alpha_i$ is the coefficient. Since $\bm{g}_i=\partial \mathcal{L}_i/\partial \theta$, this would also imply $\mathcal{L}_j=\sum_{i=1,i\neq j}^m \alpha_i\mathcal{L}_i$. However, the relationship between loss functions is usually not linear in practice.

To summarize, during the training procedure, we almost always have $k>>m$ and a full rank $\bm{A}$, ensuring the existence of a conflict-free direction. It is also worth pointing out that our ConFIG method can still provide a good direction if a conflict-free direction does not exist since the Moore–Penrose pseudoinverse will offer a least squares solution for Eq.\rx{eq:linear_system}, making it a “conflict-free” as possible.

\FloatBarrier

\subsection{Proof of the Equivalence \label{app:proof_2vector}}
To prove the equivalence between Eq.\rx{eq:ConFIG_2},\rx{eq:ConFIG_1} and Eq.\rx{eq:ConFIG_2D2},\rx{eq:ConFIG_2D1} is equivalent to prove that
$\mathcal{U}\left[\mathcal{O}(\bm{g}_{1},\bm{g}_{2})\right]+\mathcal{U}\left[\mathcal{O}(\bm{g}_{2},\bm{g}_{1})\right]
=k[\mathcal{U}(\bm{g}_1),\mathcal{U}(\bm{g}_2) ]^{-\top}\mathbf{1} 
_m$ where k>0.
Thus, we can first calculate
\begin{align}
	\begin{split}
		[\mathcal{U}(\bm{g}_1),\mathcal{U}(\bm{g}_2) ]^{\top}
		\left[
		\mathcal{U}\left[\mathcal{O}(\bm{g}_{1},\bm{g}_{2})\right]
		+\mathcal{U}\left[\mathcal{O}(\bm{g}_{2},\bm{g}_{1})\right]
		\right]
		&=
		\begin{bmatrix}
			\mathcal{U}(\bm{g}_1)^\top\left[\mathcal{U}\left[\mathcal{O}(\bm{g}_{1},\bm{g}_{2})\right]
			+\mathcal{U}\left[\mathcal{O}(\bm{g}_{2},\bm{g}_{1})\right]\right]
			\\
			\mathcal{U}(\bm{g}_2)^\top\left[\mathcal{U}\left[\mathcal{O}(\bm{g}_{1},\bm{g}_{2})\right]
			+\mathcal{U}\left[\mathcal{O}(\bm{g}_{2},\bm{g}_{1})\right]\right]
		\end{bmatrix}
		\\&=
		\begin{bmatrix}
			\mathcal{U}(\bm{g}_1)^\top\mathcal{U}\left[\mathcal{O}(\bm{g}_{2},\bm{g}_{1})\right]
			\\
			\mathcal{U}(\bm{g}_2)^\top\mathcal{U}\left[\mathcal{O}(\bm{g}_{1},\bm{g}_{2})\right]
		\end{bmatrix}.
	\end{split}
	\label{eq:equilivance_1}
\end{align}
Then, from Eq.\rx{eq:unit_orthogonal} we can obtain
\begin{equation}
	\mathcal{U}(\bm{g}_1)^\top\mathcal{O}(\bm{g}_{2},\bm{g}_{1})
	=|\bm{g}_1|[1-\mathcal{S}_c^2(\bm{g}_1,\bm{g}_2)].
\end{equation}
Thus
\begin{equation}
	\mathcal{U}(\bm{g}_1)^\top\mathcal{U}\left[\mathcal{O}(\bm{g}_{2},\bm{g}_{1})\right]
	=\frac{
		|\bm{g}_1|[1-\mathcal{S}_c^2(\bm{g}_1,\bm{g}_2)]}{
		|\mathcal{O}(\bm{g}_{2},\bm{g}_{1})|
	}.
	\label{eq:dot_ort_unit}
\end{equation}
From Eq.\rx{eq:mag_ort} we can get
\begin{equation}
	|\mathcal{O}(\bm{g}_{2},\bm{g}_{1})|
	=
	|\bm{g}_{1}|\sqrt{1-\mathcal{S}_c^2(\bm{g}_{1},\bm{g}_{2})}.
	\label{eq:mag_ort12}
\end{equation}
Put Eq.\rx{eq:mag_ort12} back to Eq.\rx{eq:dot_ort_unit}, we get
\begin{equation}
	\mathcal{U}(\bm{g}_1)^\top\mathcal{U}\left[\mathcal{O}(\bm{g}_{2},\bm{g}_{1})\right]
	=
	\sqrt{1-\mathcal{S}_c^2(\bm{g}_{1},\bm{g}_{2})}.
	\label{eq:dot_ort_unit_full}
\end{equation}
Similarly, we can get
\begin{equation}
	\mathcal{U}(\bm{g}_2)^\top\mathcal{U}\left[\mathcal{O}(\bm{g}_{1},\bm{g}_{2})\right]
	=
	\sqrt{1-\mathcal{S}_c^2(\bm{g}_{1},\bm{g}_{2})}.
	\label{eq:dot_ort_unit_full_2}
\end{equation}
Put Eq.\rx{eq:dot_ort_unit_full} and Eq.\rx{eq:dot_ort_unit_full_2} back to Eq.\rx{eq:equilivance_1}
\begin{equation}
	[\mathcal{U}(\bm{g}_1),\mathcal{U}(\bm{g}_2) ]^{\top}
	\left[
	\mathcal{U}\left[\mathcal{O}(\bm{g}_{1},\bm{g}_{2})\right]
	+\mathcal{U}\left[\mathcal{O}(\bm{g}_{2},\bm{g}_{1})\right]
	\right]
	=[\sqrt{1-\mathcal{S}_c^2(\bm{g}_{1},\bm{g}_{2})},\sqrt{1-\mathcal{S}_c^2(\bm{g}_{1},\bm{g}_{2})}]^\top,
\end{equation}
which is
\begin{equation}
	\mathcal{U}\left[\mathcal{O}(\bm{g}_{1},\bm{g}_{2})\right]+\mathcal{U}\left[\mathcal{O}(\bm{g}_{2},\bm{g}_{1})\right]
	=\sqrt{1-\mathcal{S}_c^2(\bm{g}_{1},\bm{g}_{2})}[\mathcal{U}(\bm{g}_1),\mathcal{U}(\bm{g}_2) ]^{-\top}\mathbf{1} 
	_m.  
\end{equation}
\FloatBarrier

\FloatBarrier

\subsection{M-ConFIG and Momentum Strategies\label{app:momentum}}

In the current study, we use the Adam optimizer to train baseline PINNs, as shown in Algorithm\rx{alg:adam_ConFIG}. The green line in the algorithm can be replaced by other methods like PCGrad or the IMTL-G method. The Adam optimizer uses the first momentum $m$ to average the recent update direction and the second momentum $v$ to rescale the learning rate of each parameter. If we want to use the momentum to replace the gradient for the ConFIG update, an intuitive approach could be to calculate the first and second momentum for each loss-specific gradient and use the final rescaled momentum to calculate the input for the ConFIG update, as shown in "MA-ConFIG" in Algorithm\rx{alg:ma_ConFIG}. However, this approach could result in several issues:

\begin{algorithm}\small
	\caption{ConFIG update with Adam optimizer}\label{alg:adam_ConFIG}
	\begin{algorithmic}
		\Require $\theta_0$ (network weights), $\gamma$ (learning rate), $\beta_1$, $\beta_2$, $\epsilon$ (Adam coefficient),
		\State $\bm{m}_{0} \gets \bm{0}$ (first momentum), $\bm{v}_{0} \gets \bm{0}$ (Second momentum),
		\State  All operations on vectors are element-wise except $\mathcal{G}$.
		\For{$t\gets1$ to $\cdots$}
		\State $\hl{ \bm{g}_c=\mathcal{G}(\nabla_{\theta_{t-1}}\mathcal{L}_{1},\nabla_{\theta_{t-1}}\mathcal{L}_{2},\cdots, \nabla_{\theta_{t-1}}\mathcal{L}_{m})}$ 
		\Comment{ConFIG update of gradients}
		
		\State $\bm{m}_t \gets \beta_1 \bm{m}_{t-1}+(1-\beta_1) \bm{g}_c$ \Comment{Update the first momentum}
		\State $\bm{v}_t \gets \beta_2 \bm{v}_{t-1}+(1-\beta_2) \bm{g}_c^2$ \Comment{Update the second momentum}
		\State $\hat{\bm{m}}\gets\bm{m}_{t}/(1-\beta_1^t)$ \Comment{Bias corrections for the first momentum}
		\State $\hat{\bm{v}}\gets\bm{v}_{t}/(1-\beta_2^t)$ \Comment{Bias corrections for the second momentum}
		\State $\theta_i \gets \theta_{t-1}- \gamma\hat{\bm{m}}/(\sqrt{\hat{\bm{v}}}+\epsilon)$ \Comment{Update weights of the neural network}
		\EndFor
	\end{algorithmic}
\end{algorithm}

\begin{algorithm}[tb]\small
	\caption{MA-ConFIG}\label{alg:ma_ConFIG}
	\begin{algorithmic}
		\Require $\theta_0$ (network weights), $\gamma$ (learning rate), $\beta_1$, $\beta_2$, $\epsilon$ (Adam coefficient),
		
		\State $[\bm{m}_{\bm{g}_1,0},\bm{m}_{\bm{g}_1,0}, \cdots \bm{m}_{\bm{g}_m,0}]\gets [\bm{0},\bm{0},\cdots,\bm{0}]$ (first momentum),
		\State $[\bm{v}_{\bm{g}_1,0},\bm{v}_{\bm{g}_1,0}, \cdots \bm{v}_{\bm{g}_m,0}]\gets [\bm{0},\bm{0},\cdots,\bm{0}]$  (Second momentum), $[t_{\bm{g}_1},t_{\bm{g}_2},\cdots,t_{\bm{g}_m}]\gets[0,0,\cdots,0]$,
		\State  All operations on vectors are element-wise except $\mathcal{G}$.
		
		\For{$t\gets1$ to $\cdots$}
		\State $i=t\%m+1$
		
		\State $t_{\bm{g}_i} \gets t_{\bm{g}_i}+1$
		\State $\bm{g}_i=\nabla_{\theta_{t-1}}\mathcal{L}_{i}$
		\State $\bm{m}_{\bm{g}_i,t_{\bm{g}_i}} \gets \beta_1 \bm{m}_{\bm{g}_i,t_{\bm{g}_i}-1}+(1-\beta_1)\bm{g}_i $ \Comment{Update the first momentum of $\bm{g}_i$}
		\State $\bm{v}_{\bm{g}_i,t_{\bm{g}_i}} \gets \beta_2 \bm{v}_{\bm{g}_i,t_{\bm{g}_i}-1}+(1-\beta_2)\bm{g}_i^2$ \Comment{Update the second momentum of $\bm{g}_i$}
		\State $[\hat{\bm{m}}_{\bm{g}_1},\hat{\bm{m}}_{\bm{g}_2},\cdots,\hat{\bm{m}}_{\bm{g}_m}]\gets[\frac{\bm{m}_{\bm{g}_1,t_{\bm{g}_1}}}{1-\beta_1^{t_{\bm{g}_1}}},\frac{\bm{m}_{\bm{g}_2,t_{\bm{g}_2}}}{1-\beta_1^{t_{\bm{g}_2}}}, \cdots,\frac{\bm{m}_{\bm{g}_m,t_{\bm{g}_m}}}{1-\beta_1^{t_{\bm{g}_m}}}]$ 
		\Comment{{\large $^{\text{Bias corrections for}}_{\text{first momentum terms}}$}}
		\State $[\hat{\bm{v}}_{\bm{g}_1},\hat{\bm{v}}_{\bm{g}_2},\cdots,\hat{\bm{v}}_{\bm{g}_m}]\gets[\frac{\bm{v}_{\bm{g}_1,t_{\bm{g}_1}}}{1-\beta_2^{t_{\bm{g}_1}}},\frac{\bm{v}_{\bm{g}_2,t_{\bm{g}_2}}}{1-\beta_2^{t_{\bm{g}_2}}}, \cdots,\frac{\bm{v}_{\bm{g}_m,t_{\bm{g}_m}}}{1-\beta_2^{t_{\bm{g}_m}}}]$ 
		\Comment{{\large $^{\text{Bias corrections for}}_{\text{first momentum terms}}$}}
		\State $[\hat{{\bm{g}}_1},\hat{{\bm{g}}_2},\cdots,\hat{{\bm{g}}_m}]
		=[
		\frac{\hat{\bm{m}}_{\bm{g}_1}}{\sqrt{\hat{\bm{v}}_{\bm{g}_1}}+\varepsilon},
		\frac{\hat{\bm{m}}_{\bm{g}_2}}{\sqrt{\hat{\bm{v}}_{\bm{g}_2}}+\varepsilon},
		\cdots,
		\frac{\hat{\bm{m}}_{\bm{g}_m}}{\sqrt{\hat{\bm{v}}_{\bm{g}_m}}+\varepsilon} 
		]$
		\Comment{Rescale the momentum.}
		
		\State $\theta_i \gets \theta_{t-1}- \gamma\mathcal{G}(\hat{{\bm{g}}_1},\hat{{\bm{g}}_2},\cdots,\hat{{\bm{g}}_m})$ \Comment{Update weights of the neural network with ConFIG operator}
		\EndFor
	\end{algorithmic}
\end{algorithm}

\begin{table}[htbp]
	\begin{tabular}{cccc}
		\toprule
		& \multicolumn{1}{l}{}                                                                         & M-ConFIG                        & MA-ConFIG                       \\
		\midrule
		Burgers     & \multirow{4}{*}{$[\mathcal{L}_\mathcal{N},\mathcal{L}_\mathcal{BI}]$}                        & $(1.277\pm0.035)\times10^{-4}$ & $(1.549\pm0.362)\times10^{-3}$ \\
		Schrödinger &                                                                                              & $(4.292\pm1.863)\times10^{-4}$ & $(2.625\pm0.087)\times10^{-1}$ \\
		Kovasznay   &                                                                                              & $(9.777\pm0.347)\times10^{-9}$ & NaN                             \\
		Beltrami    &                                                                                              & $(7.949\pm0.384)\times10^{-5}$ & $(6.658\pm0.435)\times10^{-3}$ \\
		\midrule
		Burgers     & \multirow{3}{*}{$[\mathcal{L}_\mathcal{N},\mathcal{L}_\mathcal{B},\mathcal{L}_\mathcal{I}]$} & $(1.296\pm0.013)\times10^{-4}$ & $(8.511\pm7.835)\times10^{-3}$ \\
		Schrödinger &                                                                                              & $(1.522\pm0.581)\times10^{-3}$ & NaN                             \\
		Beltrami    &                                                                                              & $(9.103\pm1.831)\times10^{-5}$ & $(5.783\pm0.405)\times10^{-3}$
		\\
		\bottomrule
	\end{tabular}
	\caption{Test results of different momentum configurations. "MA-ConFIG" refers to the strategy with the second momentum inside the ConFIG operation, and "NaN" means the training failed due to numerical stability.  }
	\label{tab:momentum_compare}
\end{table}

\begin{itemize}
	\item Inappropriate learning rate. Ideally, the Adam method will rescale each element of the update gradient to 1 ($\bm{m}\sim\bm{g}$, $\bm{v}\sim\bm{g}^2$, $\bm{m}/\sqrt{\bm{v}}\sim 1$). If one of the elements of the update gradient repeatedly oscillates between positive and negative, then $m$ becomes smaller while $v$ becomes larger, resulting in a small update step (learning rate) for the corresponding parameter. However, Adam's neat adaptive learning rate adjustment feature will be destroyed if we rescale the momentum before applying the ConFIG operator. This is because ConFIG will alter each parameter's learning rate again by changing the update gradient's direction and magnitude.
	\item Numerical instability. The current study uses singular value decomposition (SVD) to calculate the pseudoinverse. Our experience shows that rescaling the momentum makes the input matrix for SVD easily ill-conditioned or has too many repeated singular values, leading to training failure.
\end{itemize}

Tab.\rx{tab:momentum_compare} compares the performance of M-ConFIG and MA-ConFIG methods, where M-ConFIG is always stable and significantly better than the MA-ConFIG method.

Note that the M-ConFIG method should not be implemented in combination with Adam, as it is intended to replace Adam's calculation of momentum.
Hence, our M-ConFIG implementation uses an SGD optimizer.

\FloatBarrier

\subsection{Computational Cost\label{app:relative_wall_time}}

To evaluate runtime performance, Fig.\rx{fig:wall_time} compares the relative wall time of all methods w.r.t. the Adam baseline, highlighting the significant advantage of the M-ConFIG on the training cost. The average training cost of M-ConFIG is 0.712$\times$ that of the Adam baseline for the two-term case, and 0.557$\times$ for the three-term case.

It is also worth noting that the time cost of the ConFIG method is comparable to other gradient-based methods, indicating that the pseudoinverse operation of the ConFIG method does not add significant computational overhead. This study uses the PyTorch implementation, which utilizes singular value decomposition (SVD) to calculate the pseudoinverse. Although the exact computational cost depends on the specific numerical algorithm, the general time complexity of SVD is $\mathcal{O}(nm^2)$\cx{Grasedyck2010} where $n$ is the number of gradient elements (i.e., the number of neural network's parameter) and $m$ is the number of gradients in the current study ($n>m$). This is favorable since the number of network parameters is usually much larger than the number of loss terms, and the time complexity of SVD scales linearly with the former.

On the side of memory, all gradient-based methods, including PCGrad, IMTL-G, and our ConFIG method, usually require more memory compared to the weighting method during training due to the additional cost of storing loss-specific gradients for each loss term in each iteration. The complexity of this additional memory is $\mathcal{O}(nm)$. Meanwhile, although momentum acceleration helps to increase the training speed, it also introduces an increased memory cost with a complexity of $\mathcal{O}(nm)$, as it also needs to store the first and second momentum during the training procedure. This could be a potential challenge when dealing with large-scale problems.

\begin{figure}[htbp]
	\centering
	\includegraphics[scale=0.3]{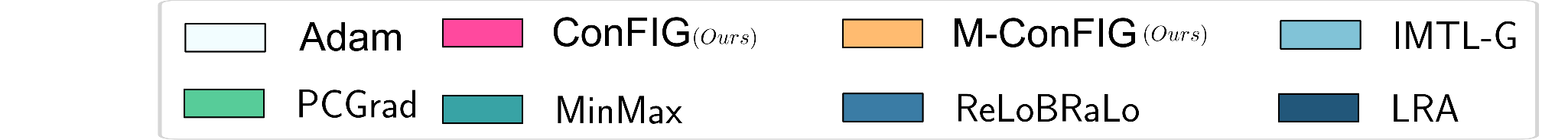}\\
	\sidesubfloat[]{\includegraphics[scale=0.3]{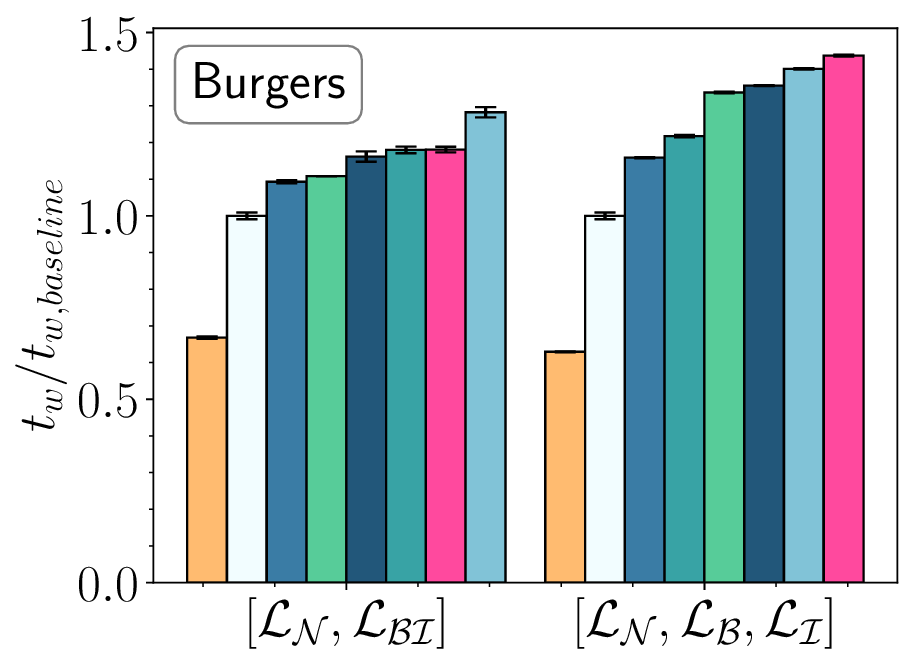}\label{fig:accuracy_burgers_wt}}
	\sidesubfloat[]{\includegraphics[scale=0.3]{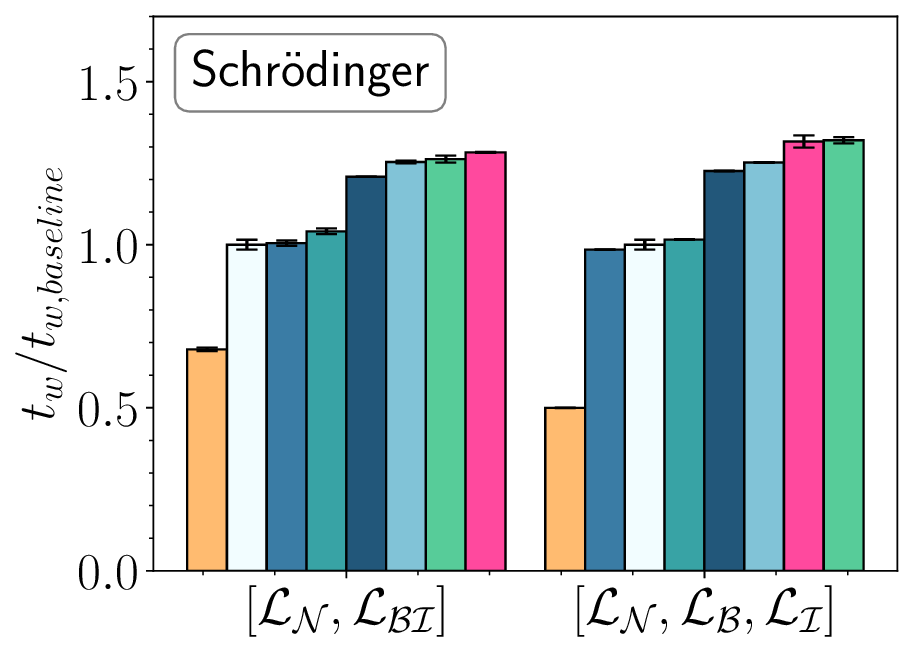}\label{fig:accuracy_Schrodinger_wt}}\\
	\sidesubfloat[]{\includegraphics[scale=0.3]{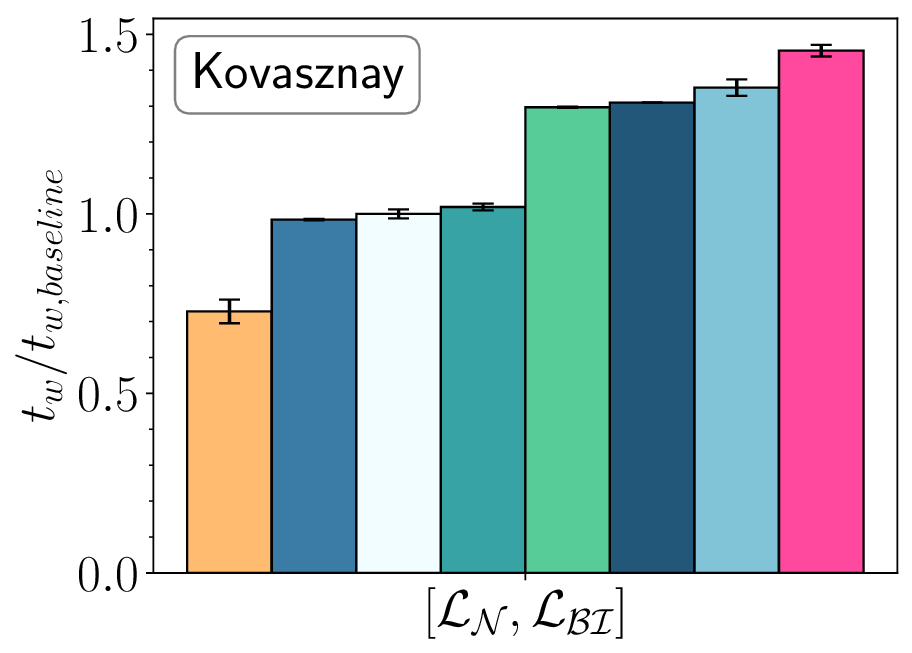}\label{fig:accuracy_kovasznay_wt}}
	\sidesubfloat[]{\includegraphics[scale=0.3]{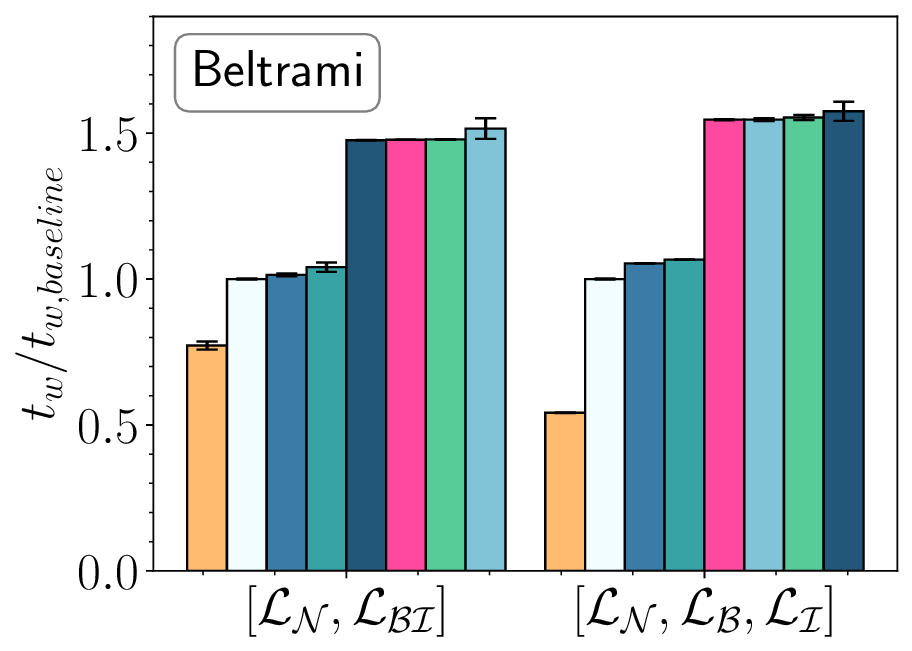}\label{fig:accuracy_beltrami_wt}}
	\caption{Relative wall time of different methods w.r.t. the Adam baseline for one training iteration.}
	\label{fig:wall_time}
\end{figure}
\FloatBarrier

\subsection{PDE Details\label{app:pdes}}
\paragraph{1D Burgers equation.}
Burgers equation is a non-linear and non-trivial model equation describing shock formations. Considering a spatial-temporal field $u(x,t)$, the Burgers equation in one space dimension is
\begin{equation}
	\frac{\partial u}{\partial t}+u \frac{\partial u}{\partial x}=\nu \frac{\partial^{2} u}{\partial x^{2}}
	\label{eq:burgers},
\end{equation}
where $\nu$ is the viscosity and set as $\nu=0.01/\pi$ in the current study. The spatial-temporal domain is $t \in [0,1]$ and $x \in [-1.0,1.0]$ with the corresponding initial and boundary conditions of
\begin{equation}
	\begin{cases}
		u(x,0) = -\sin({\pi x})\\
		u(+1.0,t) = u(-1.0,t) = 0 \\
	\end{cases}
	.
	\label{eq:bound_burgers}
\end{equation}
No analytical solution is available for Burgers equation with these given boundary conditions. In the current study, we use numerical solutions from PhiFlow\cx{Holl2020Learning} as the ground truth solution to evaluate the PINNs' performance.
\paragraph{1D Schrödinger equation.}
1D Schrödinger equation describes the evaluation of a complex field $h(x, t)=u(x, t)+i v(x, t)$ in one space dimension:
\begin{equation}
	i \frac{\partial h}{\partial t}+\frac{1}{2} \frac{\partial^{2} h}{\partial x^{2}}+|h|^{2} h=0,
	\label{eq:Schrödinger}
\end{equation}
$x \in [-5.0,5.0]$, $t \in [0, \pi/2]$. It follows a periodic boundary condition and a initial condition of
\begin{equation}
	h(0, x)=2 \operatorname{sech}(x), \\
\end{equation}
Similarly, no analytical solution is available for the 1D Schrödinger equation with given boundary conditions. We use the numerical solution provided by deepXDE\cx{Bajaj2023} to evaluate the performance of trained PINNs.
\paragraph{2D Kovasznay flow and 3D Beltrami flow.} Kovasznay and Beltrami flow are special solutions for Navier-Stokes equations which describe the fluid flow:
\begin{equation}
	\begin{aligned}
		\frac{\partial \bm{u}}{\partial t}+(\bm{u} \cdot \nabla) \bm{u} & =-\nabla p+\frac{1}{Re} \nabla^{2} \bm{u} \\
		\nabla \cdot \bm{u} & =0
	\end{aligned}
	\label{eq:NS}
\end{equation}
where $\bm{u}$ is the velocity vector, $p$ is the pressure and $Re$ is the Reynolds number. Kovasznay flow is a steady case in two space dimensions where the transient term $\partial \bm{u}/\partial t$ equals 0. The analytical solution of the velocity $\bm{u}(x,y)=[u_x(x,y),u_y(x,y)]$ and pressure $p(x,y)$ is\cx{Kovasznay_1948,Truesdell1954}
\begin{equation}
	\begin{cases}
		u_x(x, y) & =1-e^{\lambda x} \cos (2 \pi y), \\
		u_y(x, y) & =\frac{\lambda}{2 \pi} e^{\lambda x} \sin (2 \pi y), \\
		p(x, y) & =\frac{1}{2}\left(1-e^{2 \lambda x}\right),
	\end{cases}
	\label{eq:ana_Kovasznay}
\end{equation}
where $\lambda =\frac{1}{2 \nu}-\sqrt{\frac{1}{4 \nu^{2}}+4 \pi^{2}}$,$\quad \nu=\frac{1}{\operatorname{Re}}=\frac{1}{40}$, $x\in [-0.5,1]$ and $y \in [-0.5,1.5]$. Meanwhile, Beltrami flow is an unsteady case in three space dimensions where the velocity $\bm{u}(x,y,z,t)=[u_x(x,y,z,t),u_y(x,y,z,t),u_z(x,y,z,t)]$ and pressure $p(x,y,z,t)$ follows the analytical solution of\cx{Gromeka}
\begin{equation}
	\begin{cases}
		u_x(x, y, z, t)= & -a\left[e^{a x} \sin (a y+d z)+e^{a z} \cos (a x+d y)\right] e^{-d^{2} t} \\
		u_y(x, y, z, t)= & -a\left[e^{a y} \sin (a z+d x)+e^{a x} \cos (a y+d z)\right] e^{-d^{2} t} \\
		u_z(x, y, z, t)= & -a\left[e^{a z} \sin (a x+d y)+e^{a y} \cos (a z+d x)\right] e^{-d^{2} t} \\
		p(x, y, z, t)= & -\frac{1}{2} a^{2}\left[e^{2 a x}+e^{2 a y}+e^{2 a z}\right. \\
		& +2 \sin (a x+d y) \cos (a z+d x) e^{a(y+z)} \\
		& +2 \sin (a y+d z) \cos (a x+d y) e^{a(z+x)} \\
		& \left.+2 \sin (a z+d x) \cos (a y+d z) e^{a(x+y)}\right] e^{-2 d^{2} t}
	\end{cases}
	,
	\label{eq:ana_Beltrami}
\end{equation}
where $a=d=1$ when $Re = 1$ in our configuration. The simulation domain size is $x \in [-1,1]$, $y \in [-1,1]$, $z \in [-1,1]$, and $t \in [0,1]$. The boundary conditions of Kovasznay and Beltrami flow are Dirichlet type, which follows the values of analytical solutions. For Beltrami flow, the initial condition also follows the values of the analytical solutions. 

\FloatBarrier

\subsection{PINN-based Example Solutions\label{app:solution_domain}}

This section provides examples of the solution domain for each PINN case. Figures\rx{fig:burgers_cloud},\rx{fig:schrodinger_cloud},\rx{fig:kovasznay_cloud}, and\rx{fig:beltrami_cloud} show the mean predictions of the networks compared to the ground truth, the standard deviation, and the squared error distribution for each PINN case. Additionally, Figures\rx{fig:burgers_line},\rx{fig:schrodinger_line}, and\rx{fig:kovasznay_line} offer a detailed comparison of a sample line in the solution domain for the Burgers equation, Schrödinger equation, and Kovasznay flow case.

\begin{figure}[htbp]
	\centering
	\includegraphics[scale=0.3]{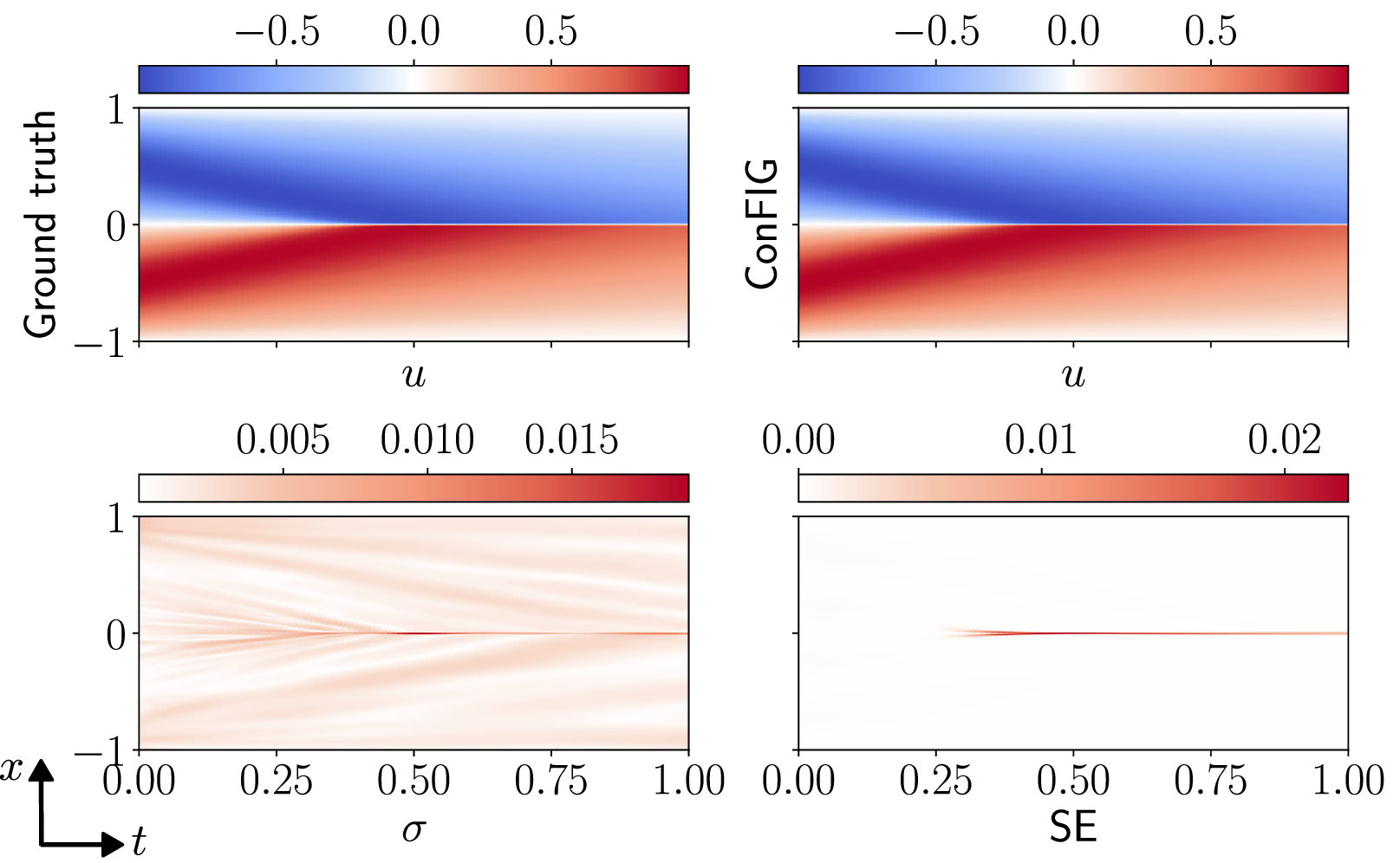}
	\caption{The solution domain of Burgers Equation with corresponding standard deviation($\sigma$) and square error(SE) of the ConFIG method's prediction.}
	\label{fig:burgers_cloud}
\end{figure}

\begin{figure}[htbp]
	\centering
	\includegraphics[scale=0.3]{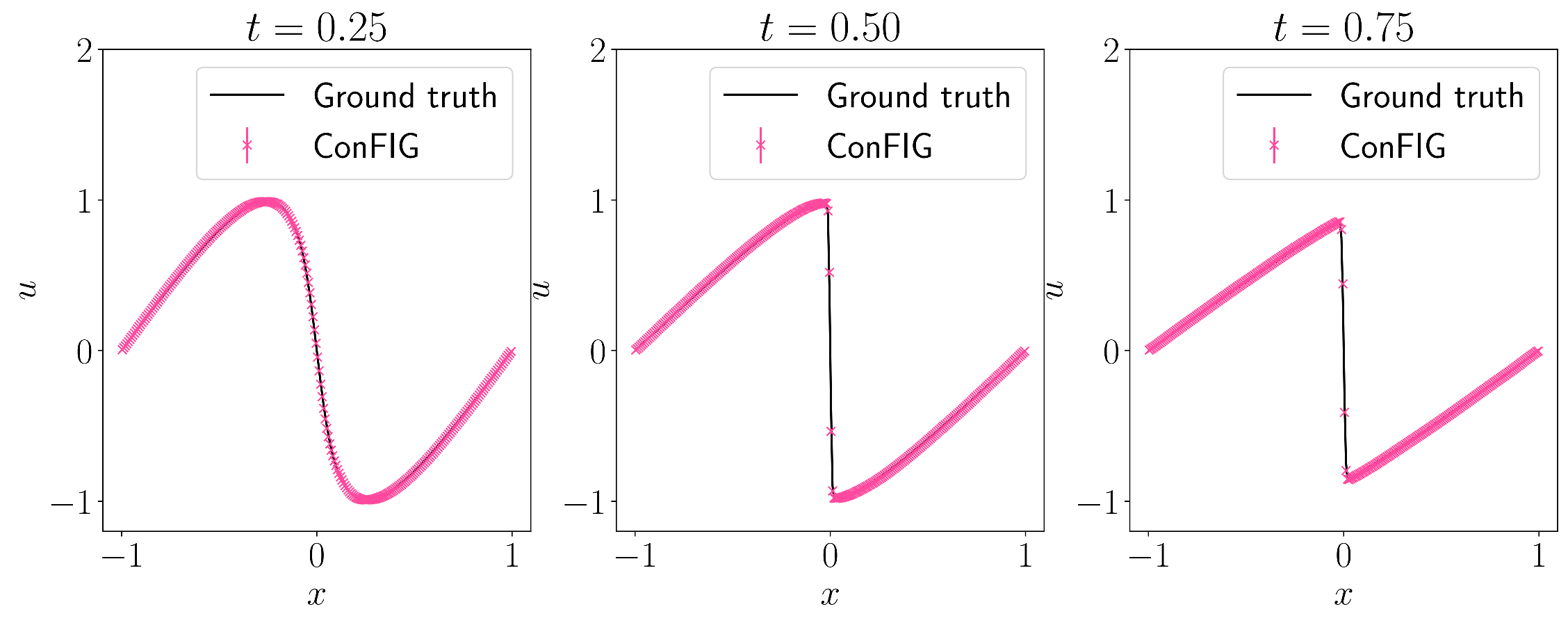}
	\caption{The distribution of $u$ in the solution domain of Burgers equation.}
	\label{fig:burgers_line}
\end{figure}

\begin{figure}[htbp]
	\centering
	\includegraphics[scale=0.3]{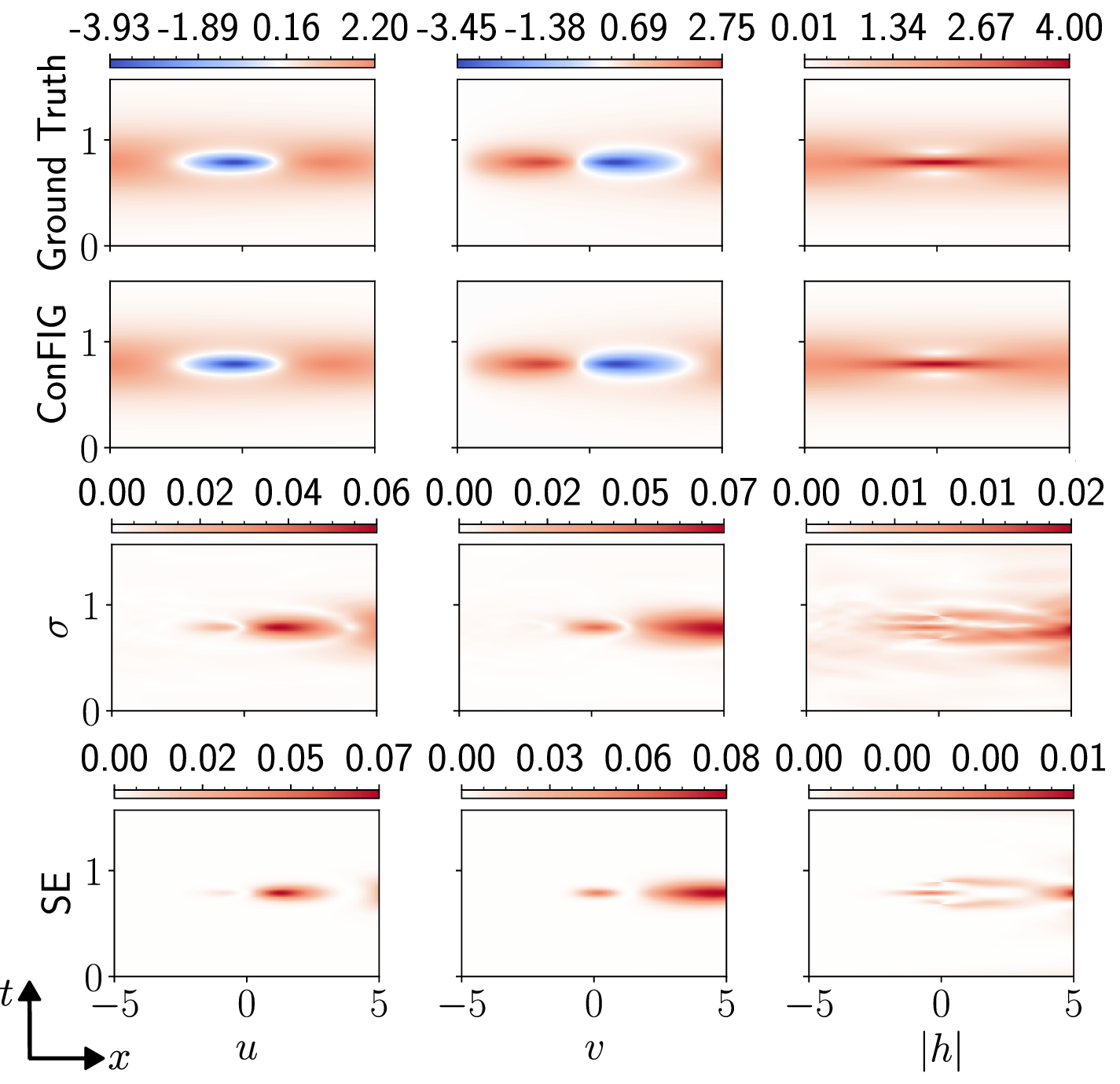}
	\caption{The solution domain of Schrödinger Equation with corresponding standard deviation($\sigma$) and square error(SE) of the ConFIG method's prediction.}
	\label{fig:schrodinger_cloud}
\end{figure}

\begin{figure}[htbp]
	\centering
	\includegraphics[scale=0.3]{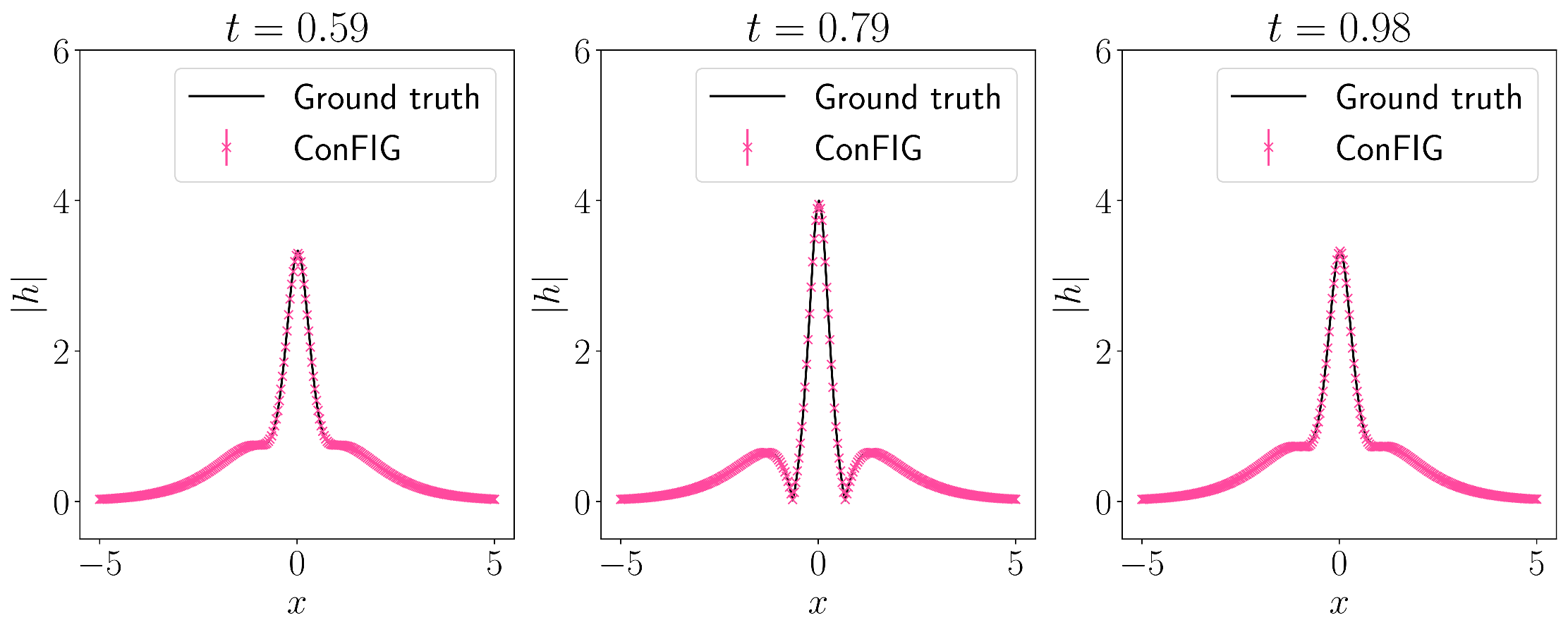}
	\caption{The distribution of $|h|$ in the solution domain of Schrödinger equation.}
	\label{fig:schrodinger_line}
\end{figure}

\begin{figure}[htbp]
	\centering
	\includegraphics[scale=0.3]{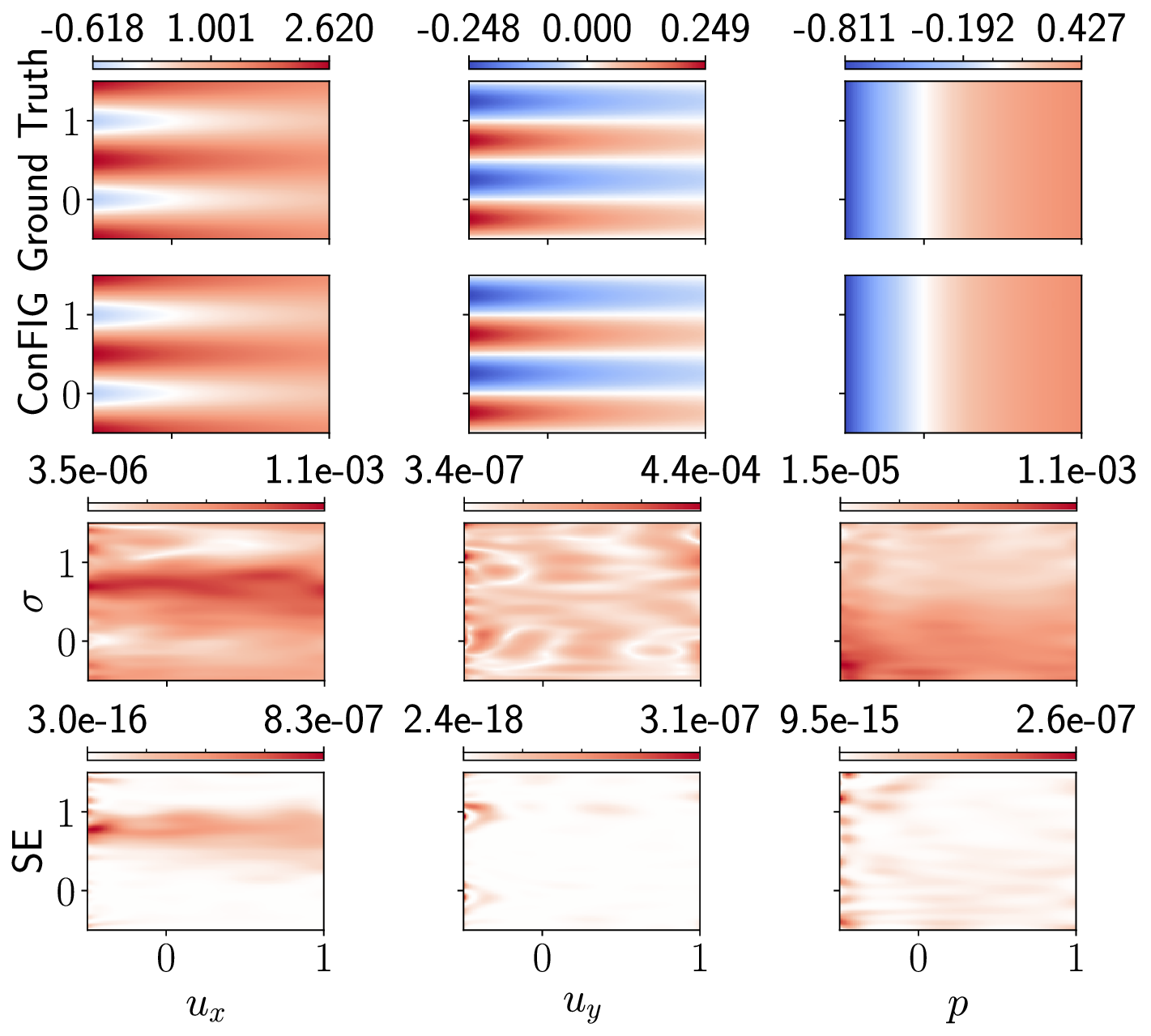}
	\caption{The solution domain of Kovasznay Flow with corresponding standard deviation($\sigma$) and square error(SE) of the ConFIG method's prediction.}
	\label{fig:kovasznay_cloud}
\end{figure}

\begin{figure}[htbp]
	\centering
	\includegraphics[scale=0.3]{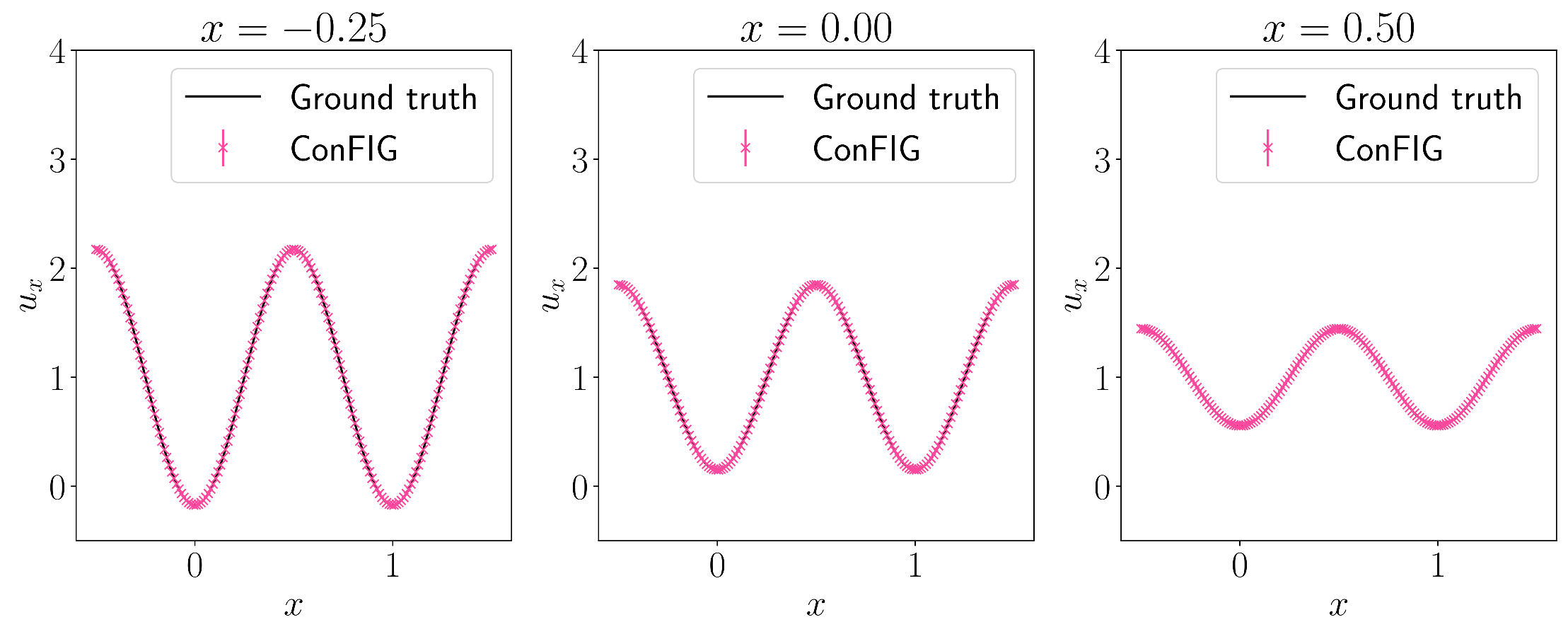}
	\caption{The distribution of $u_x$ in the solution domain of Kovasznay Flow.}
	\label{fig:kovasznay_line}
\end{figure}

\begin{figure}[htbp]
	\centering
	\includegraphics[scale=0.3]{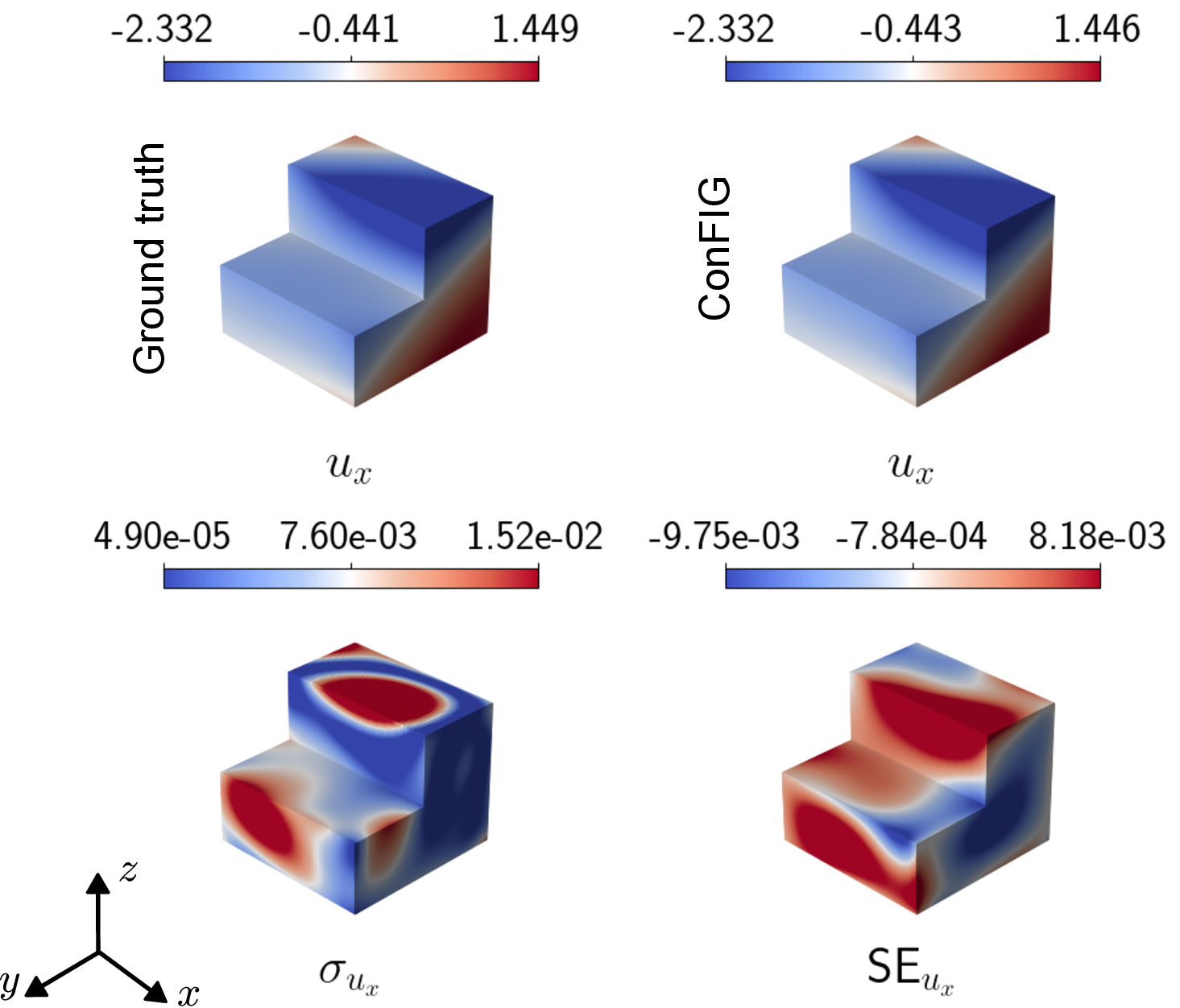}
	\caption{The solution domain of Beltrami Flow when $t=0.5$ with corresponding standard deviation($\sigma$) and square error(SE) of the ConFIG method's prediction.}
	\label{fig:beltrami_cloud}
\end{figure}

\FloatBarrier

\subsection{PINN Evaluations\label{app:numerical_std_pinn}}
The exact numerical values and corresponding standard deviations of the PINN experiments are presented in this section. Tables\rx{tab:error_burgers} through\rx{tab:error_beltrami} display the values obtained with the same number of training epochs as depicted in Fig.\rx{fig:accuracy_nl2} and Fig.\rx{fig:accuracy_nl3}. Additionally, Tables\rx{tab:error_burgers_m} through\rx{tab:error_beltrami_m} showcase the values obtained with identical wall time, corresponding to the data presented in Fig.\rx{fig:accuracy_m}.

\begin{table}[htbp]
	\caption{Test MSE of PINNs trained for Burgers equation. All values are scaled with $10^{-4}$.}
	\label{tab:error_burgers}
	\begin{tabular}{ccc}
		\toprule
		& $[\mathcal{L}_\mathcal{N},\mathcal{L}_\mathcal{BI}]$ & $[\mathcal{L}_\mathcal{N},\mathcal{L}_\mathcal{B},\mathcal{L}_\mathcal{I}]$ \\ \midrule
		Adam  & \multicolumn{2}{c}{$1.484\pm0.061$}          \\
		PCGrad    & $1.344\pm0.019$                                      & \hlb{$\mathbf{1.279\pm0.008}$}                                                             \\
		IMTL-G    & $1.339\pm0.024$                                      & $22.478\pm15.008$                                                           \\
		MinMax    & $1.889\pm0.143$                                      & $1.582\pm0.051$                                                             \\
		ReLoBRaLo & $1.419\pm0.053$                                      & $1.402\pm0.034$   \\                                      
		LRA & $353.796\pm114.972$ & $444.603\pm16.695$
		\\ 
		\midrule
		ConFIG    & $1.308\pm0.008$                                      & $1.291\pm0.039$                                                             \\
		M-ConFIG  & \hlb{$\mathbf{1.277\pm0.035}$}                                     & $1.296\pm0.013$                                                             \\
		\bottomrule                            
	\end{tabular}
\end{table}

\begin{table}[htbp]
	\caption{Test MSE of PINNs trained for Schrödinger equation. All values are scaled with $10^{-4}$.}
	\label{tab:error_schrodinger}
	\begin{tabular}{ccc}
		\toprule
		& $[\mathcal{L}_\mathcal{N},\mathcal{L}_\mathcal{BI}]$ & $[\mathcal{L}_\mathcal{N},\mathcal{L}_\mathcal{B},\mathcal{L}_\mathcal{I}]$ \\ \midrule
		Adam  & \multicolumn{2}{c}{$3.383\pm1.178$}                                                             \\
		PCGrad    & $1.621\pm0.547$                                      & \hlb{$\mathbf{0.931\pm0.028}$}                                                             \\
		IMTL-G    & $7.891\pm3.008$                                      & $2504.282\pm588.560$                                                        \\
		MinMax    & $45.255\pm1.535$                                     & $45.068\pm5.292$                                                            \\
		ReLoBRaLo & $3.603\pm2.165$                                      & $2.756\pm0.098$                                             \\ 
		\midrule
		ConFIG    & \hlb{$\mathbf{0.643\pm0.227}$}                                     & $1.455\pm0.455$                                                             \\
		M-ConFIG  & $4.292\pm1.863$                                      & $15.217\pm5.807$                                                            \\
		\bottomrule                     
	\end{tabular}
\end{table}

\begin{table}[htbp]
	\caption{Test MSE of PINNs trained for Kovasznay flow. All values are scaled with $10^{-7}$.}
	\label{tab:error_kovasznay}
	\begin{tabular}{cl}
		\toprule
		\multicolumn{1}{l}{} & \multicolumn{1}{c}
		{$[\mathcal{L}_\mathcal{N},\mathcal{L}_\mathcal{B}]$} \\
		\midrule
		Adam             & $1.044\pm0.405$                                                          \\
		PCGrad               & $0.799\pm0.083$                                                          \\
		IMTL-G               & \hlb{$\mathbf{0.096\pm0.012}$}                                                          \\
		MinMax               & $7.743\pm1.197$                                                          \\
		ReLoBRaLo            & $1.148\pm0.359$                                                          \\
		LRA                  & $3.595\pm3.582$  \\
		\midrule
		ConFIG               & $0.126\pm0.048$                                                          \\
		M-ConFIG             & $0.098\pm0.003$                                                          \\
		\bottomrule
	\end{tabular}
\end{table}

\begin{table}[htbp]
	\caption{Test MSE of PINNs trained for Beltrami flow. All values are scaled with $10^{-4}$.}
	\label{tab:error_beltrami}
	\begin{tabular}{ccc}
		\toprule
		
		& $[\mathcal{L}_\mathcal{N},\mathcal{L}_\mathcal{BI}]$ & $[\mathcal{L}_\mathcal{N},\mathcal{L}_\mathcal{B},\mathcal{L}_\mathcal{I}]$ \\
		\midrule
		Adam  & \multicolumn{2}{c}{$1.112\pm0.119$}                                                                                                \\ 
		PCGrad    & $1.037\pm0.119$                                      & $0.757\pm0.082$                                                             \\
		IMTL-G    & $0.568\pm0.078$                                      & $0.932\pm0.228$                                                             \\
		MinMax    & $1.851\pm0.100$                                      & $2.215\pm0.192$                                                             \\
		ReLoBRaLo & $1.197\pm0.029$                                      & $1.028\pm0.107$                                                             \\
		LRA       & $0.708\pm0.047$                                      & $0.843\pm0.146$                                \\
		\midrule
		ConFIG    & \hlb{$\mathbf{0.617\pm0.112}$}                                      & \hlb{$\mathbf{0.571\pm0.090}$}                                                             \\ 
		M-ConFIG  & $0.795\pm0.038$                                      & $0.910\pm0.183$                                                             \\
		\bottomrule
	\end{tabular}
\end{table}

\begin{table}[htbp]
	\caption{Test MSE of PINNs trained for Burgers equation. All values are collected after the same wall time and scaled with $10^{-4}$.}
	\label{tab:error_burgers_m}
	\begin{tabular}{ccc}
		\toprule
		& $[\mathcal{L}_\mathcal{N},\mathcal{L}_\mathcal{BI}]$ & $[\mathcal{L}_\mathcal{N},\mathcal{L}_\mathcal{B},\mathcal{L}_\mathcal{I}]$ \\
		\midrule
		Adam  & $1.515\pm0.079$                                      & $1.546\pm0.100$                                                             \\
		PCGrad    & $1.476\pm0.022$                                      & $1.300\pm0.023$                                                             \\
		IMTL-G    & $1.356\pm0.046$                                      & $82.446\pm76.016$                                                           \\
		MinMax    & $3.058\pm0.379$                                      & $2.030\pm0.153$                                                             \\
		ReLoBRaLo & $1.449\pm0.033$                                      & $1.492\pm0.099$                                                             \\
		LRA       & $376.449\pm100.599$                                  & $470.108\pm28.172$                                                          \\
		\midrule
		ConFIG    & $1.330\pm0.015$                                      & $1.665\pm0.186$                                                             \\
		M-ConFIG  & \hlb{$\mathbf{1.277\pm0.035}$ }                                     & \hlb{$\mathbf{1.296\pm0.013}$}        
		\\\bottomrule
	\end{tabular}
\end{table}

\begin{table}[htbp]
	\caption{Test MSE of PINNs trained for Schrödinger equation. All values are collected after the same wall time and scaled with $10^{-3}$.}
	\label{tab:error_schrodinger_m}
	\begin{tabular}{ccc}
		\toprule
		& $[\mathcal{L}_\mathcal{N},\mathcal{L}_\mathcal{BI}]$ & $[\mathcal{L}_\mathcal{N},\mathcal{L}_\mathcal{B},\mathcal{L}_\mathcal{I}]$ \\ \midrule                                                   
		Adam  & $2.169\pm0.584$   & $5.832\pm1.421$   \\
		PCGrad    & $1.505\pm0.559$   & $2.271\pm0.205$   \\
		IMTL-G    & $8.844\pm4.263$   & $291.037\pm3.682$ \\
		MinMax    & $28.195\pm1.616$  & $49.635\pm2.888$  \\
		ReLoBRaLo & $2.369\pm0.829$   & $15.230\pm7.691$  \\
		LRA       & $316.549\pm8.696$ & $315.606\pm8.969$ \\
		\midrule
		ConFIG    & $0.586\pm0.256$   & $4.636\pm0.932$   \\
		M-ConFIG  & \hlb{$\mathbf{0.429\pm0.186}$}   & \hlb{$\mathbf{1.522\pm0.581}$}  
		\\ \bottomrule                     
	\end{tabular}
\end{table}

\begin{table}[htbp]
	\caption{Test MSE of PINNs trained for Kovasznay flow. All values are scaled with $10^{-7}$.}
	\label{tab:error_kovasznay_m}
	\begin{tabular}{cl}
		\toprule
		\multicolumn{1}{l}{} & \multicolumn{1}{c}
		{$[\mathcal{L}_\mathcal{N},\mathcal{L}_\mathcal{B}]$} \\
		\midrule
		Adam  & $4.037\pm1.078$    \\
		PCGrad    & $2.723\pm0.421$    \\
		IMTL-G    & $0.419\pm0.058$    \\
		MinMax    & $261.628\pm43.303$ \\
		ReLoBRaLo & $3.291\pm0.847$    \\
		LRA       & $13.554\pm13.773$  \\
		\midrule
		ConFIG    & $0.631\pm0.194$    \\
		M-ConFIG  & \hlb{$\mathbf{0.098\pm0.003}$}                                                        \\
		\bottomrule
	\end{tabular}
\end{table}

\begin{table}[htbp]
	\caption{Test MSE of PINNs trained for Beltrami flow.  All values are collected after the same wall time and scaled with $10^{-4}$.}
	\label{tab:error_beltrami_m}
	\begin{tabular}{ccc}
		\toprule
		
		& $[\mathcal{L}_\mathcal{N},\mathcal{L}_\mathcal{BI}]$ & $[\mathcal{L}_\mathcal{N},\mathcal{L}_\mathcal{B},\mathcal{L}_\mathcal{I}]$  \\ \midrule                                                  Adam  & $2.318\pm0.270$ & $3.648\pm0.244$  \\
		PCGrad    & $2.206\pm0.180$ & $2.351\pm0.212$  \\
		IMTL-G    & $1.054\pm0.156$ & $2.194\pm0.465$  \\
		MinMax    & $5.422\pm0.300$ & $15.754\pm1.546$ \\
		ReLoBRaLo & $2.605\pm0.056$ & $2.769\pm0.202$  \\
		LRA       & $1.194\pm0.139$ & $2.259\pm0.812$  \\
		\midrule
		ConFIG    & $1.130\pm0.138$ & $1.451\pm0.181$  \\
		M-ConFIG  & \hlb{$\mathbf{0.795\pm0.038}$} & \hlb{$\mathbf{0.910\pm0.183}$}        
		\\ \bottomrule
	\end{tabular}
\end{table}

\begin{table}[htbp]
	\caption{Test MSE of PINNs trained for Burgers equation with the ConFIG method using different direction weights. All values are scaled with $10^{-4}$.}
	\label{tab:error_burgers_weight}
	\begin{tabular}{ccc}
		\toprule
		& $[\mathcal{L}_\mathcal{N},\mathcal{L}_\mathcal{BI}]$ & $[\mathcal{L}_\mathcal{N},\mathcal{L}_\mathcal{B},\mathcal{L}_\mathcal{I}]$ \\ \midrule
		MinMax    & $2.444\pm0.312$                                     & $45.068\pm5.292$                                                            \\
		ReLoBRaLo & $1.310\pm0.007$                                      & $1.315\pm0.048$                                             \\ 
		LRA & $449.658\pm42.392$                & $657.258\pm45.487$                             \\
		\midrule
		Equal    & \hlb{$\mathbf{1.308\pm0.008}$}                                     & \hlb{$\mathbf{1.291\pm0.039}$ }                                                            \\
		\bottomrule                     
	\end{tabular}
\end{table}

\begin{table}[htbp]
	\caption{Test MSE of PINNs trained for Schrodinger equation with the ConFIG method using different direction weights. All values are scaled with $10^{-4}$.}
	\label{tab:error_schrodinger_weight}
	\begin{tabular}{ccc}
		\toprule
		& $[\mathcal{L}_\mathcal{N},\mathcal{L}_\mathcal{BI}]$ & $[\mathcal{L}_\mathcal{N},\mathcal{L}_\mathcal{B},\mathcal{L}_\mathcal{I}]$ \\ \midrule
		MinMax    & $2.444\pm0.312$                                     & $3.733\pm1.351$                                                            \\
		ReLoBRaLo & $1.048\pm0.404$                                      & $2.829\pm1.444$                                             \\ 
		LRA & $25.417\pm6.768$                & $3118.378\pm45.723$                             \\
		\midrule
		Equal    & \hlb{$\mathbf{6.429\pm2.269}$}                                     & \hlb{$\mathbf{1.455\pm0.455}$ }                                                            \\
		\bottomrule                     
	\end{tabular}
\end{table}

\begin{table}[htbp]
	\caption{Test MSE of PINNs trained for Kovasznay flow with the ConFIG method using different direction weights. All values are scaled with $10^{-7}$.}
	\label{tab:error_kovasznay_weight}
	\begin{tabular}{cl}
		\toprule
		\multicolumn{1}{l}{} & \multicolumn{1}{c}
		{$[\mathcal{L}_\mathcal{N},\mathcal{L}_\mathcal{BI}]$} \\
		\midrule
		MinMax               & $7.487\pm2.545$                                                          \\
		ReLoBRaLo            & \hlb{$\mathbf{0.104\pm0.008}$}                                                          \\
		LRA                  & $(9.047\pm2.052) \times 10^5$  \\
		\midrule
		Equal               & $0.126\pm0.048$   \\
		\bottomrule
	\end{tabular}
\end{table}

\begin{table}[htbp]
	\caption{Test MSE of PINNs trained for Beltrami flow with the ConFIG method using different direction weights. All values are scaled with $10^{-4}$.}
	\label{tab:error_beltrami_weight}
	\begin{tabular}{ccc}
		\toprule
		& $[\mathcal{L}_\mathcal{N},\mathcal{L}_\mathcal{BI}]$ & $[\mathcal{L}_\mathcal{N},\mathcal{L}_\mathcal{B},\mathcal{L}_\mathcal{I}]$ \\ \midrule
		MinMax    & $1.903\pm0.271$                                     & $1.597\pm0.039$                                                            \\
		ReLoBRaLo & $0.658\pm0.157$                                      & $0.575\pm0.059$                                             \\ 
		LRA & $155.617\pm55.853$                & $15.754\pm5.049$                             \\
		\midrule
		Equal    & \hlb{$\mathbf{0.617\pm0.112}$}                                     & \hlb{$\mathbf{0.571\pm0.090}$ }                                                            \\
		\bottomrule                     
	\end{tabular}
\end{table}

\FloatBarrier

\subsection{Results of CelebA MTL Experiments\label{app:full_mtl}}
Table\rx{tab:mtl_mr}-\rx{tab:ablation_step_time} presents the numerical values and corresponding standard deviations for the CelebA MTL experiments, aligning with the results in Fig.\rx{fig:mtl} and Fig.\rx{fig:mtl_ablation}. The best performance is determined by selecting the best average performance across different tasks during training, which is also the value shown in Fig.\rx{fig:mtl}. The average performance is calculated based on the results from the last 5 epochs, during which most methods have converged. For both best and average performance, our ConFIG method surpasses all other methods in $MR$ metrics. For the performance in $\overline{F_1}$, it tied for first with the FAMO method in best performance and ranked second in the average performance.

\begin{table}[htbp]
	\caption{$MR$ performance of different methods in CelebA MTL experiments}
	\label{tab:mtl_mr}
	\begin{tabular}{ccc}
		\toprule
		& Best performance & Average performance\\
		\midrule
		PCGrad   & $6.425\pm0.595$  & $5.475\pm0.575$\\
		IMTL-G   & $6.058\pm0.671$  & $5.475\pm0.125$\\
		FAMO     & $6.233\pm0.085$ & $4.875\pm0.100$\\
		NASHMTL  & $6.158\pm0.242$  & $5.612\pm1.237$\\
		RLW      & $6.300\pm0.602$  & $4.850\pm1.050$\\
		CAGrad   & $6.767\pm0.274$  & $7.387\pm1.062$\\
		GRADDROP & $6.733\pm0.309$  & $7.375\pm0.000$\\
		DWA      & $6.908\pm0.450$  & $8.500\pm0.000$\\
		LS       & $7.383\pm0.112$  & $8.675\pm0.075$\\
		UW       & $7.692\pm0.051$  & $8.238\pm0.438$\\  
		\midrule
		M-ConFIG 30   & $5.833\pm0.342$  & $7.175\pm0.525$\\
		ConFIG   & \hlb{$\mathbf{5.508\pm0.392}$}  & \hlb{$\mathbf{4.362\pm0.812}$}\\
		\bottomrule
	\end{tabular}
\end{table}

\begin{table}[htbp]
	\caption{$\overline{F_1}$ performance of different methods in CelebA MTL experiments}
	\label{tab:mtl_f1}
	\begin{tabular}{ccc}
		\toprule
		& Best performance & Average performance \\
		\midrule
		PCGrad   &  $0.681\pm0.003$  & $0.665\pm0.007$    \\
		IMTL-G   &  $0.680\pm0.003$  & $0.660\pm0.003$    \\
		FAMO     &  $0.686\pm0.004$  & \hlb{$\mathbf{0.673\pm0.002}$}    \\
		NASHMTL  &  $0.683\pm0.006$  & $0.642\pm0.026$    \\
		RLW      &  $0.680\pm0.002$  & $0.663\pm0.006$    \\
		CAGrad   &  $0.672\pm0.010$  & $0.649\pm0.008$    \\
		GRADDROP &  $0.661\pm0.005$  & $0.644\pm0.010$    \\
		DWA      &  $0.672\pm0.003$  & $0.644\pm0.004$    \\
		LS       &  $0.676\pm0.004$  & $0.639\pm0.004$    \\
		UW       &  $0.675\pm0.009$  & $0.641\pm0.006$\\  
		\midrule
		M-ConFIG 30   & \hlb{$\mathbf{0.694\pm0.005}$}  & $0.655\pm0.006$    \\
		ConFIG   & $0.686\pm0.006$  & $0.671\pm0.003$    \\
		\bottomrule
	\end{tabular}
\end{table}

\begin{table}[]
	\caption{The $\overline{F}_1$ performance of the M-ConFIG and ConFIG method with different numbers of tasks in the CelebA experiment.}
	\label{tab:ablation_task}
	\begin{tabular}{ccccc}
		\toprule
		\multicolumn{1}{l}{\multirow{2}{*}{$n_{\text{updates}}$}} & \multicolumn{2}{l}{Best performance} & \multicolumn{2}{l}{Average performance} \\
		\multicolumn{1}{l}{}                                      & ConFIG            & M-ConFIG         & ConFIG             & M-ConFIG           \\
		\midrule
		5                                                         & $0.697\pm0.007$   & $0.664\pm0.005$  & $0.645\pm0.006$    & $0.638\pm0.003$    \\
		10                                                        & $0.695\pm0.006$   & $0.536\pm0.057$  & $0.668\pm0.013$    & $0.488\pm0.055$    \\
		20                                                        & $0.684\pm0.008$   & $0.455\pm0.032$  & $0.660\pm0.007$    & $0.408\pm0.015$    \\
		30                                                        & $0.667\pm0.006$   & $0.429\pm0.023$  & $0.648\pm0.006$    & $0.383\pm0.025$    \\
		40                                                        & $0.686\pm0.006$   & $0.423\pm0.026$  & $0.671\pm0.003$    & $0.383\pm0.037$   \\
		\bottomrule
	\end{tabular}
\end{table}

\begin{table}[]
	\caption{The training time of the M-ConFIG and ConFIG method with different numbers of tasks in the CelebA experiment.}
	\label{tab:ablation_task_time}
	\begin{tabular}{ccc}
		\toprule
		$n_{\text{tasks}}$ & ConFIG             & M-ConFIG          \\
		\midrule
		5                  & $180.333\pm0.471$  & $67.436\pm0.267$  \\
		10                 & $288.000\pm0.816$  & $73.588\pm1.262$  \\
		20                 & $561.000\pm0.816$  & $84.252\pm0.919$  \\
		30                 & $861.000\pm0.816$  & $110.454\pm0.487$ \\
		40                 & $1179.000\pm2.449$ & $167.423\pm0.427$
		\\
		\bottomrule
	\end{tabular}
\end{table}

\begin{table}[]
	\caption{The $\overline{F}_1$ performance  METof the M-ConFIG and ConFIG method with different numbers of gradient updates in the CelebA experiment (40 tasks).}
	\label{tab:ablation_step}
	\begin{tabular}{ccccc}
		\toprule
		\multirow{2}{*}{$n_{\text{updates}}$} & \multicolumn{2}{c}{Best performance}               & \multicolumn{2}{c}{Average performance}            \\
		& ConFIG                           & M-ConFIG        & ConFIG                           & M-ConFIG        \\
		\midrule
		1                                     & \multirow{6}{*}{$0.686\pm0.006$} & $0.423\pm0.026$ & \multirow{6}{*}{$0.671\pm0.003$} & $0.383\pm0.037$ \\
		5                                     &                                  & $0.458\pm0.006$ &                                  & $0.406\pm0.008$ \\
		10                                    &                                  & $0.570\pm0.049$ &                                  & $0.517\pm0.049$ \\
		20                                    &                                  & $0.681\pm0.006$ &                                  & $0.668\pm0.008$ \\
		30                                    &                                  & $0.694\pm0.005$ &                                  & $0.678\pm0.006$ \\
		40                                    &                                  & $0.682\pm0.007$ &                                  & $0.654\pm0.015$
		\\
		\bottomrule
	\end{tabular}
\end{table}

\begin{table}[]
	\caption{The training time of the M-ConFIG and ConFIG method with different numbers of gradient updates in the CelebA experiment (40 tasks).}
	\label{tab:ablation_step_time}
	\begin{tabular}{ccc}
		\toprule
		$n_{\text{updates}}$ & ConFIG                              & M-ConFIG            \\
		\midrule
		1                    & \multirow{6}{*}{$2090.618\pm5.627$} & $310.000\pm0.000$   \\
		5                    &                                     & $492.667\pm1.247$   \\
		10                   &                                     & $723.000\pm0.816$   \\
		20                   &                                     & $1185.667\pm6.018$  \\
		30                   &                                     & $1647.667\pm7.587$  \\
		40                   &                                     & $2084.667\pm24.635$\\
		\bottomrule
	\end{tabular}
\end{table}

\FloatBarrier

\subsection{Additional results on challenging PDEs \label{app:additional_pdes}}
To evaluate the performance of our method for challenging and high-dimensional test cases, we apply it to three problems from a recent benchmark for PINNs \cx{hao2023pinnacle}. 
In the following, we perform a comprehensive investigation of the performance of our method in solving these problems.

\subsubsection{High dimensional PDEs}

We introduce the N-dimensional Poisson equation as the test case for high-dimensional problems. The governing PDE is 
\begin{equation}
    -\nabla^2 u=\frac{\pi^2}{4}\sum_{i=1}^n \sin(\frac{\pi}{2} x_i),
\end{equation}
where $n$ is the number of dimensionality. In the current experiment, we choose $n=5$ and set the spatial domain as $x\in[0,1]^5$ following the configuration of \etcx{hao2023pinnacle}. The ground truth solution is 
\begin{equation}
    u=\sum_{i=1}^n \sin (\frac{\pi}{2}x_i).
\end{equation}
The PNd problem employs Dirichlet boundary conditions with the boundary value equal to the analytical solution. 

Tab.\rx{tab:error_pnd} and Fig.\rx{fig:kovasznay_cloud} summarize the test results of different methods. Our M-ConFIG method ranks first among all methods, followed by the ConFIG methods, showing the superiority of our methods in dealing with high-dimensional problems.

\begin{table}[htbp]
	\caption{Test MSE of PINNs trained for PNd equation. All values $\times 10^{-6}$.}
	\label{tab:error_pnd}
	\begin{tabular}{cl}
		\toprule
		\multicolumn{1}{l}{} & \multicolumn{1}{c}
		{$[\mathcal{L}_\mathcal{N},\mathcal{L}_\mathcal{B}]$} \\
		\midrule
Adam & $1.916\pm0.284$ \\ 
PCGrad & $1.313\pm0.097$ \\ 
IMTL-G & $0.520\pm0.123$ \\ 
MinMax & $4.604\pm0.271$ \\ 
ReLoBRaLo & $2.265\pm0.263$ \\ 
LRA & $0.639\pm0.340$ \\ 
\midrule
ConFIG & $0.461\pm0.141$ \\ 
M-ConFIG & \hlb{$\mathbf{0.415\pm0.052}$} \\ 
		\bottomrule
	\end{tabular}
\end{table}

\begin{figure}[htbp]
	\centering
	\includegraphics[scale=0.3]{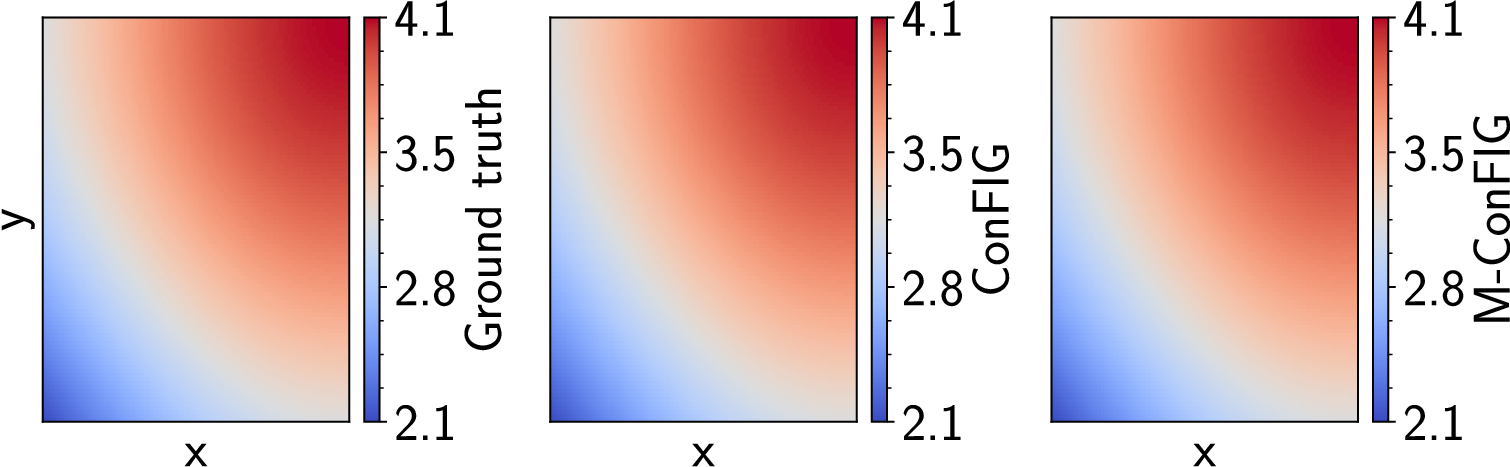}
	\caption{The solution domain of PNd problem on $[x,y,0.5,0.5,0.5]$ plane.}
	\label{fig:kovasznay_cloud}
\end{figure}

\subsubsection{Multi-scale Problems}

We choose the multi-scale heat transfer (HeatMS) problem as a strongly anisotropic test case. We set different heat-transfer coefficients in different spatial directions to give the solution different scales in each direction. Following the configuration of \etcx{hao2023pinnacle}, the governing equation is

\begin{equation}
    \frac{\partial u}{\partial t}-\frac{1}{(500 \pi)^2}\frac{\partial^2 u}{\partial x^2}-\frac{1}{\pi^2}\frac{\partial^2 u}{\partial y^2}=0,
\end{equation}

with the initial condition of $u(x,y,0) = \sin(20 \pi x) \sin(\pi y)$ and boundary condition of $ u(x,y,t) = 0$. The spatial time domain is $x\in[0,1]$, $y\in[0,1]$ and $t \in [0,5]$. Fig.\rx{fig:heatms_gd} illustrates a sample solution of the ground truth when $t=3.0s$.

Tab.\rx{tab:error_heatms} and Fig.\rx{fig:heatms_fields} show the predictions by different methods. 
While all of the methods struggle to fully resolve the anisotropic solutions, as shown in Fig.\rx{fig:heatms_fields}, 
our M-ConFIG method still ranks first. 
This test case illustrates that the scaling issues of anisotropic PDEs can not be solved purely by finding a better balance between different loss terms. Nonetheless, ConFIG fares on-par with other methods, and the momentum terms of M-ConFIG help to partially address the scaling of the HeatMS test case.

\begin{table}[htbp]
	\caption{Test MSE of PINNs trained for HeatMS problem. All values are scaled with $10^{-3}$.}
	\label{tab:error_heatms}
	\begin{tabular}{ccc}
		\toprule
		& $[\mathcal{L}_\mathcal{N},\mathcal{L}_\mathcal{BI}]$ & $[\mathcal{L}_\mathcal{N},\mathcal{L}_\mathcal{B},\mathcal{L}_\mathcal{I}]$ \\ \midrule
Adam & \multicolumn{2}{c}{$7.612\pm0.004$} \\
PCGrad & $7.600\pm0.012$ & $7.603\pm0.012$ \\ 
IMTL-G & $6.693\pm0.241$ & $38.529\pm43.666$ \\ 
MinMax & $7.595\pm0.055$ & $7.243\pm0.458$ \\ 
ReLoBRaLo & $7.585\pm0.039$ & $7.558\pm0.041$ \\ 
LRA & $7.654\pm0.004$ & $7.652\pm0.000$ \\ 
\midrule
ConFIG & $7.529\pm0.074$ & $7.585\pm0.020$ \\ 
M-ConFIG & \hlb{$\mathbf{5.978\pm0.092}$} & \hlb{$\mathbf{7.147\pm0.001}$} \\ 
		\bottomrule                            
	\end{tabular}
\end{table}

\begin{figure}[htbp]
	\centering
	\includegraphics[scale=0.3]{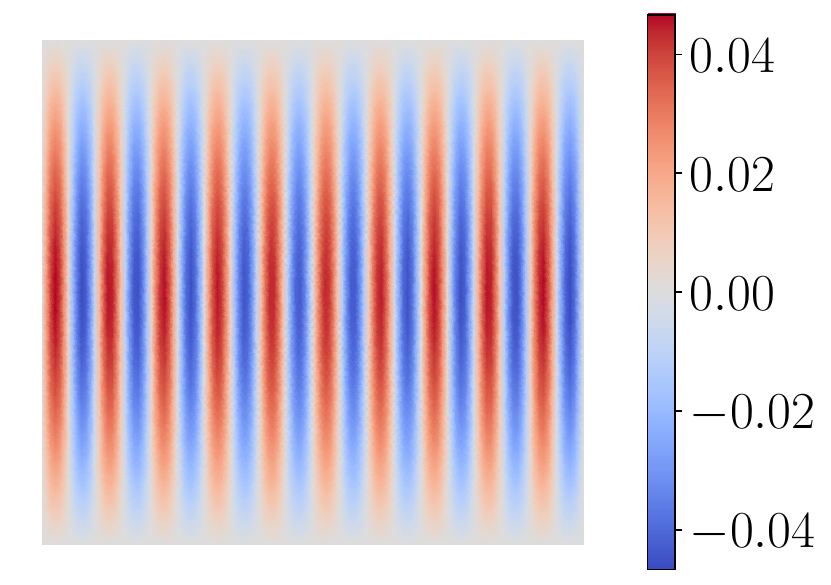}
	\caption{The ground truth solution of HeatMS problem when $t=3.0s$.}
	\label{fig:heatms_gd}
\end{figure}

\begin{figure}[htbp]
	\centering
	\includegraphics[scale=0.3]{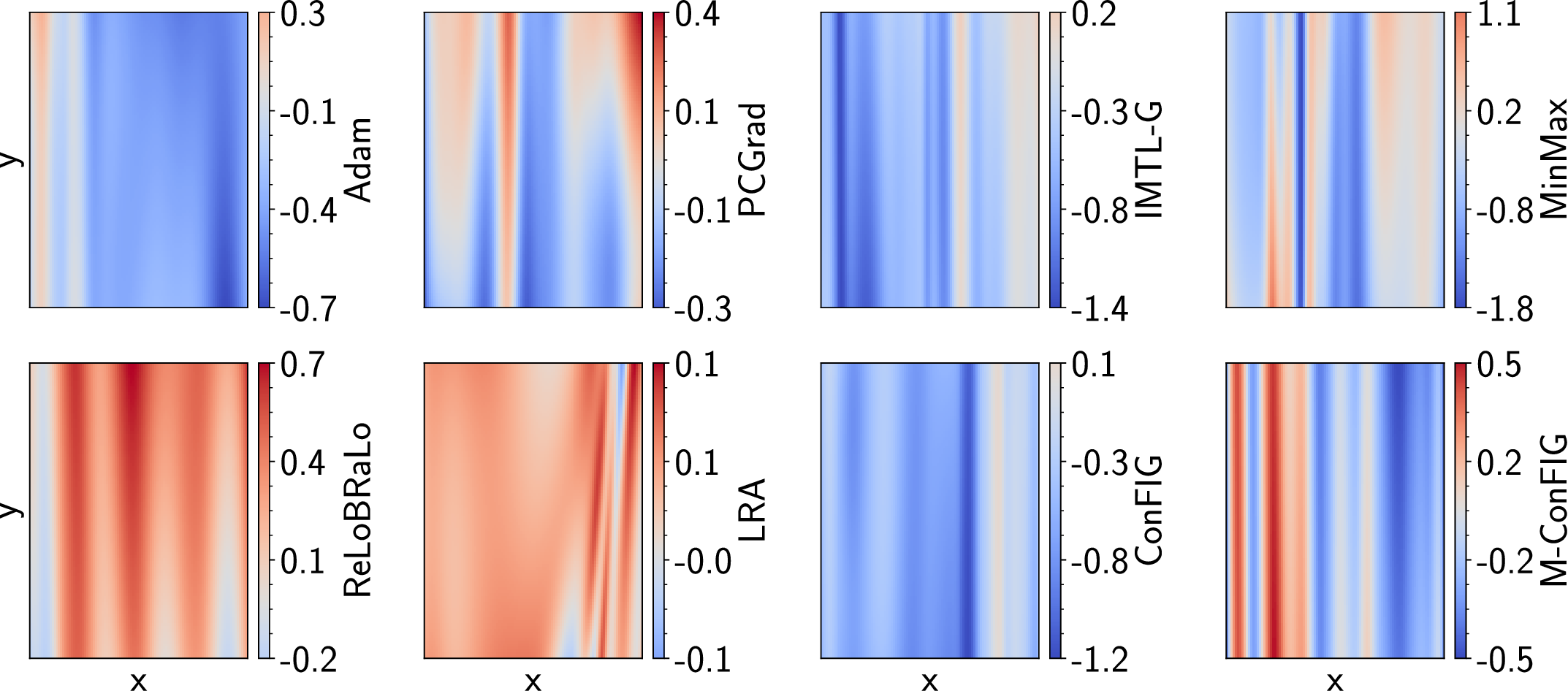}
	\caption{The predictions of PINNs for HeatMS problem when $t=3.0s$ (two-loss scenario, values scaled with $10^{-2}$).}
	\label{fig:heatms_fields}
\end{figure}

\subsubsection{Chaotic Problems}
To test the potential limitations of the proposed methods, we target the KS equation as a representative of chaotic problems.  
Starting from an analytic initial state, the solution of the KS equation will gradually develop into a chaotic state due to the strong nonlinearity, dissipative, and destabilizing effects. Following the configuration of \etcx{hao2023pinnacle}, the governing equation is

\begin{equation}
    \frac{\partial u}{\partial t}+\frac{100}{16}u\frac{\partial u}{\partial x}+\frac{100}{16^2}\frac{\partial^2 u}{\partial x^2}+\frac{100}{16^4}\frac{\partial^4 u}{\partial x^4}=0,
\end{equation}

with an initial condition of $u(x,0) = \cos(x)(1+\sin(x))$ and periodic boundary conditions. The spatial time domain is $x\in[0,2\pi]$ and $t \in [0,1]$. 

\begin{table}[htbp]
	\caption{Test MSE of PINNs for the KS equation.}
	\label{tab:error_ks}
	\begin{tabular}{ccc}
		\toprule
		& $[\mathcal{L}_\mathcal{N},\mathcal{L}_\mathcal{BI}]$ & $[\mathcal{L}_\mathcal{N},\mathcal{L}_\mathcal{B},\mathcal{L}_\mathcal{I}]$ \\ \midrule
Adam & $1.094\pm0.002$ & $1.094\pm0.002$ \\ 
PCGrad & $1.089\pm0.008$ & $1.061\pm0.002$ \\ 
IMTL-G & $1.110\pm0.016$ & $1.089\pm0.031$ \\ 
MinMax & $1.072\pm0.000$ & $1.084\pm0.007$ \\ 
ReLoBRaLo & $1.074\pm0.019$ & $1.080\pm0.016$ \\ 
LRA & $1.099\pm0.039$ & $1.098\pm0.027$ \\ 
\midrule
ConFIG & $1.089\pm0.004$ & $1.062\pm0.007$ \\ 
M-ConFIG & $1.123\pm0.011$ & $1.118\pm0.008$ \\ 
		\bottomrule                            
	\end{tabular}
\end{table}

Tab.\rx{tab:error_ks} shows the predictions by different methods.
The evaluation shows that no methods succeeds to capture the transition of chaos and generate smooth solutions. This agrees with the conclusions in \etcx{hao2023pinnacle}'s, where likewise no method under consideration manages to converge to an accurate solution.

The inherent challenge of solving the KS equations is a combination of several factors. On the one hand, the negative diffusion term $\partial^2 u / \partial x^2$ injects energy into the system, inducing instability and complex structures. On the other hand, The dissipative term $\partial^4 u / \partial x^4$ removes energy at small scales, formulating energy cascade across different scales. This energy cascade causes huge difficulties for numerical methods since the fine-scale structures must be resolved accurately. From the PINN side, this requires the network to have the ability to estimate the derivatives accurately on different scales via auto differentiation. The balance between different loss terms, as all the benchmark methods are trying to achieve, is not helpful in addressing this fundamental issue. 
Hence, this test case serves as a failure case, highlighting that improved optimizers alone do not suffice to address these challenges. It will be an interesting topic of future work to evaluate their effectiveness in combination with other changes, such as improvements on the architecture side.

\FloatBarrier

\subsection{Training Details\label{app:training_details}}
\paragraph{PINNs.} The training of PINNs in the current research follows the established conventions. The neural networks are fully connected with 4 hidden layers and 50 channels per layer. The activation function is the \texttt{tanh} function, and all weights are initialized with Xavier initialization\cx{xavier2010}. Data points are sampled using Latin-hypercube sampling and updated in each iteration. The extended training run case shown in Fig.\rx{fig:momentum_beltrami_nl3_long} uses a constant learning rate of $10^{-4}$. All other cases follow a cosine decay strategy with the initial and final learning rate of $10^{-3}$ and $10^{-4}$, respectively. We also add a learning rate warm-up of 100 epochs for each training. All the methods except M-ConFIG use the Adam optimizer. The hyper-parameters of the Adam optimizer are set as $\beta_1$=0.9, $\beta_2=0.999$, and $\varepsilon=10^{-8}$, respectively. The number of data points and training epochs for each case are listed in Tab.\rx{tab:training_configuration}.

\begin{table}[htbp]
	\caption{The number of data points and training epochs for PINNs' experiments.}
	\label{tab:training_configuration}
	\centering
	\begin{tabular}{ccccc}
		\toprule
		& \multicolumn{3}{c}{Number of data points}       & \multirow{2}{*}{Epochs} \\ \cmidrule(r){2-4}
		& $n_{\mathcal{N}}$ & $n_{\mathcal{B}}$ & $n_{\mathcal{I}}$ &                         \\ \midrule
		Burgers equation      & 10000       & 250          & 250         & $3\times10^4$                  \\
		Schrödinger equation & 20000       & 500          & 500         & $10^5$ \\
		Kovasznay flow       & 20000        & 1000         &            &   $10^5$                      \\
		Beltrami flow        & 25000        & 5000         & 5000        &    $10^5$                     \\  
            PNd & 20000 & 5000 & & $10^5$ \\
            HeatMS & 20000 & 2000 &2000& $10^5$ \\ 
            KS & 20000 & 500 & 500 & $10^5$ \\
            \bottomrule
	\end{tabular}
\end{table}

\paragraph{MTL.} 
Our experiments are based on the official test code of the FAMO method, and our ConFIG method implemented in the corresponding framework. 
For the details of the configurations, please refer to the original FAMO paper\cx{liu2023famo} and its official repository: \url{https://github.com/Cranial-XIX/FAMO} (MIT License).

\paragraph{Compute resources.}
All the experiments in this study were conducted using an NVIDIA RTX A5000 GPU with 24 GB of memory. Each PINN experiment completes training within a few hours on a typical GPU with more than 4 GB of memory. For the CelebA MTL test, a GPU with more than 12 GB of memory is required, and a single training run takes ca. 1-2 days.

\FloatBarrier

\subsection{Ablation study on training 
hyperparameter \label{app:ablation_hyperpara}}

Although we utilize a momentum-based optimizer for the training, the learning rate may still affect the training process, especially considering that our ConFIG and M-ConFIG methods consistently change the gradient vector during the training. Thus, we perform an ablation study on different learning rates. As shown in Tab.\rx{tab:lr_ablation_decay} and Tab.\rx{tab:lr_ablation_constant}, our methods consistently outperform other methods with different learning rate configurations.

\begin{table}[htbp]
\caption{The performance of different methods with different cosine decay learning rates in Burgers two-loss test}
\label{tab:lr_ablation_decay}
\begin{tabular}{ccc}
\toprule
          & $10^{-3}\rightarrow 10^{-4}$ & $10^{-4}\rightarrow 10^{-5}$ \\
\midrule
Adam & $1.484\pm0.061$ & $3.076\pm1.201$ \\
PCGrad & $1.344\pm0.019$ & $2.223\pm0.355$ \\
IMTL-G & $1.339\pm0.024$ & $1.688\pm0.018$ \\
MinMax & $1.889\pm0.143$ & $3.980\pm0.798$ \\
ReLoBRaLo & $1.419\pm0.053$ & $5.057\pm1.821$ \\
LRA & $353.796\pm114.972$ & $819.730\pm56.925$ \\
\midrule
ConFIG & $1.308\pm0.008$ & $1.887\pm0.283$ \\
M-ConFIG & \hlb{$\mathbf{1.277\pm0.035}$} & \hlb{$\mathbf{1.681\pm0.106}$} \\
\bottomrule                           
\end{tabular}
\end{table}

\begin{table}[htbp]
\caption{The performance of different methods with different constant learning rates in Burgers two-loss test}
\label{tab:lr_ablation_constant}
\begin{tabular}{cccc}
\toprule
          & $\gamma= 10^{-3}$ & $\gamma= 10^{-4}$ & $\gamma= 10^{-5}$ \\
\midrule
Adam & $1.398\pm0.021$ & $3.076\pm1.201$ & $420.511\pm95.578$ \\
PCGrad & $1.398\pm0.018$ & $2.223\pm0.355$ & $51.664\pm7.753$ \\
IMTL-G & $1.385\pm0.042$ & $1.688\pm0.018$ & $192.151\pm123.903$ \\
MinMax & $1.889\pm0.120$ & $3.980\pm0.798$ & $7.078\pm1.672$ \\
ReLoBRaLo & $1.477\pm0.061$ & $5.057\pm1.821$ & $10.333\pm1.905$ \\
LRA & $367.944\pm98.322$ & $819.730\pm56.925$ & $895.965\pm1.671$ \\
\midrule
ConFIG & $1.373\pm0.006$ & $1.887\pm0.283$ & $103.239\pm40.287$ \\
M-ConFIG & \hlb{$\mathbf{1.354\pm0.019}$} & \hlb{$\mathbf{1.681\pm0.106}$} & \hlb{$\mathbf{4.097\pm1.310}$} \\
\bottomrule
\end{tabular}
\end{table}

In the current study, the points are sampled dynamically from the internal domain and boundaries during training. Thus, the number of data samples represents the training batch size. 
Meanwhile, each loss-specific gradient is evaluated through the data samples at the internal domain and boundaries. Thus, the number of sample points and the relative ratio between the number of points at the boundary and the internal domain may affect the quality of the gradient, further affecting the final training results. Here, we also perform an ablation study on the data samples, and the results are summarized in Fig.\rx{fig:ablation_data_bar}, Tab.\rx{tab:data_ablation_num} and Tab.\rx{tab:data_ablation_ratio}. Our methods consistently outperform other methods with different configurations of data samples, showing the robustness of our methods.

\begin{figure}[htbp]
	\centering
	\includegraphics[scale=0.3]{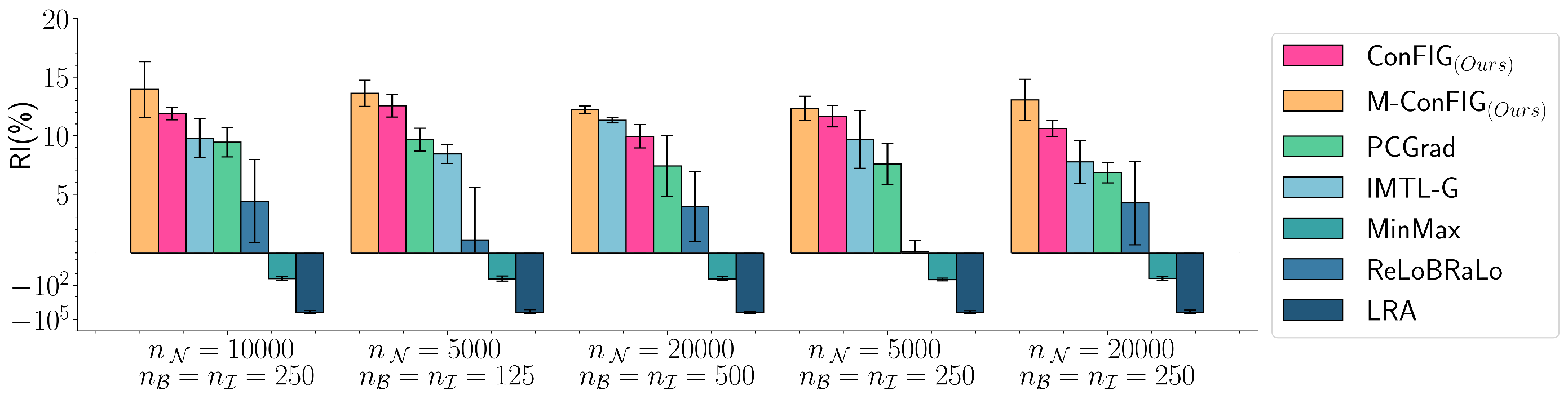}
	\caption{The relative improvements of different methods with different numbers and different ratios of data samples during training}
	\label{fig:ablation_data_bar}
\end{figure}

\begin{table}[htbp]
\caption{The performance of different methods with different numbers of data samples during training}
\label{tab:data_ablation_num}
\begin{tabular}{cccc}
\toprule
& $\begin{matrix}n_\mathcal{N}=5000 \\ n_\mathcal{B}=n_\mathcal{I}=125\end{matrix}$ & $\begin{matrix}n_\mathcal{N}=10000 \\ n_\mathcal{B}=n_\mathcal{I}=250\end{matrix}$ & $\begin{matrix}n_\mathcal{N}=20000 \\ n_\mathcal{B}=n_\mathcal{I}=500\end{matrix}$ \\
\midrule
Adam & $1.494\pm0.060$ & $1.484\pm0.061$ & $1.438\pm0.031$\\
PCGrad & $1.350\pm0.015$ & $1.344\pm0.019$ & $1.331\pm0.037$\\
IMTL-G & $1.368\pm0.012$ & $1.339\pm0.024$ & $1.275\pm0.003$\\
MinMax & $1.943\pm0.212$ & $1.889\pm0.143$ & $1.846\pm0.135$\\
ReLoBRaLo & $1.478\pm0.067$ & $1.419\pm0.053$ & $1.381\pm0.043$\\
LRA & $340.161\pm135.263$ & $353.796\pm114.972$ & $374.574\pm84.340$\\
\midrule
ConFIG & $1.307\pm0.014$ & $1.308\pm0.008$ & $1.295\pm0.014$\\
M-ConFIG & \hlb{$\mathbf{1.291\pm0.017}$} & \hlb{$\mathbf{1.277\pm0.035}$} & \hlb{$\mathbf{1.262\pm0.004}$}\\

\bottomrule
\end{tabular}
\end{table}

\begin{table}[htbp]
\caption{The performance of different methods with different ratios of data samples during training}
\label{tab:data_ablation_ratio}
\begin{tabular}{cccc}
\toprule
& $\begin{matrix}n_\mathcal{N}=5000 \\ n_\mathcal{B}=n_\mathcal{I}=250\end{matrix}$ & $\begin{matrix}n_\mathcal{N}=10000 \\ n_\mathcal{B}=n_\mathcal{I}=250\end{matrix}$ & $\begin{matrix}n_\mathcal{N}=20000 \\ n_\mathcal{B}=n_\mathcal{I}=250\end{matrix}$ \\
\midrule
Adam & $1.470\pm0.012$ & $1.484\pm0.061$ & $1.455\pm0.044$\\
PCGrad & $1.358\pm0.026$ & $1.344\pm0.019$ & $1.355\pm0.013$\\
IMTL-G & $1.328\pm0.036$ & $1.339\pm0.024$ & $1.342\pm0.027$\\
MinMax & $1.963\pm0.133$ & $1.889\pm0.143$ & $1.843\pm0.152$\\
ReLoBRaLo & $1.469\pm0.014$ & $1.419\pm0.053$ & $1.393\pm0.052$\\
LRA & $362.388\pm110.455$ & $353.796\pm114.972$ & $350.266\pm127.639$\\
\midrule
ConFIG & $1.298\pm0.014$ & $1.308\pm0.008$ & $1.300\pm0.010$\\
M-ConFIG & \hlb{$\mathbf{1.289\pm0.015}$} & \hlb{$\mathbf{1.277\pm0.035}$} & \hlb{$\mathbf{1.265\pm0.026}$}\\

\bottomrule
\end{tabular}
\end{table}

\end{document}